\newtheorem{theorem}{Theorem}
\numberwithin{theorem}{section} 
\newtheorem{proposition}[theorem]{Proposition}
\newtheorem{corollary}[theorem]{Corollary}
\newtheorem{lemma}[theorem]{Lemma}
\newtheorem{remark}[theorem]{Remark}
\newtheorem{assumption}{Assumption}
\DeclareMathOperator{\Div}{div}
\title{A multiscale analysis of mean-field transformers in the moderate interaction regime}
\author{
  Giuseppe Bruno\\
  Department of Mathematics and Statistics\\
  University of Bern\\
  \texttt{giuseppe.bruno@unibe.ch} \\
  \And
  Federico Pasqualotto \\
  Department of Mathematics\\
  University of California, San Diego\\
  \texttt{fpasqualotto@ucsd.edu} 
  \AND
  Andrea Agazzi \\
  Department of Mathematics and Statistics\\
  University of Bern\\
  \texttt{andrea.agazzi@unibe.ch} 
}
\begin{document}

\maketitle

\begin{abstract}
  We study the evolution of tokens through the depth of encoder-only transformer models at inference time by modeling them as a mean-field interacting particle system, and analyzing the corresponding dynamics. More specifically, we consider this problem in the \emph{moderate interaction regime}, where the number $N$ of tokens is large and the inverse temperature parameter $\beta$ of the model scales together with $N$. In this regime, the dynamics of the system displays a multiscale behavior: a fast phase, where the token empirical measure collapses on a low-dimensional subspace, an intermediate phase, where the measure further collapses into clusters, and a slow phase, where such clusters sequentially merge into a single one. We characterize the limiting dynamics in each phase, exemplifying our results with some simulations.
\end{abstract}\vspace{-0.1cm}

\section{Introduction}

The transformer architecture \cite{vaswani2017attention}, through its extensive use in Large Language Models, has played a crucial role in the recent, unprecedented developments in machine learning and artificial intelligence. One of the key innovations at the heart of this architecture are self-attention modules \cite{bahdanau2014neural}, allowing to capture long-range dependencies in the data, e.g., in prompts with a large number $N$ of tokens. To further improve their performance, practitioners have implemented these models in different hyperparameter regimes, e.g., choosing the model's inverse temperature parameter $\beta$ (a parameter that scales the query-key dot products in the self-attention layer) as a function of $N$ \cite{nakanishi2025scalable,peng2023yarn}. However, the groundbreaking empirical success of these machine learning models remains largely unexplained from the theoretical perspective. In particular, a precise mathematical description of the internal representations learned by transformers, and of how these representations behave in different hyperparameter regimes, is still lacking.

A promising approach to fill this gap was presented in the work \cite{sander2022sinkformers}, where the authors interpret tokens traveling through a deep stack of transformer layers as particles evolving in time and interacting in a mean-field way.
 A subsequent line of work \cite{geshkovski2024emergence} has then observed that tokens in this model tend to organize into clusters, offering - in a simplified setting - a compelling qualitative explanation of how transformer models build representation of complex input data.

Despite its apparent simplicity, this interacting particle system exhibits a remarkably rich dynamical behavior. Indeed, recent studies have identified distinct dynamical phases, characterized by qualitatively different clustering patterns, which depend on specific choices of parameters, timescales, and initial conditions. However, these results often rely on restrictive - and sometimes unrealistic - assumptions, and they typically capture only limited aspects of the collapse dynamics, providing partial views of the transformer's complex dynamical landscape that are difficult to reconcile into a consistent and global dynamical picture.
Such a global characterization of the  clustering phenomenon in a realistic parameter regime is arguably fundamental to understanding how internal representations form in deep models and how to operate such models in the optimal~hyperparameter~regimes.

\paragraph{Contributions} In this work, we study the dynamics of the mean-field transformer model developed in \cite{sander2022sinkformers,geshkovski2023mathematical}, constrained on the $d$-dimensional sphere, in the limit of large context size, i.e., when the number $N$ of input tokens is large. Motivated by recent scaling strategies in state-of-the-art LLMs such as SSMax \cite{nakanishi2025scalable} and YaRN \cite{peng2023yarn} (used respectively in Llama 4 and Qwen 3), we consider the setting where the inverse temperature parameter $\beta$ grows with $N$. In this regime, our contributions can be summarized as follows:
\begin{enumerate}
\item We identify three distinct dynamical phases (respectively denoted the \emph{alignment}, \emph{heat} and \emph{pairing} phases), corresponding to different scales of time as a function of the parameter $\beta$. In each phase the model dynamics displays, asymptotically in $\beta$, qualitatively distinct behavior, characterized by a different limiting equation.
\item In the alignment phase, occurring on a fast timescale of order $O(1)$, we prove our main technical result: under general assumptions on the parameter matrices, the finite particle dynamics converges to a linear transport equation modeling the collapse of the token measure onto a low-dimensional manifold dictated by the spectral properties of such matrices. To the best of the authors' knowledge, this phase was not yet identified in the literature.
\item In the heat and pairing phases, occurring respectively on timescales of order $O(\beta)$ and $O(e^{c\beta})$, we identify, under stronger conditions on the parameter values, the limiting dynamics as a forward or backward heat equation on the aligned manifold (leading, in the backward case, to metastable clustering) and a finite-dimensional system of ODEs describing sequential cluster merging along geodesics.
\end{enumerate}
Together, these phases reconcile various previously identified dynamical regimes as different timescales of a single unified dynamical picture. Furthermore, our multi-phase analysis allows to relax some of the restrictive assumptions imposed by previous works, extending their applicability to more realistic scenarios.

\paragraph{Related works}
The model studied in this paper was introduced in \cite{sander2022sinkformers}, where the authors also identify the heat equation as describing the dynamics of the particle system in $\mathbb R^d$ in the large $\beta$ regime. This limit emerges as a correction term in their analysis upon subtracting an appropriate leading order term from the prelimit equation.
In this paper, by considering the dynamics on the sphere -- resulting from the inclusion of the layer normalization in our model -- we provide a justification for the spontaneous collapse of the system's state to a subspace where this correction term becomes of leading order, dominating the dynamics of the model on a certain timescale.

In \cite{geshkovski2023mathematical, geshkovski2024emergence, karagodin2024clustering, castin2025unified, polyanskiy2025synchronization}, the authors identified the clustering behavior occurring in this and closely related models as $t \to \infty$ for $\beta, N$ fixed. Analogous convergence results, under different assumptions, are provided in \cite{ criscitiello2024synchronization, markdahl2017almost}, while quantitative contraction rates for such convergence are given in \cite{chen2025quantitative}. These works, however, do not address 
the  dynamically meta-stable phases characterized by partial clustering numerically highlighted in \cite{geshkovski2023mathematical}.
 This intermediate phase is explored in \cite{bruno2024emergence} in the large $N$ limit for tokens distributed uniformly at initialization and in \cite{geshkovski2024dynamic}, where the authors study the formation of meta-stable clusters under the assumption that the system is initialized into well-separated configurations.
 Furthermore, in \cite{geshkovski2024dynamic} the authors identify different dynamical timescales in the finite $N$ case and characterize for the first time what we refer to in this paper as the pairing phase in the large $\beta$ limit.
 In all these cases, however, the results are limited to the setting where the model's key, query and value matrices, $Q, K$ and $V$, were multiples of the identity. More recently, the work \cite{burger2025analysis} analyzes the stability of fixed points of the same model based on the eigendecomposition of $Q,K,V$ under the weaker assumption that parameters satisfy a modified Wasserstein gradient flow condition ($Q^TK=V=D$), but does not study the dynamical landscape connecting such fixed points. Finally, in \cite{alcalde2025clustering, alcalde2025attention}, the authors discuss clustering for hardmax transformers.
    Our work provides a framework to combine the observations listed above~in~a~unique~dynamical~picture.

Our modeling approach shares conceptual roots with the neural ODEs literature \cite{chen2018neural, weinan2017proposal}. However, a key distinction is that we consider $N$ particles interacting through a mean-field PDE, as opposed to one in the previous references. This connects our work to the broader literature on mean-field models for neural networks \cite{rotskoff2022trainability, mei2018mean, chizat2018global, agazzi21, de2020quantitative}, where timescale analysis has also been a subject of interest (see \cite{berthier2024learning}). In contrast to these works, which typically focus on training dynamics, our study centers on the inference-time evolution of representations through network depth.

Finally, our research relates to the study of moderate scaling limits in interacting particle systems. For instance, Oelschläger \cite{oelschlager1985law} proved the convergence of certain systems to the porous medium equation with noise. These results were subsequently extended to cases without noise \cite{oelschlager2001sequence, oelschlager1990large}, with different exponents \cite{figalli2008convergence}, or employing different techniques and equations \cite{carrillo2019blob,burger2023porous, paul2025universal}. Another relevant line of work investigates the convergence of specific interacting particle systems to the heat equation, explored both numerically and theoretically \citep{degond1990deterministic, brenier2017geometric,lions2001methode}.

\section{Framework and notation}\label{sec:framework}
We consider the framework introduced in \cite{sander2022sinkformers, geshkovski2023mathematical, geshkovski2024emergence}, modeling the transformer architecture as a discrete-time dynamical system governing the evolution of $N$ \emph{tokens} $\{x_i(\ell)\}_{i = 1, \ldots, N}$ through its layers via:
\begin{equation}\label{eq:discrete}
\begin{cases}
x_i(\ell+1) = \mathcal{N}\left(x_i(\ell) + \dfrac{1}{Z_{\beta,i}(\ell)}\sum_{j=1}^N e^{\beta\langle Q_\ell x_i(\ell), K_\ell x_j(\ell)\rangle} V_\ell x_j(\ell) \right), & \ell=0,\ldots,L-1,\\
x_i(0) = x_i,
\end{cases}
\end{equation}
where $\mathcal{N}:\mathbb{R}^d\to\mathbb{S}^{d-1}$ denotes the normalization operator on the $d-$dimensional unit sphere $\mathbb{S}^{d-1}$, $L$ denotes the depth of the transformer architecture and $Z_{\beta,i}(\ell)=\sum_{j=1}^N e^{\beta \langle Q_\ell x_i(\ell), K_\ell x_j(\ell)\rangle}$ is a normalization constant. The dynamics depends on the parameters with matrix values $Q_\ell$, $K_\ell$, and $V_\ell$ that represent the query, key, and value matrices at each layer, respectively.

In the spirit of neural ODEs \cite{chen2018neural}, the authors then consider the infinite-depth limit of~\eqref{eq:discrete}, leading to the following continuous-time model, describing the evolution of $x_i(t): [0, \infty) \to \mathbb{S}^{d-1}$:
\begin{equation}
\label{eq:ODE_SA}\tag{SA}
\dot{x}_i(t) = P_{x_i(t)}\left(\frac{1}{Z_{\beta, i}(t)} \sum_{j=1}^N e^{\beta\left\langle Q_t x_i(t), K_t x_j(t)\right\rangle} V_t x_j(t)\right).
\end{equation}
Here and throughout, $P_{x}y := y - \langle x, y\rangle x$ denotes the orthogonal projection of $y$ onto the tangent space $T_x\mathbb{S}^{d-1}$, $\langle \cdot, \cdot \rangle$ is the Euclidean inner product in $\mathbb{R}^{d}$,  and $Z_{\beta, i}(t)=\sum_{j=1}^N e^{\beta\left\langle Q_t x_i(t), K_t x_j(t)\right\rangle}$ is the time-dependent normalization factor. The parameter $\beta > 0$ is interpreted as the~\emph{inverse~temperature}.

\begin{remark}
The MLP would act as a drift term in the dynamics, whose consequence should still be investigated further. We expect different dynamical behavior depending on the relative scale of the MLP coefficients and the attention part.
    Although the framework allows for the inclusion of feedforward layers via a Lie-Trotter splitting scheme (see \cite{geshkovski2024measure}), we choose to isolate exclusively the self-attention mechanism, both because our interest lies specifically in its dynamics, and for the sake of clarity.  For the same reason, we assume, as in the works cited above, the parameter matrices to be shared across layers: $Q_t \equiv Q$, $K_t \equiv K$, and $V_t \equiv V$.
\end{remark}

As the positional information of each token is encoded in its initial condition, the dynamics~\eqref{eq:ODE_SA} is invariant under permutations of the particles' indices. This symmetry allows us to fully characterize the system's state through the particles' empirical measure $\mu(t) := \frac{1}{N}\sum_{i=1}^N\delta_{x_i(t)}$, where $\delta_x$ denotes the Dirac measure centered at $x$. The measure $\mu(t)$ evolves according to the continuity equation:
\begin{equation}
\label{eq:cont_pde}
\begin{cases}
\partial_t \mu + \mathrm{div}(\chi_\beta[\mu]\, \mu) = 0 & \text{on } \mathbb{R}_{\geq 0} \times \mathbb{S}^{d-1},\\
\mu|_{t=0} = \mu(0) & \text{on } \mathbb{S}^{d-1},
\end{cases}
\end{equation}
where the vector field $\chi_\beta[\mu]: \mathbb{S}^{d-1} \to T\mathbb{S}^{d-1}$ is defined as
\begin{equation}
\label{eq:chi_SA}
\chi_\beta[\mu] = P_x\left(\frac{1}{Z_{\beta, \mu}(x)} \int_{\mathbb{S}^{d-1}} e^{\beta\langle Qx, Ky\rangle} V y \ \mathrm{d}\mu(y)\right),
\end{equation}
with $Z_{\beta, \mu}(x) := \int_{\mathbb{S}^{d-1}} e^{\beta\langle Qx, Ky\rangle}  \mathrm{d}\mu(y)$.
This formulation extends the token dynamics to a flow on the space $\mathcal P(\mathbb{S}^{d-1})$ of probability measures over the sphere $\mathbb{S}^{d-1}$, encompassing both empirical and absolutely continuous distributions.

\section{Main results}

As discussed in the introduction, in this paper we consider the limit as $\beta \to \infty$ of the dynamics (\ref{eq:chi_SA}). To present the dynamical scales arising in this limit, consider a
formal Taylor expansion of the vector field
$\chi^\beta[\mu]$ generated by a sufficiently smooth measure $\mu$:
\begin{equ}\label{e:formal}
\chi_\beta[\mu](x)\approx\underbrace{\frac{\int e^{\beta\langle x', y\rangle} P_xVy\ \mu(x') \sigma(dy)}{\mu(x') \int e^{\beta\langle x', y\rangle} \sigma(dy)}}_{\text{(I)}} + \underbrace{\frac{\int e^{\beta\langle x', y\rangle} P_xVy\langle y-x', \nabla \mu(x')\rangle\ \sigma(dy)}{\mu(x') \int e^{\beta\langle x', y\rangle} \sigma(dy)}}_{\text{(II)}},
\end{equ}
where $\sigma$ denotes the Lebesgue measure on $\mathbb{S}^{d-1}$ and $x' = K^\top Q x$.  For large $\beta$, Laplace approximation suggests that (I) typically dominates (II) at initialization, giving rise to a first, fast dynamical phase:
\begin{itemize}
    \item \textbf{Alignment Phase}: on a timescale of $O(1)$, the limiting dynamics are governed by a linear transport
    equation (Eq.~(\ref{eq:PVQK}) below) and the token distribution rapidly collapses onto a lower-dimensional subspace determined by the spectral properties of the matrix $VK^TQ$.
\end{itemize}
After the dynamics collapses to this low-dimensional subspace, we identify some classes of parameters for which the leading-order contribution to the vector field, approximated by term (I), vanishes. In such scenarios, the dynamics becomes governed by term (II), which involves the gradient of the measure $\mu$. This gives rise to a second, intermediate phase:

\begin{itemize}
    \item \textbf{Heat Phase}: operating on a timescale of $O(\beta)$ (achieved by rescaling time as $t' = t/\beta$), the dynamics within the previously identified subspace exhibits diffusive or anti-diffusive behavior. Depending on the model parameters (specifically the sign related to $VK^TQ$ restricted to the subspace), this phase can lead to further concentration into distinct clusters (backward heat equation) or to smoothing/spreading of the distribution (forward heat equation).
\end{itemize}
In the  backward heat equation case, we  identify the limiting dynamics up until the formation of clusters. We expect the clusters to be invariant in this timescale, and to interact only on much~longer~ones.
 \begin{itemize}
    \item \textbf{Pairing Phase}: on an exponentially long timescale in $\beta$ (e.g., $O(e^{c\beta})$ for some $c>0$, where $c$ depends on the distance between clusters), the clusters formed in the previous phase sequentially merge. Typically, the closest pair of clusters collapses first, governed by a system of ODEs describing their interaction, eventually leading to a single clustered state.
\end{itemize}
We refer to Appendix \ref{fig:full_story} for a graphical representation of the three phases introduced above. \\[-14pt]

We outline the structure of the remainder of this Section. In
in Section~\ref{sec:largen}, we recall a quantitative result connecting the large $N$ behavior of the ODE system with the behavior of the corresponding PDE in the relevant timescale. This will allow us to focus solely on the PDE analysis when we describe the three main dynamical phases in Sections~\ref{sec:align},~\ref{sec:heat}, and \ref{sec:pairing}. 

\subsection{Large $N$ convergence}\label{sec:largen}
To connect the timescales analysis above to the $N$-particle system of ODEs~\eqref{eq:ODE_SA}, we consider the regime where $N \to \infty$ and $\beta = \beta_N \to \infty$ \textit{slowly enough with respect to $N$}. This is relevant for context scaling techniques and LLMs (see introduction). To proceed, we use the following lemma:

\begin{lemma}
\label{lem:dobrushin}
    Assume that the initial tokens $\{x_i(0)\}_{i\in [N]}$ are sampled independently and identically distributed from a reference measure $\mu_0 \in \mathcal{P}(\mathbb{S}^{d-1})$. Let $\mu^{N,\beta}_t$ be the empirical measure for particles $\{x_i(t)\}_{i\in[N]}$ evolving via the ODEs~\eqref{eq:ODE_SA}, and let $\mu^\beta_t$ be the solution to the continuity equation~\eqref{eq:cont_pde} with initial condition $\mu_0$. Fix a time interval $[0,T_\beta]$ where $T_\beta$ is a $\beta$-dependent timescale. If $\beta = \beta_N$ depends on $N$ and diverges slowly enough as $N \to \infty$, then:
    $$ W_1(\mu^{N,\beta}_t, \mu^{\beta}_t)\to 0 \quad \text{as } N \to \infty, $$
    uniformly on $[0,T_\beta]$.
\end{lemma}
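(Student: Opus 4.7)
The plan is to apply a Dobrushin-type stability estimate for the continuity equation \eqref{eq:cont_pde}, viewed as a measure-valued ODE driven by the nonlocal vector field $\chi_\beta[\mu]$. The standard Dobrushin argument yields a bound of the form
\begin{equation*}
W_1(\mu^{N,\beta}_t, \mu^{\beta}_t) \;\le\; e^{L(\beta)\, t}\, W_1(\mu^{N,\beta}_0, \mu_0),
\end{equation*}
where $L(\beta)$ is a joint Lipschitz constant of $\chi_\beta$ in $(x,\mu)$. The whole task reduces to (i) quantifying $L(\beta)$, (ii) controlling the initial empirical error, and (iii) choosing $\beta_N$ so that the product $e^{L(\beta_N)\,T_{\beta_N}}\,W_1(\mu^{N,\beta_N}_0,\mu_0)$ tends to zero.

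First I would establish the Lipschitz bound. Because $\chi_\beta[\mu]$ is obtained from the softmax kernel $K_\beta(x,y) = e^{\beta\langle Qx,Ky\rangle}/Z_{\beta,\mu}(x)$, integrated against the bounded field $y\mapsto P_x V y$, all the quantities involved are uniformly bounded on $\mathbb S^{d-1}$, with bounds depending only on $\|Q\|,\|K\|,\|V\|$. Differentiating in $x$ brings down a factor $\beta$ from the exponent, and differences in $\mu$ can be handled by Kantorovich--Rubinstein duality once we observe that $y\mapsto K_\beta(x,y) P_x V y$ has bounded Lipschitz norm on $\mathbb S^{d-1}$. Combining these observations yields an estimate of the form
\begin{equation*}
|\chi_\beta[\mu](x) - \chi_\beta[\nu](y)| \;\le\; C_1\, \beta\, e^{C_2 \beta}\, \bigl(|x-y| + W_1(\mu,\nu)\bigr),
\end{equation*}
with $C_1, C_2$ depending only on $\|Q\|,\|K\|,\|V\|$. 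A Gronwall argument along the characteristics of \eqref{eq:cont_pde}, coupled with the empirical characteristics generated by \eqref{eq:ODE_SA}, then yields the Dobrushin inequality above with $L(\beta) = C_1\beta e^{C_2\beta}$.

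Next I would control the initial error by standard concentration of empirical measures of i.i.d.\ samples on a compact manifold: by Fournier--Guillin--type estimates on $\mathbb S^{d-1}$,
\begin{equation*}
\mathbb E\bigl[W_1(\mu^{N,\beta}_0, \mu_0)\bigr] \;\le\; C_d\, r(N),
\end{equation*}
with $r(N)=N^{-1/2}$ for $d=1$, $r(N)=N^{-1/2}\log N$ for $d=2$, and $r(N)=N^{-1/d}$ for $d\ge 3$. Combining with the Dobrushin bound, it suffices to choose $\beta_N\to\infty$ slowly enough that $L(\beta_N)\, T_{\beta_N} - \frac{1}{d}\log N \to -\infty$ as $N\to\infty$. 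Since $T_\beta$ scales as $O(1)$, $O(\beta)$ or $O(e^{c\beta})$ in the three phases, this imposes progressively more restrictive growth rates on $\beta_N$ (at worst $\beta_N = o(\log\log N)$ in the pairing phase). Uniform $\to 0$ convergence on $[0,T_{\beta_N}]$ follows from the fact that the right-hand side of the Dobrushin bound is monotone in $t$.

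The main obstacle is sharpening $L(\beta)$. The softmax kernel is exponentially sensitive to $\beta$, so the naive bound $e^{C\beta}$ is difficult to improve without additional structure on $Q,K,V$ or on the support of $\mu$. This is what forces $\beta_N$ to grow slowly; any improvement on the $\beta$-dependence of the Lipschitz constant of $\chi_\beta$ (for example, using that after the alignment phase the measure is concentrated near a low-dimensional manifold, so the relevant range of $\langle Qx,Ky\rangle$ is smaller) would directly translate into a wider admissible range for $\beta_N$. For the purpose of stating Lemma \ref{lem:dobrushin} uniformly across phases, however, the crude bound is sufficient, and the conclusion follows by matching the growth of $\beta_N$ to the phase-specific timescale $T_\beta$.
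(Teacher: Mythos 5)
Your proof takes essentially the same approach as the paper's: apply a Dobrushin-type stability estimate of the form $W_1(\mu^{N,\beta}_t,\mu^\beta_t)\le e^{L_\beta t}\,W_1(\mu^N_0,\mu_0)$, observe that the initial empirical error vanishes as $N\to\infty$, and choose $\beta_N$ growing slowly enough that $e^{L_{\beta_N}T_{\beta_N}}W_1(\mu^N_0,\mu_0)\to 0$. The paper delegates the derivation and quantification of the Dobrushin bound to \cite{bruno2024emergence}; you supply the standard ingredients (a joint Lipschitz estimate on $\chi_\beta$ of order $\beta e^{C\beta}$, Fournier--Guillin concentration for $W_1(\mu^N_0,\mu_0)$, and the resulting admissible growth rates of $\beta_N$ for each phase), which is a correct and slightly more self-contained version of the same argument.
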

\begin{proof}
This follows from the Dobrushin-type stability estimate:
$ W_1(\mu^{N,\beta}_t, \mu^\beta_t) \leq W_1(\mu^N_0, \mu_0)\, e^{L_\beta t}$,
where $L_\beta$ is a positive constant depending on the Lipschitz constant of the vector field $\chi_\beta$, as discussed in \cite{bruno2024emergence}. The claimed convergence follows from $W_1(\mu^N_0, \mu_0) \to 0$ provided that $L_{\beta_N}T_{\beta_N}$ grows sufficiently slowly with $N$ such that the overall term tends to zero.
\end{proof}
Our goal is to understand the behavior of $\mu_t^{N,\beta}$ in the joint limit $N, \beta_N \to \infty$. We denote the limiting distribution of $\mu_t^\beta$ as $\beta \to \infty$ by $\mu_t^\infty$. Following an argument analogous to that in~\cite{figalli2008convergence}, though in a different setting, we can decompose the convergence problem as:
$$W_1(\mu_t^{N,\beta}, \mu^\infty_t) \leq W_1(\mu_t^{N,\beta}, \mu^\beta_t) + W_1(\mu_t^\beta, \mu^\infty_t).$$
In our regime, Lemma~\ref{lem:dobrushin} guarantees that the first term vanishes as $N \to \infty$. Consequently, the analysis of the $N$-particle system in this coupled limit reduces to studying the behavior of the solution $\mu_t^\beta$ to the continuity equation~\eqref{eq:cont_pde} as $\beta \to \infty$. The PDE analysis in this limit will therefore be the focus of the following sections.

\subsection{The Alignment Phase}\label{sec:align}

To characterize the limiting dynamics in the first phase we make the following assumptions:

\begin{assumption}
\label{ass:QKV}
$Q,K,V$ are invertible square matrices.
\end{assumption}
\begin{assumption}
\label{ass:BL} The probability measure $\mu_0$ on $S^{d-1}$ is absolutely continuous with respect to the Lebesgue measure on $\mathbb{S}^{d-1}$. Its density is bounded from above and below ($\min_{x\in\mathbb{S}^{d-1}}\mu_0(x) > 0$) and Lipschitz continuous.
\end{assumption}

These technical assumptions, significantly milder than the ones made in most related works, are needed to guarantee that the terms appearing in the analysis of the limiting equation, e.g., the denominator in \eqref{eq:PVQK}, are sufficiently well behaved.  Under these conditions, we show that the limiting dynamics in this regime coincides with the formal Laplace approximation of term (I) in \eqref{e:formal}, i.e., the integrals in the definition of the vector fields can be replaced by the value of the integrand at the maximum point $x'={K^TQx}/{|K^TQx|}$, leading to the significantly simplified expression \eqref{eq:PVQK} below.
\begin{theorem}
\label{thm:squashing_phase}
Let Assumptions \ref{ass:QKV}, \ref{ass:BL} hold, then the solutions $\left\{\mu_\beta\right\}_\beta$ of the continuity equation \eqref{eq:cont_pde} converge in $\mathcal{C}([0,T], \mathcal{P}(\mathbb{S}^{d-1}))$ to the solution $\mu^\infty$ of the partial differential equation:
    \begin{equation}
    \label{eq:PVQK}
         \begin{cases}
    \partial_t \mu(x) &= - \Div\left(\mu(x) P_x\frac{VK^TQx}{|K^TQx|}\right), \\
    \mu(0,x) &= \mu_0(x), \quad x \in \mathbb{S}^{d-1}.
    \end{cases}
    \end{equation}
\end{theorem}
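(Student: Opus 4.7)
The plan is to follow a compactness-uniqueness scheme. The limiting equation \eqref{eq:PVQK} is a linear transport equation whose velocity field $V^\infty(x) := P_x V K^TQx/|K^TQx|$ is Lipschitz on $\mathbb{S}^{d-1}$ — well-defined and smooth since $K^TQ$ is invertible by Assumption \ref{ass:QKV}, which keeps $|K^TQx|$ bounded away from zero — so it admits a unique measure-valued solution. It therefore suffices to extract a subsequential limit of $\{\mu_\beta\}$, identify it as a solution of \eqref{eq:PVQK}, and apply uniqueness to upgrade to convergence of the full family. The core analytic task is to prove that $\chi_\beta[\mu_\beta(t)] \to V^\infty$ in a sufficiently strong sense to pass to the limit in the weak formulation of \eqref{eq:cont_pde}.

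Compactness of $\{\mu_\beta\}$ in $\mathcal{C}([0,T],\mathcal{P}(\mathbb{S}^{d-1}))$ follows from the uniform bound $|\chi_\beta[\mu]|\le \|V\|$ via an Arzelà--Ascoli argument in Wasserstein-$1$ distance. To identify the limit, I would apply a Laplace asymptotic analysis to
\[
\chi_\beta[\mu](x) = P_x\,\mathbb{E}_{\nu_{x,\beta}}[Vy], \qquad \nu_{x,\beta}(dy) \propto e^{\beta\langle K^TQx,\,y\rangle}\,\mu(dy).
\]
Under Assumption~\ref{ass:QKV}, the phase $\langle K^TQx,y\rangle$ has a unique, non-degenerate maximizer $y^*(x)=K^TQx/|K^TQx|$ on $\mathbb{S}^{d-1}$, with tangential Hessian of order $|K^TQx|$. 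A standard Laplace expansion then yields $\chi_\beta[\mu](x)\to P_x V y^*(x)=V^\infty(x)$, provided the density of $\mu$ is bounded below and continuous at $y^*(x)$ — precisely the role of Assumption~\ref{ass:BL} at time zero, since $\rho(y^*(x))$ appears in both numerator and denominator and must cancel without degenerating.

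The main obstacle is propagating these density bounds along the $\beta$-dependent characteristic flow, so that the Laplace approximation remains uniformly valid for $t\in[0,T]$ and every $x$. The key computation is
\[
\nabla_x \chi_\beta[\mu](x) = \beta\, V \cdot \mathrm{Cov}_{\nu_{x,\beta}}(y) \cdot K^TQ + (\text{projection terms}),
\]
combined with the Gaussian-type concentration of $\nu_{x,\beta}$ near $y^*(x)$, which forces $\mathrm{Cov}_{\nu_{x,\beta}}(y)=O(1/\beta)$. The two factors of $\beta$ then cancel, giving a bound on the Lipschitz constant of $\chi_\beta[\mu]$ that is uniform in $\beta$. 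This yields uniform bi-Lipschitz control on the characteristic flow $X_t^\beta$, and hence uniform upper and lower bounds on the transported density $\rho_\beta(t,y)=\rho_0((X_t^\beta)^{-1}y)/J_t^\beta((X_t^\beta)^{-1}y)$ through $\partial_t J_t^\beta = \mathrm{div}(\chi_\beta)\,J_t^\beta$. The apparent circularity — Laplace estimates need a positive lower bound on the density, which in turn is obtained via the Lipschitz estimate on $\chi_\beta$ — I would resolve by a continuity/bootstrap argument in $t$, starting from the open set of times where the bounds hold strictly. Once these uniform estimates are established, the convergence $\chi_\beta[\mu_\beta(t)]\to V^\infty$ holds uniformly in $(t,x)\in[0,T]\times\mathbb{S}^{d-1}$; passing to the limit in the weak form of \eqref{eq:cont_pde} is then routine, and the proof concludes by the uniqueness argument outlined above.
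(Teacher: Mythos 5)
Your proposal is correct and follows essentially the same route as the paper's proof: compactness via Arzel\`a--Ascoli, Laplace asymptotics of the Von Mises--Fisher measure with the $\beta\cdot O(\beta^{-1})$ covariance cancellation giving $\beta$-uniform Lipschitz bounds on $\chi_\beta$, and a bootstrap/continuation argument propagating density bounds along the flow. The only detail worth flagging is that the bootstrap must also propagate $\|\nabla\mu_\beta\|_\infty$ (not just upper/lower density bounds), since it enters the Lipschitz estimate on $\chi_\beta$; this requires a second-derivative bound on $\chi_\beta$ coming from the third cumulant, which fits naturally in your scheme and is what the paper does.
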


\begin{proof}[Proof Sketch]
To establish the result we must prove well-posedness of the family of equations leading to the desired limit and obtain sufficient regularity uniformly in $\beta$ to ensure that the formal simplifications from \eqref{e:formal}(I) to \eqref{eq:PVQK} are allowed. This is particularly important as the derivatives of the kernel tend to infinity in the limit $\beta\to\infty$. The core argument proceeds in three main steps. First, we establish the relative compactness of the family of trajectories $\{\mu^\beta\}_{\beta > 0}$ in the space $\mathcal{C}([0,T], \mathcal{P}(\mathbb{S}^{d-1}))$ using a variant of Ascoli-Arzel\`a theorem and the boundedness of the vector field $\chi_\beta[\mu^\beta]$.
 The second, crucial step involves deriving uniform in $\beta$ estimates on the regularity (i.e., Lipschitz bounds) of the vector field $\chi_\beta[\mu^\beta]$ along the solution trajectories $\mu^\beta$. This is achieved by analyzing the concentration behavior of the kernel $e^{\beta\langle Qx, Ky\rangle}$ as $\beta \to \infty$, leveraging properties related to the cumulants of the Von Mises-Fisher distribution, and employing a continuation argument to propagate regularity over time.
Finally, using the compactness and uniform regularity, we pass to the limit $\beta \to \infty$ in the weak formulation of the continuity equation (\ref{eq:cont_pde}). The uniform estimates allow us to conclude that $\mu^\infty$ is a solution of (\ref{eq:PVQK}), while  uniqueness follows from \cite{ambrosio2008transport}. The full proof is deferred to Appendix~\ref{app:first_phase_proof}.
\end{proof}
A consequence of Theorem \ref{thm:squashing_phase} is that in the large-$\beta$ limit, the tokens, to leading order, evolve independently of each other, driven primarily by the structure of the $Q$, $K$, and $V$ matrices. In this regime, self-attention behaves like a composition of linear layers followed by layer normalization, with minimal influence from inter-token interactions.

Combining the above result with Lemma~\ref{lem:dobrushin}
 we obtain the following convergence result:
 \begin{corollary}
Under Assumptions \ref{ass:QKV}, \ref{ass:BL}, for every $t>0$ we have $W_1(\mu_t^{N,\beta}, \mu^\infty_t) \to 0$ as $N \to \infty$, provided that $\beta_N\to \infty$ slowly enough.
\end{corollary}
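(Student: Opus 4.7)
The plan is to combine Theorem~\ref{thm:squashing_phase} with Lemma~\ref{lem:dobrushin} via a triangle inequality in the $W_1$ metric on $\mathcal{P}(\mathbb{S}^{d-1})$, exactly along the lines sketched after Lemma~\ref{lem:dobrushin}. Concretely, I would fix $t>0$ (which determines the time window $T_\beta \equiv T = t$, independent of $\beta$, since the alignment phase operates on the $O(1)$ timescale) and write
\begin{equation*}
W_1(\mu_t^{N,\beta}, \mu^\infty_t) \;\leq\; W_1(\mu_t^{N,\beta}, \mu^\beta_t) \;+\; W_1(\mu_t^\beta, \mu^\infty_t).
\end{equation*}
The second term tends to zero as $\beta \to \infty$ by Theorem~\ref{thm:squashing_phase}, which gives convergence in $\mathcal{C}([0,T],\mathcal{P}(\mathbb{S}^{d-1}))$ under Assumptions~\ref{ass:QKV} and~\ref{ass:BL} (noting that $\mathcal{P}(\mathbb{S}^{d-1})$ is compact in the weak topology, which is metrized by $W_1$).

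For the first term, I would invoke the Dobrushin estimate from Lemma~\ref{lem:dobrushin}, namely $W_1(\mu_t^{N,\beta}, \mu^\beta_t) \leq W_1(\mu_0^N, \mu_0)\, e^{L_\beta t}$, and exploit two ingredients. First, because $\mu_0$ is absolutely continuous with bounded density on $\mathbb{S}^{d-1}$ (Assumption~\ref{ass:BL}), the empirical measure $\mu_0^N$ of i.i.d.\ samples satisfies quantitative Glivenko–Cantelli bounds, e.g.\ $\mathbb{E}[W_1(\mu_0^N, \mu_0)] \lesssim N^{-1/d}$ for $d \geq 3$ (with the standard logarithmic correction in lower dimensions). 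Second, a quantitative estimate on $L_\beta$ follows by differentiating $\chi_\beta[\mu]$ in $x$ and $\mu$: on the sphere, $\|\nabla_x \chi_\beta[\mu]\|_\infty$ is controlled by $\beta$ times moments of the Von~Mises–Fisher distribution used in the proof of Theorem~\ref{thm:squashing_phase}, giving $L_\beta \leq C\beta$ for some explicit constant $C$ depending only on $Q,K,V$. Combining these, it suffices to choose $\beta_N$ diverging slowly enough that $e^{C \beta_N t}\, N^{-1/d} \to 0$, which is guaranteed for instance by $\beta_N \leq \frac{\log N}{2Cdt}$.

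The main technical obstacle is making the bound on $L_\beta$ fully rigorous and uniform in $\mu$ along the trajectory $\mu^\beta_s$, $s \in [0,t]$. Since the exponential kernel $e^{\beta\langle Qx, Ky\rangle}$ has derivatives blowing up in $\beta$, a naive differentiation would give estimates that are too crude. However, this is precisely the type of uniform-in-$\beta$ regularity estimate already established in the proof sketch of Theorem~\ref{thm:squashing_phase} via the continuation argument and the cumulant bounds for the Von~Mises–Fisher distribution, so the required Lipschitz bound can be extracted from that analysis. Once this quantitative version of $L_\beta$ is in hand, the corollary follows by choosing $\beta_N$ subject to the explicit growth constraint above and letting $N \to \infty$ in the triangle inequality, since both terms vanish along this joint limit.
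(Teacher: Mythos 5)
Your proof follows exactly the paper's approach: the triangle inequality decomposition $W_1(\mu_t^{N,\beta}, \mu^\infty_t) \leq W_1(\mu_t^{N,\beta}, \mu^\beta_t) + W_1(\mu_t^\beta, \mu^\infty_t)$ sketched in Section~\ref{sec:largen}, with the first term handled by the Dobrushin estimate of Lemma~\ref{lem:dobrushin} and the second by Theorem~\ref{thm:squashing_phase}. The paper leaves the quantitative conditions on $\beta_N$ implicit, whereas you spell them out ($W_1(\mu_0^N,\mu_0)\lesssim N^{-1/d}$, $L_\beta = O(\beta)$, hence $\beta_N = O(\log N)$ suffices); these quantitative additions are correct at the stated level of rigor, so the two arguments coincide in substance.
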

Having established that $\mu^\infty$ is a solution of equation~\eqref{eq:PVQK}, we can investigate its long-time behavior. In particular, we show below that the support of $\mu^\infty_t$ is asymptotically flattened onto a lower-dimensional subspace determined by the spectral properties of the matrix $VK^\top Q$,

\begin{proposition}
\label{prop:support_lim}
    Let $\mu_0$ be a probability measure on $\mathbb{S}^{d-1}$ absolutely continuous with respect to the Lebesegue measure, and let $\mu_t$ be the corresponding solution of \eqref{eq:PVQK}. Then for every $\nu \in \omega(\mu_0)$ (the $\omega$-limit set of $\mu_0$) it holds:
$$supp(\nu)\subseteq E_{max}\cap\mathbb{S}^{d-1},$$ 
where $E_{max}$ is the generalized eigenspace associated to the eigenvalue of $VK^TQ$ with largest real part.
\end{proposition}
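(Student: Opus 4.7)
The plan is to exploit that \eqref{eq:PVQK} is a pure first-order transport equation, so its characteristics coincide, up to a state-dependent time reparameterization, with the projected linear flow $\dot y = P_y(A y)$ on $\mathbb{S}^{d-1}$, where $A := VK^T Q$; the $\omega$-limit set can then be read off directly from the spectral decomposition of $A$.

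First, I would represent the solution of \eqref{eq:PVQK} as the pushforward $\mu_t = (X_t)_{\#}\mu_0$, where $X_t$ is the characteristic flow of $\dot x = P_x\bigl(Ax/|K^TQ x|\bigr)$. Under Assumption~\ref{ass:QKV} the factor $|K^TQ x|$ is bounded between two positive constants on the compact sphere, so it merely induces a reparameterization $s = s(t,x_0)$ that diverges to $+\infty$ uniformly in $x_0$ as $t \to \infty$. In these new coordinates one has $\dot y = P_y(A y)$, and the standard computation with $y = z/|z|$, $\dot z = A z$ yields the closed form
\begin{equation*}
X_t(x_0) \;=\; \frac{e^{A\, s(t,x_0)} x_0}{|e^{A\, s(t,x_0)} x_0|}.
\end{equation*}

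Second, I would analyze the long-time behavior of this expression using the real Jordan decomposition of $A$. Splitting $\mathbb{R}^d = E_{\max} \oplus E_{<}$, where $E_{<}$ collects the generalized eigenspaces for eigenvalues of $A$ with real part strictly below $\mathrm{Re}(\lambda_{\max})$, and writing $x_0 = x_0^{\max} + x_0^{<}$, standard matrix-exponential bounds show that whenever $x_0^{\max} \neq 0$ the ratio $|e^{A s} x_0^{<}|/|e^{A s} x_0^{\max}|$ decays exponentially in $s$, since the polynomial Jordan-block prefactors are absorbed by the spectral gap. Hence $\mathrm{dist}(X_t(x_0), E_{\max} \cap \mathbb{S}^{d-1}) \to 0$ pointwise on $\{x_0^{\max}\neq 0\}$ — even when $\lambda_{\max}$ is non-real and $X_t(x_0)$ oscillates inside $E_{\max}$. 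The degenerate set $\{x_0^{\max} = 0\} = E_{<} \cap \mathbb{S}^{d-1}$ is the intersection of the sphere with a proper linear subspace, hence has zero spherical Lebesgue measure and, by the absolute continuity of $\mu_0$, zero $\mu_0$-measure.

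Finally, given any $\nu \in \omega(\mu_0)$, pick $t_n \to \infty$ with $\mu_{t_n} \rightharpoonup \nu$. For every continuous $\phi: \mathbb{S}^{d-1} \to \mathbb{R}$ supported in the open complement of $E_{\max} \cap \mathbb{S}^{d-1}$, bounded convergence gives
\begin{equation*}
\int \phi\, d\mu_{t_n} \;=\; \int \phi\bigl(X_{t_n}(x_0)\bigr)\, d\mu_0(x_0) \;\xrightarrow[n\to\infty]{}\; 0,
\end{equation*}
so $\int \phi\, d\nu = 0$ for every such $\phi$, whence $\mathrm{supp}(\nu) \subseteq E_{\max} \cap \mathbb{S}^{d-1}$. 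The main technical care lies in the second step, where non-real or non-diagonalizable $\lambda_{\max}$ forces one to work in the real Jordan form and track polynomial prefactors carefully so as to obtain the convergence of $X_t(x_0)$ to $E_{\max}\cap\mathbb{S}^{d-1}$ in the distance sense (rather than as a genuine pointwise limit, which may fail when $\lambda_{\max}$ is complex). The rest is a routine combination of the transport structure of \eqref{eq:PVQK} with dominated convergence.
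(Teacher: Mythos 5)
Your proposal is correct and follows essentially the same route as the paper: both represent $\mu_t$ as the pushforward along the characteristic flow, strip off the bounded factor $|K^TQ\,x|$ by a state-dependent time reparameterization, reduce to the normalized linear flow $z\mapsto z/|z|$ with $\dot z = VK^TQ\, z$, appeal to the (real) Jordan decomposition to show a.e.\ orbits approach $E_{\max}\cap\mathbb{S}^{d-1}$, and close with dominated convergence against test functions supported away from $E_{\max}\cap\mathbb{S}^{d-1}$. The only difference is cosmetic: you spell out the spectral-gap estimate and the oscillatory, non-pointwise nature of the convergence when $\lambda_{\max}$ is non-real, whereas the paper relegates these to ``classical theory for linear ODEs.''
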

\begin{proof}
The proof is provided in Appendix~\ref{app:e_max}, where we reduce the analysis to a linear system of ODEs in $\mathbb R^d$ with matrix $VK^TQ$, identifying the corresponding asymptotics with the ones of \eqref{eq:PVQK}.  
\end{proof}

\begin{remark}
At a first glance, this result might appear inconsistent with those of \cite{burger2025analysis}, since in some cases measures supported on $E_{\max}$ do not maximize the energy. However, this apparent discrepancy is a consequence of the order of the limits being taken, with $\beta \to \infty$ preceding $t \to \infty$ in our case.
\end{remark}
Proposition \ref{prop:support_lim} demonstrates that the token representations rapidly collapse onto a lower-dimensional subspace determined by the model's matrices. This can be interpreted as the initial phase of the inference process, where information is compressed into a smaller, more relevant subspace. 
This phenomenon is consistent with the rank collapse observed, e.g., in \cite{noci2022signal, giorlandino2025two}.
\begin{remark}
Apart from the collapse to $E_{\max}$, one cannot in general conclude the existence of a limiting (stationary) dynamics for \eqref{eq:PVQK}. Indeed, it is not difficult to construct examples where the particles continue to rotate on the sphere indefinitely, e.g., when $V$ is a rotation~and~$Q^TK=\mathrm{Id}$.
\end{remark}

\begin{remark}
Recent works have studied transformer models with stochastic perturbations \cite{shalova2024solutions}, where the token dynamics is influenced by random noise. In this setting, the convergence to the corresponding equation (\ref{eq:PVQK}) (with an additional Laplacian term) is typically easier to establish due to the regularizing effect of the noise (see \cite{oelschlager1985law}).
\end{remark}

\subsection{The Heat Phase}\label{sec:heat}
\label{sec:second_phase}
Having established the rapid collapse onto the subspace $E_{max} \cap \mathbb{S}^{d-1}$, we now investigate the slower evolution within this subspace, assuming that the initial measure $\mu_0$ is supported in $E_{\max} \cap \mathbb{S}^{d-1}$ as a consequence of the previous analysis: 
\begin{assumption}
\label{ass:e_max}
    The initial condition $\mu_0$ in the heat phase satisfies $supp(\mu_0)\subseteq E_{max}\cap\mathbb{S}^{d-1}$.
\end{assumption}
Since the intersection $E_{max}\cap\mathbb{S}^{d-1}$ can be identified with a lower-dimensional sphere, specifically $\mathbb{S}^{\dim(E_{\max}) - 1}$, we will, with a slight abuse of notation, continue to denote it by $\mathbb{S}^{d-1}$.

To demonstrate that the heat equation, described in a different setting in \cite{sander2022sinkformers}, emerges as an intermediate dynamical phase due to the spherical geometry induced by LayerNorm, we assume:

\begin{assumption}\label{a:S}$Q^TK|_{E_{max}}=\lambda_1 I$ and $V|_{E_{max}}=\pm\lambda_2 I$  when restricted to $E_{max}$, with $\lambda_1,\lambda_2>0$.
\end{assumption}

Under this condition, $E_{max}$ is an invariant subspace for Eq.~\eqref{eq:cont_pde} and, without loss of generality, we can suppose $\lambda_1,\lambda_2 =1$.
 
\begin{remark}
Assumption \ref{a:S}, for example, is satisfied under the global assumption $Q^\top K = S$ and $V=\pm S$, with $S$ symmetric definite positive matrix. This is a fairly standard assumption in recent studies within this framework and it endows the model with  an additional structure of gradient flow on $\mathcal{P}(\mathbb{S}^{d-1})$ with respect to a modified metric (see \cite{burger2025analysis, geshkovski2023mathematical}).
\end{remark}

In this regime, the vector field $\chi^\beta[\mu]$ vanishes on the support of $\mu$ as $\beta \to \infty$, but its rescaled version admits the formal limit (see Corollary \ref{cor:bh_chi_conv}):
$$
\beta \chi_\beta[\mu](x) \to \gamma \frac{\nabla_{\mathbb{S}^{d-1}} \mu}{\mu}(x), \quad \text{as } \beta \to \infty,
$$
where $\gamma:=\pm 1$, depending on the sign choice in the definition of $V$.
This scaling of the vector field by $\beta$ corresponds to a time rescaling $dt = \beta ds$, explaining the phase duration of order $O(\beta)$. 

\begin{proposition}
\label{prop:second_phase}
Let Assumption~\ref{a:S} hold and let $\mu_0^\infty \in \mathcal{P}(\mathbb{S}^{d-1})$ be the initial measure. Assume that there exist $T > 0$,  $k$ positive integer, and $\mu^\infty_t \in C([0,T], C^{k+3}(\mathbb{S}^{d-1}))$, with $\min_{x \in \mathbb{S}^{d-1}} \mu^\infty_t(x) > 0$ for all $t\in[0,T]$, 
such that $\mu^\infty_t$ solves the heat equation on $[0,T] \times \mathbb{S}^{d-1}$:
\begin{equation}
\label{eq:heat_eq}
\begin{cases}
\partial_t \mu &= -\gamma \Delta \mu, \\
\mu(0) &= \mu_0,
\end{cases}  
\end{equation}
where $\Delta$ denotes the Laplace-Beltrami operator on $\mathbb{S}^{d-1}$.
 Then, for large $\beta$, $\mu^\infty_t$ solves the mean-field PDE:
\[
\begin{cases}
\partial_t \mu &=  -\mathrm{div}(\mu\  \beta\chi_\beta[\mu]) + R_\beta, \\
\mu(0) &= \mu_0,
\end{cases}
\]
where the residual term satisfies \( R_\beta \to 0 \) in \( C([0,T], C^{k}(\mathbb{S}^{d-1})) \) as \( \beta \to \infty \).
\end{proposition}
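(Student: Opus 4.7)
The plan is to substitute the given smooth heat equation solution $\mu^\infty_t$ directly into the continuity equation~\eqref{eq:cont_pde} and control the residual via a quantitative Laplace asymptotic expansion. Under Assumption~\ref{a:S} (with $\lambda_1=\lambda_2=1$), the identities $K^\top Q=I$ and $V=\pm I$ on $E_{\max}$ simplify $\langle Qx,Ky\rangle$ to $\langle x,y\rangle$ and $Vy$ to $\gamma y$, so that
\[
\chi_\beta[\mu](x) \;=\; \gamma\, P_x \frac{\int_{\mathbb{S}^{d-1}} e^{\beta\langle x,y\rangle}\, y\, \mathrm{d}\mu(y)}{Z_{\beta,\mu}(x)}.
\]
Inserting $\mu^\infty_t$ into the continuity equation and using $\partial_t\mu^\infty_t=-\gamma\Delta\mu^\infty_t$, the residual reads $R_\beta = \mathrm{div}\bigl(\mu^\infty_t\,\beta\chi_\beta[\mu^\infty_t]\bigr) - \gamma\Delta\mu^\infty_t$. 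Thus it suffices to prove
\[
\mu^\infty_t\,\beta\chi_\beta[\mu^\infty_t] \;\longrightarrow\; \gamma\,\nabla_{\mathbb{S}^{d-1}} \mu^\infty_t \qquad \text{in } C\bigl([0,T],\, C^{k+1}(\mathbb{S}^{d-1})\bigr),
\]
since then one additional divergence yields the announced $C^k$ convergence.

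The core technical step is the pointwise Laplace expansion of the vector field, uniformly along the orbit $\{\mu^\infty_t\}_{t\in[0,T]}$. I would work in exponential-map coordinates $y=\exp_x(v)$, $v\in T_x\mathbb{S}^{d-1}$, and rescale $u=\sqrt{\beta}\,v$. The expansions $\langle x,y\rangle = 1-|v|^2/2+O(|v|^4)$ and $P_x y = v+O(|v|^3)$, combined with Taylor expansions of $\mu^\infty_t$ and of the Jacobian of $\exp_x$, together with a cutoff at radius $|v|\le \beta^{-1/2+\epsilon}$ (whose complement contributes exponentially small terms thanks to the upper bound on $\mu^\infty_t$ and a uniform lower bound on $1-\langle x,y\rangle$ there), yield the asymptotic identities
\[
Z_{\beta,\mu^\infty_t}(x) \;=\; \frac{e^{\beta}(2\pi)^{(d-1)/2}}{\beta^{(d-1)/2}}\Bigl[\mu^\infty_t(x) + \tfrac{1}{\beta}A_t(x) + O(\beta^{-2})\Bigr],
\]
\[
\int e^{\beta\langle x,y\rangle}\, P_x y\, \mathrm{d}\mu^\infty_t(y) \;=\; \frac{e^{\beta}(2\pi)^{(d-1)/2}}{\beta^{(d-1)/2}}\Bigl[\tfrac{1}{\beta}\nabla_{\mathbb{S}^{d-1}}\mu^\infty_t(x) + O(\beta^{-2})\Bigr],
\]
the vanishing of the $O(\beta^{-1/2})$ contribution in the vector integral being due to the odd symmetry $u\mapsto u\,e^{-|u|^2/2}$. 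Dividing and invoking the uniform lower bound $\min_x \mu^\infty_t(x)>0$ gives, pointwise, $\mu^\infty_t(x)\,\beta\chi_\beta[\mu^\infty_t](x) = \gamma\,\nabla_{\mathbb{S}^{d-1}}\mu^\infty_t(x) + \beta^{-1}E_t(x)$, where $E_t$ involves derivatives of $\mu^\infty_t$ up to order three, bounded uniformly in $t$.

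The main obstacle is upgrading this pointwise expansion to convergence in $C^{k+1}_x$, uniformly in $t$. To this end, I would differentiate the integral representations up to $k+1$ times in $x$: each $x$-derivative either falls on the exponential weight (producing a factor $\beta P_x K^\top y$, absorbed into the Gaussian rescaling and giving a bounded contribution) or on the projector $P_x y$ (producing bounded tensorial corrections). The same Laplace scheme then applies to each derivative and yields errors of order $O(1/\beta)$ expressed in terms of derivatives of $\mu^\infty_t$ up to order $k+3$\,—\,precisely the regularity provided by the hypothesis $\mu^\infty\in C([0,T], C^{k+3})$. The exponential smallness of the tail contributions is preserved under differentiation, since derivatives of the cutoff only produce polynomial factors in $\beta$. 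Combining these estimates yields the required $C^{k+1}$ convergence, and taking one divergence concludes $R_\beta\to 0$ in $C([0,T], C^k(\mathbb{S}^{d-1}))$.
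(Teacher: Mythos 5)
Your reduction of the claim to showing $\mu^\infty_t\,\beta\chi_\beta[\mu^\infty_t]\to\gamma\,\nabla\mu^\infty_t$ in $C([0,T],C^{k+1})$ coincides with the paper's first step, and your Laplace expansion in exponential coordinates with a $\beta^{-1/2+\epsilon}$ cutoff is a legitimate alternative to the paper's route to the $C^0$ limit, which instead uses explicit cumulant/Bessel identities for the Von Mises--Fisher kernel (Lemmas~\ref{lem:sigma_cumulants} and \ref{lem:bh_bounded_chi}); both ultimately invoke the odd-moment cancellation of the leading term, so up to the $C^0$ statement the two arguments are essentially interchangeable.

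Where you diverge is the upgrade to $C^{k+1}$, and there is a real gap. The paper never re-expands each derivative: it rewrites $\beta\chi_\beta[\mu]=\nabla(W_\beta*\mu)/(W_\beta*\mu)$ with $W_\beta$ a rotationally invariant probability kernel, obtains uniform $C^{k+2}$ bounds on $\beta\chi_\beta[\mu]-\nabla\mu/\mu$ from the convolution estimate $\|W_\beta*\mu\|_{C^j}\leq C\|\mu\|_{C^j}$ (proved by passing the sphere-derivative through the convolution via a rotation change of variables) together with the positive lower bound on $\mu$, and then \emph{interpolates} between the $C^0$ convergence and the uniform $C^{k+2}$ bound. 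Your plan, by contrast, differentiates the integral representations $k+1$ times and Laplace-expands each derivative to get a rate, but the stated justification does not hold: a derivative falling on $e^{\beta\langle x,y\rangle}$ brings down a factor $\beta\,P_x y$, and near the Laplace peak $|P_x y|\sim\beta^{-1/2}$, so this factor is $O(\sqrt{\beta})$, not bounded; $j$ such derivatives cost $\beta^{j/2}$ under naive power counting. The actual boundedness of $D^j_x\bigl(\beta\chi_\beta[\mu]\bigr)$ comes from cancellations specific to the ratio structure — equivalently, from the fact that $D^j_x\mathbb{E}_{\nu_x}[Y]$ is controlled by cumulants of order up to $j+1$, which scale like $\beta^{1-m}$ and exactly cancel the $\beta^{m-1}$ prefactors arising from the chain rule. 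These cancellations must be exhibited at every order $j\le k+1$, which is precisely the bookkeeping the paper isolates in Lemma~\ref{lem:chi_cumulants} and then sidesteps in the heat-phase proof via interpolation; it is further complicated in your approach by the fact that the exponential chart itself depends on $x$. Until ``absorbed into the Gaussian rescaling'' is replaced by an explicit cumulant-type (or uniform-measure-comparison) cancellation at each derivative order, the higher-regularity step of your argument does not go through.
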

This proposition, whose proof is provided in Appendix \ref{app:second_phase}, characterizes the limiting dynamics within the lower-dimensional manifold, connecting the transformer model with a heat flow on the sphere, thereby justifying the name of this phase. Remarkably, this connection holds without the need for correction terms, in contrast to \cite{sander2022sinkformers}. We now need to distinguish between two different cases:
 \begin{itemize}
     \item \textbf{Forward diffusion}. When $\gamma<0$ in equation~\eqref{eq:heat_eq}, the dynamics corresponds to a forward heat equation. In this setting, local existence and regularity for $\mu^\infty_t$ (and in particular the assumptions of Prop. \ref{prop:second_phase}) are automatically satisfied due to the smoothing properties of forward diffusion, provided that $\mu_0 \in C^{k+3}(\mathbb{S}^{d-1})$. Notably, interacting particle systems of the specific form given by equation~\eqref{eq:ODE_SA} (under the assumption $Q^\top K = \mathrm{Id} = -V$) have been studied in the literature and are known as \textit{diffusion-velocity methods}; see, for instance, \cite{lions2001methode, degond1990deterministic, brenier2017geometric, lacombe1999analyse, lacombe1999presentation, mas2002diffusion}.
     \item \textbf{Backward diffusion}. When $\gamma>0$, the dynamics corresponds to a backward heat equation. In this case, the regularity assumptions on $\mu^\infty_0$ ensuring local existence and regularity are significantly more restrictive (e.g. requiring that $\mu_0$ is in the Gevrey-$\frac{1}{2}$ space). Nonetheless, we construct explicit examples of solutions below. The backward heat equation is a prototypical ill-posed problem, which explains why the statement of Proposition~\ref{prop:second_phase} is necessarily weaker than that of Theorem~\ref{thm:squashing_phase}.
 \end{itemize}
 A family of initial conditions $\mu_0$ that satisfies the assumptions of Proposition~\ref{prop:second_phase} is given by
$$ \mu_0 = \sum_{j=1}^M \alpha_j \mathcal{N}_{\mathbb{S}^{d-1}}(m_j,\sigma_j^2), $$
where $\alpha_j\geq0$, $\sum_{j=1}^M \alpha_j=1$, and $\mathcal{N}_{\mathbb{S}^{d-1}}(m, \sigma^2)$ denotes the heat kernel (the forward-in-time evolution under the heat semi-group $\exp(t  \Delta)$ of a Dirac delta, analogous to a Gaussian $\mathcal{N}_{\mathbb{R}^d}(m, \sigma^2)$ in Euclidean space) centered at $m \in \mathbb{S}^{d-1}$ with concentration related to $\sigma^2$. 
By linearity of $\Delta$, the explicit solution to $\partial_t \mu = -\gamma \Delta \mu$ is then given by:
\begin{equation}
\label{eq:gaussian_mixture}
 \mu_t^\infty = \sum_{j=1}^M \alpha_j \mathcal{N}(m_j, \sigma_j^2 - \gamma t). 
\end{equation}
For forward diffusion ($\gamma < 0$), this solution is a smooth function for all $t \geq 0$.
while in the backward case ($\gamma > 0$) this only holds for $t \in [0, T_{\min})$, where $T_{min} = 
\min_{j} \sigma_j^2$ is the time at which the first Gaussian component collapses to a Dirac delta $\delta_{m_j}$.
A more general class in which local existence and well-posedness hold in both the forward and backward directions is the set of positive, Gevrey-$1/2$, functions.

Motivated by the aggregation behavior observed in the finite-$\beta$ particle system, we conjecture that the collapsed $\delta_{m_j}$ remains invariant under the limiting dynamics, while other components continue evolving independently according to the backward heat equation until their respective collapse times.
From the practical perspective, this observation suggests that the transformer’s behavior in this regime can be interpreted as a form of regularized denoising (when $\beta$ is finite) acting on the input. This aligns with the clustering phenomena extensively studied in previous works on the model. The dynamics in this phase, governed by a heat equation, drive the formation of distinct token clusters (via backward diffusion) or the smoothing of the token distribution (via forward diffusion).
This behavior can be interpreted as a representation refinement stage, where tokens are organized into more defined semantic groups.

\begin{remark}
  In \cite{geshkovski2023mathematical}, a simplified model, referred to as the Unnormalized Self-Attention (USA) model, is proposed, where the normalization factor $Z_{\beta,\mu}(x)$ is replaced by a constant $Z_{\beta}$, significantly simplifying the mathematical analysis. By choosing $
Z_{\beta} = \frac{1}{\beta} \int_{\mathbb{S}^{d-1}} e^{\beta \langle x, y \rangle} \, d\sigma(y)$ (or equivalently by rescaling time),
the limiting behavior of the model no longer yields the heat equation, but rather the porous medium equation:
$\partial_t \mu = \Delta(\mu^2)$.
Even in this case, the convergence of particles system to this nonlinear PDE has been extensively studied (see for example \cite{figalli2008convergence, lions2001methode, oelschlager1985law, oelschlager2001sequence} or \cite{shalova2025noisy} for $\mathbb{S}^{d-1}$).
\end{remark}

\subsection{Pairing Phase}\label{sec:pairing}

The initial conditions we consider for the dynamics on longer timescales must be compatible with the steady states of the preceding phase. Motivated by the discussion at the end of the previous section, we therefore formulate the following assumption:
\begin{assumption}
 The initial condition $\mu_0$ in the pairing phase can be written as $\mu_0 = \sum_{j = 1}^m \alpha_j \delta_{x_j}$ for an $m \in \mathbb N$, with $x_j \in \mathbb S^{d-1}$, $\alpha_j >0$ $\forall{j\in\{1,...,m\}}$ and  $\sum_{j=1}^m \alpha_j =1$.
\end{assumption}
Under this assumption, further supposing for the sake of clarity that $\alpha_j = 1/m$ for all $j \in \{1, \dots, m\}$, we can interpret each cluster as a particle, and the dynamics of the system is given by the set of ODEs \eqref{eq:ODE_SA}. In the regime of large $\beta$, clusters interact very weakly due to their separation and the exponential tails (in $\beta$) of the interaction kernel, resulting in exponentially long timescales for the nontrivial dynamics. Here, analogously to  \cite{alcalde2025clustering}, interactions are dominated by the closest pair of clusters $(\underline{i}, \underline{j})$, assumed unique, satisfying 
$
\langle x_{\underline{i}}, x_{\underline{j}} \rangle = \max_{i \neq j} \langle x_i, x_j \rangle
$
at initialization. We note that this hardmax particle interaction, as well as the timescale where it arises in the large $\beta$ limit, was introduced in \cite[Section 6]{geshkovski2024dynamic} in the case $d =2$. We present an analogous result here in arbitrary dimension, without claiming originality, to provide a complete dynamical picture across phases.  
\begin{proposition}
\label{prop:pairwise_clustering}
The solutions $x_i(t)$ of the ODE system \eqref{eq:ODE_SA}, under Assumptions \ref{a:S} and positive $V$, with the rescaled time $dt = e^{\beta(1 - \langle x_{\underline{i}}, x_{\underline{j}} \rangle)} ds$, converge as $\beta \to \infty$ to the solutions of the system:
\[
\begin{cases}
\dot{y}_k(t) = 
\begin{cases} 
P_{y_{\underline{i}}}(y_{\underline{j}}) & \text{if } k = \underline{i}, \\
P_{y_{\underline{j}}}(y_{\underline{i}}) & \text{if } k = \underline{j}, \\
0 & \text{otherwise},
\end{cases} \\
y_i(0) = x_i(0)
\end{cases}
\]
on finite intervals $[0, T_\epsilon]$, with $T_\epsilon$ such that $\langle y_{\underline{i}}, y_{\underline{j}} \rangle \leq 1 - \epsilon$ throughout the interval, for any $\epsilon > 0$.
\end{proposition}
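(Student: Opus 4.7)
The plan is to exploit the exponential separation of interaction scales at large $\beta$, working directly with the $N$-particle ODE system. Under Assumption~\ref{a:S} restricted to $E_{\max}$ and with positive $V$, after rescaling constants so that $Q^T K = V = \mathrm{Id}$, the system~\eqref{eq:ODE_SA} reduces to $\dot{x}_i = Z_{\beta,i}^{-1}\sum_j e^{\beta\langle x_i, x_j\rangle}P_{x_i}(x_j)$. Applying the time change $dt = e^{\beta(1-c(s))}\,ds$ with $c(s):=\langle x_{\underline{i}}(s), x_{\underline{j}}(s)\rangle$, well-defined as long as $c(s)<1$, the rescaled dynamics becomes
\[
\frac{dx_k}{ds}= e^{\beta(1-c(s))}\,\frac{\sum_{j\neq k} e^{\beta\langle x_k, x_j\rangle}P_{x_k}(x_j)}{e^\beta+\sum_{j\neq k}e^{\beta\langle x_k, x_j\rangle}},
\]
where the self-term in the numerator drops out since $P_{x_k}(x_k)=0$.

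The next step is a leading-order asymptotic expansion. For $k\in\{\underline{i},\underline{j}\}$, denoting by $j^\star(k)$ the partner of $k$ in the closest pair, the numerator is dominated by the single term $e^{\beta c(s)}P_{x_k}(x_{j^\star(k)})$: every other contribution carries an extra factor $e^{\beta(\langle x_k, x_l\rangle - c(s))}$, which is exponentially small as long as the inner products of $x_k$ with the remaining particles stay strictly below $c(s)$. The denominator equals $e^{\beta}(1+o(1))$, so together with the prefactor $e^{\beta(1-c(s))}$ one obtains $dx_k/ds=P_{x_k}(x_{j^\star(k)})+O(e^{-\beta\delta})$ for some gap-dependent $\delta>0$. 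For $k\notin\{\underline{i},\underline{j}\}$, the uniqueness of the initial closest pair yields $c_k(0):=\max_{l\neq k}\langle x_k(0), x_l(0)\rangle<c(0)$, and the same manipulation gives $|dx_k/ds|\leq C\,e^{-\beta(c(s)-c_k(s))}$, which is exponentially small in $\beta$.

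To upgrade the pointwise asymptotics to uniform convergence on $[0,T_\epsilon]$, I would close the argument with a continuation (bootstrap) scheme. Fix $\epsilon>0$ and let $T_\beta\leq T_\epsilon$ be the largest time at which (i) $c(s)\leq 1-\epsilon/2$, (ii) the spectral gap $\min_{k\notin\{\underline{i},\underline{j}\}}\bigl(c(s)-c_k(s)\bigr)$ stays above half its initial value, and (iii) the trajectories $x_k(s)$ track the limiting solutions $y_k(s)$ in sup norm within $e^{-c\beta/2}$. On $[0,T_\beta]$ the expansions above are valid with uniform errors of size $O(e^{-c\beta})$, and a standard Gr\"onwall comparison between the rescaled trajectories and the limiting ODE system closes the bootstrap: for $\beta$ large enough, all three conditions remain strictly satisfied at $s=T_\beta$, forcing $T_\beta=T_\epsilon$. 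Uniform convergence of $x_k$ to $y_k$ on $[0,T_\epsilon]$ as $\beta\to\infty$ then follows from continuous dependence of the limiting ODE on its initial data.

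The main obstacle lies in the joint bookkeeping of several exponentially small quantities: the prefactor $e^{\beta(1-c(s))}$ amplifies the residual errors as $c(s)$ approaches $1-\epsilon$, while at the same time the spectral gap $c(s)-c_k(s)$ must be preserved along the trajectory in order for the static particles to remain genuinely static. These two effects turn out to be self-reinforcing, since the pair only attracts itself more strongly while the remaining particles barely move, so that the spectral gap only widens in $s$ and provides enough slack to close the bootstrap.
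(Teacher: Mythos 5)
Your proposal takes essentially the same route as the paper: the same rescaling, the same dominance of the nearest-pair term in the sum, a continuation argument to preserve the separation between the leading pair and the rest, and a final Gr\"onwall comparison. But there is a genuine gap in the step you lean on to close the bootstrap, namely the claim that ``the spectral gap only widens in $s$.''

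This claim is not true for the inner-product gap $c(s)-c_k(s)$ you define. To leading order, $\dot c = \partial_s \langle x_{\underline{i}}, x_{\underline{j}}\rangle = 2(1-c^2)$, while for a spectator $k$ with nearest neighbour $\underline i$ one has $\dot c_k = \langle P_{x_{\underline{i}}}(x_{\underline{j}}), P_{x_{\underline{i}}}(x_k)\rangle$, which Cauchy--Schwarz only bounds by $\sqrt{(1-c^2)(1-c_k^2)}$. When $c$ is close to $1$ and $c_k$ is not (which is exactly the generic situation here), $\sqrt{(1-c^2)(1-c_k^2)} \gg 2(1-c^2)$, so $c_k$ can grow strictly faster than $c$: the inner-product gap can shrink, and ``stays above half its initial value'' is not something you get for free. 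The paper sidesteps this precisely by tracking the geodesic (arccos) distances instead. In that variable, $\partial_t\arccos(d_t)$ is, to exponentially small error, the sum of two contributions of the form $-\frac{e^\beta}{Z_\beta(x_{\underline i})}|P_{x_{\underline{i}}}x_{\underline{j}}|$ (one per moving endpoint), while for any other pair Cauchy--Schwarz bounds the corresponding derivative by a single such term; hence $\partial_t\arccos(\langle x_i, x_j\rangle) \geq \partial_t\arccos(d_t) - O(e^{\beta(m_t-d_t)})$, and the angular gap $\arccos(m_t)-\arccos(d_t)$ is non-decreasing up to an exponentially small correction. That monotonicity is what actually closes the continuation argument (Lemma~\ref{lem:md}), and it does not hold in the coordinates you chose. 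You would need to either switch to angular distances or find a substitute comparison to make the bootstrap rigorous; additionally, the proposition's stopping condition is phrased in terms of $\langle y_{\underline{i}}, y_{\underline{j}}\rangle$, so (as in the paper) you also need an intermediate lemma transferring that bound from the limiting trajectory $y$ back to the true trajectory $x$ before invoking the dominance of the nearest-pair term.
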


In other words, all clusters remain stationary except for the closest pair, which collapses along the geodesic connecting them, in a time exponential in $\beta$.  Note that this result only holds up to an arbitrary moment before the first collapse. We refer to \cite[Section 6]{geshkovski2024dynamic} for a detailed explanation of the challenges to bypass this limit and a proof of an analogous result until and beyond the collapse time in a related but simplified model. The above proposition is proven for completeness in Appendix~\ref{app:third}.

This final, slow phase models the sequential merging of the closest token clusters. This can be interpreted as the construction of higher-order abstractions, where previously formed groups are hierarchically combined to create more complex representations.

\section{Numerical experiments}
This section presents numerical simulations of the transformer model in Eq.~\eqref{eq:discrete}. All experiments are conducted in dimension $d=3$ or  or $d=2$ to facilitate visualization and are designed to validate our theoretical findings. The attention mechanism is implemented using the official PyTorch function \texttt{torch.nn.functional.scaled\_dot\_product\_attention()} and the experiments are performed on a single Nvidia H100.
The code is available at \cite{repo}.
\paragraph{First Phase Dynamics.}
Figure~\ref{fig:first_phase} illustrates the dynamics of the alignment phase, showing distinct behaviors based on the parameters choices for $Q, K, V$. For both scenarios presented in Figure~\ref{fig:first_phase}, the initial state consists of $N=10^4$ tokens sampled independently and identically uniformly from the sphere $\mathbb{S}^2$. We set the inverse temperature parameter $\beta=30$ and use a time step of $dt=10^{-2}$.
\begin{itemize}
    \item \textbf{Scenario 1a (collapse to 1D subspace).} The matrix $VK^TQ$ is chosen such that it possesses a unique eigenvalue with maximal real part. As predicted by our theory, this configuration leads to the tokens collapsing onto a one-dimensional subspace (i.e. two antipodal points). 
    \item \textbf{Scenario 1b (non-gradient flow dynamics and rotation).} This example employs a parameter choice for $Q, K, V$ that falls outside the gradient flow regime. Nevertheless, the tokens are observed to collapse toward a two-dimensional subspace (a great circle), accompanied by a collective rotation of the particles along this circle.
\end{itemize}
Both observed behaviors are consistent with the results in Theorem~\ref{thm:squashing_phase}  and Proposition~\ref{prop:support_lim}.

\begin{figure}[H]
    \centering
    \includegraphics[width=\linewidth]{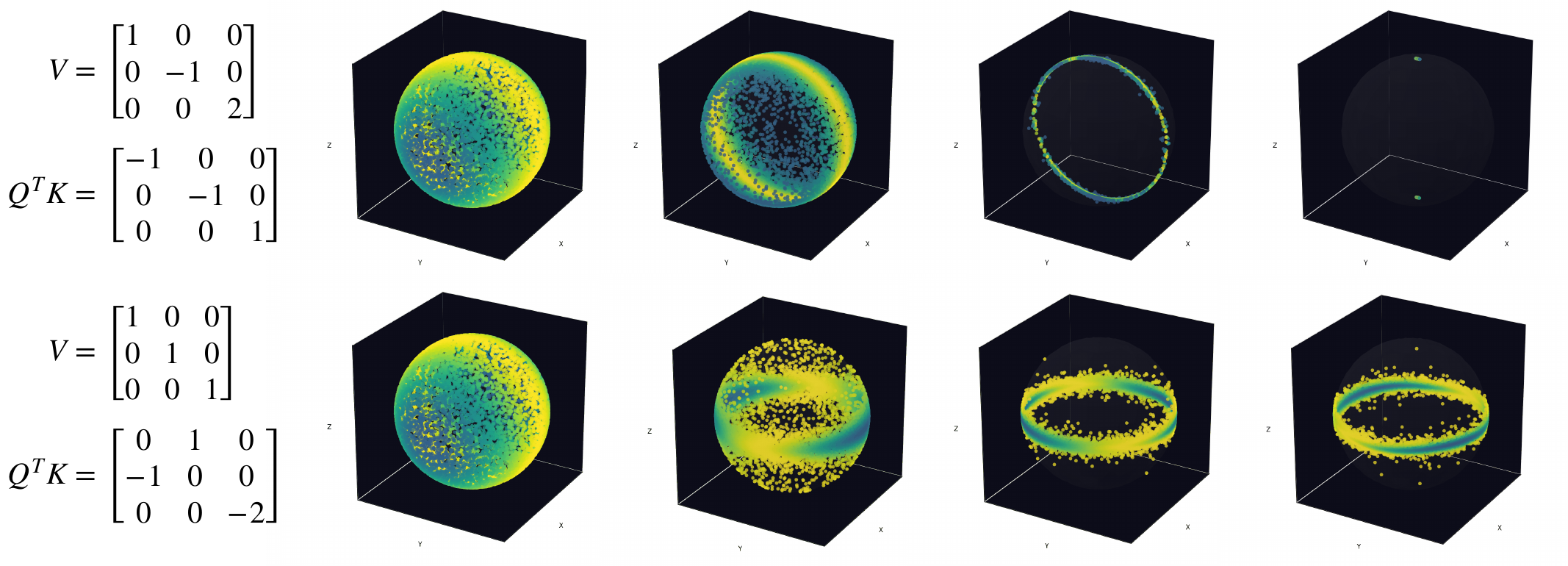}
    \caption{Simulations of two different scenarios (one per row,  four timesteps) for the first phase.}
    \label{fig:first_phase}
\end{figure}
\paragraph{Second Phase Dynamics.} We support the conclusions of section \ref{sec:second_phase}  through two examples, comparing empirical dynamics with analytical solutions of backward and forward diffusion equations.
\begin{itemize}
    \item \textbf{Scenario 2a (collapse to 2D subspace and backward diffusion).}
For the experiment in Figure~\ref{fig:second_phase_backward} we set $\beta=10$. The initial configuration comprises $N=10^4$ i.i.d. tokens. Their elevation angle $\psi$ is sampled uniformly on $[-\frac{\pi}{2},\frac{\pi}{2}]$, while their azimuthal angles, $\theta_i \in [0, 2\pi)$, are distributed according to the mixture density $g(\theta)$:
$$
g(\theta) := 0.2 \cdot \mathcal{N}(\theta; \pi/2, \sigma_0) + 0.5 \cdot \mathcal{N}(\theta; 0, \sigma_0) + 0.3 \cdot \mathcal{N}(\theta; 4\pi/3, \sigma_0)
$$
where $\mathcal{N}(\cdot; \mu, \sigma_0)$ denotes the probability density function of a wrapped normal distribution on $\mathbb{S}^1$ with mean $\mu$ and standard deviation $\sigma_0=0.2$. 
The parameters $Q$, $K$, and $V$ are chosen so that, after the first phase, the tokens collapse onto the $xy$-plane, with distribution $g(\theta)$.
The analytical solution to the backward heat equation with initial condition $g(\theta)$ (computed as in Eq.~\eqref{eq:gaussian_mixture}) 
is plotted as a red curve in Figure~\ref{fig:second_phase_backward}. The positions of the clusters
agree with this solution, numerically confirming the predictions of Proposition~\ref{prop:second_phase}.
    \item \textbf{Scenario 2b (forward diffusion comparison).} In Figure \ref{fig:second_phase_forward}, we compare the empirical token distribution with the analytical solution of the forward heat equation characterizing a possible example of the second phase of the dynamics. Specifically, we simulate the evolution of $5\times 10^4$ tokens, initially sampled from a superposition of three Gaussian densities on $\mathbb{S}^1$, through the transformer model with parameters $\beta = 50$, $d = 2$, $Q = K = \mathrm{Id}$, $V = -\mathrm{Id}$, and  $dt = 10^{-3}$. The analytical solution of the forward heat equation (in red) closely matches the token distribution histogram (in blue) over time (i.e., depth). Note that, as expected, the forward diffusion process is significantly more stable numerically than the backward one.
\end{itemize}
\begin{figure}[H]
    \centering
    \includegraphics[width=\linewidth]{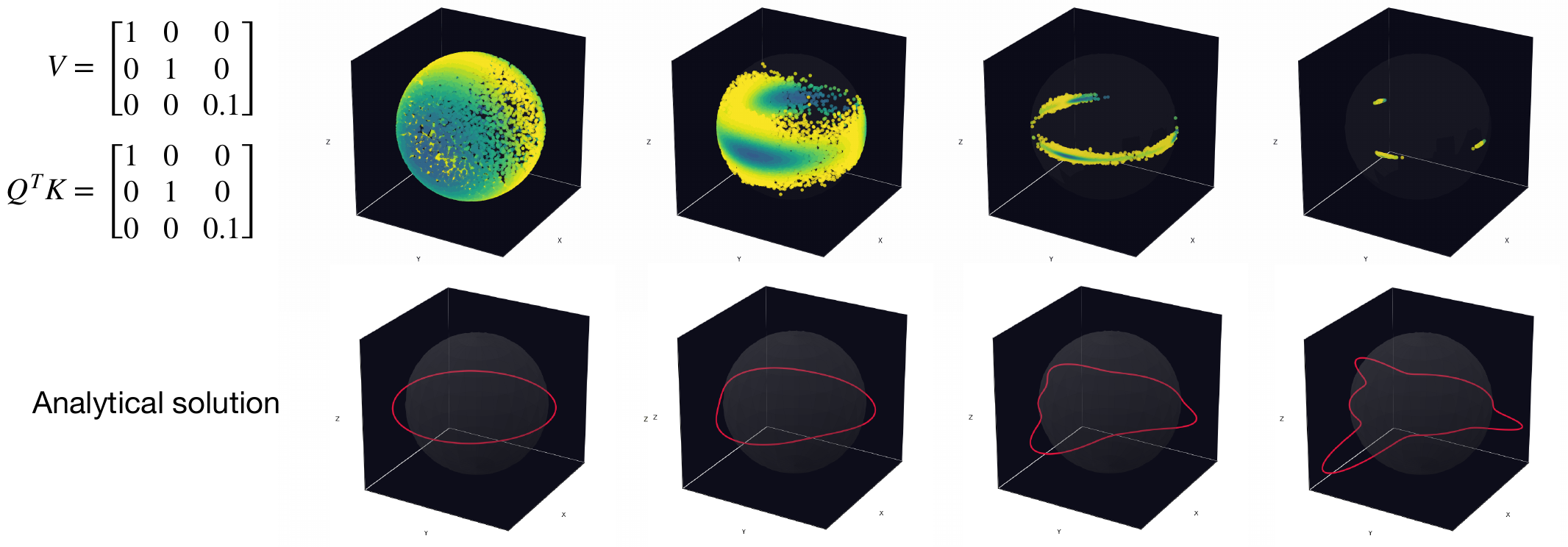}
    \caption{Numerical simulation of the backward scenario for the second phase on $\mathbb{S}^2$.}
    \label{fig:second_phase_backward}
\end{figure}

\begin{figure}[H]
    \centering
    \includegraphics[width=\linewidth]{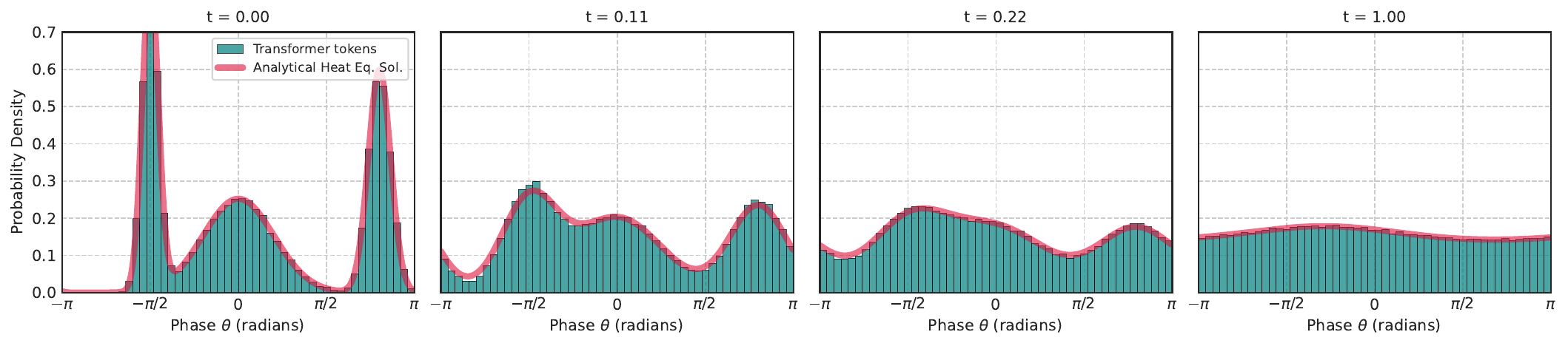}
    \caption{Evolution of the tokens distribution in the forward scenario for the second phase in $\mathbb{S}^1$.}
    \label{fig:second_phase_forward}
\end{figure}
\section{Conclusions}
This work provides a mathematical analysis of token dynamics in mean-field transformer models within the moderate interaction regime, where interaction strength ($\beta$) scales with context size ($N$), motivated by scaling practices in modern LLMs. Our study reveals a fundamental multiscale structure governing the evolution of token representations through network depth in this setting.
More specifically, we showed that, under this scaling, the system progresses through a sequence of three different dynamical phases characterized by qualitatively distinct dynamical behavior, operating on separated timescales. Through our analysis, we offer a unified dynamical picture describing how deep transformers might achieve progressive representation refinement.

This unified dynamical picture is, however, not yet fully rigorous. Indeed, while we establish convergence results for the alignment phase and provide partial justification for the intermediate and late phases under specific assumptions, a complete mathematical treatment of the full dynamics - particularly of the backward heat regime beyond the first collapse and of the slow clustering interactions - remains an open challenge due to significant technical difficulties in the analysis of the strongly unstable limiting equations.

Furthermore, while the first dynamical phase has quite general assumptions on the parameter matrices, the following phases still require relatively limited assumptions (although less limited than in most previous works). Relaxing these assumptions further, in particular in the case of non-gradient dynamics, would constitute an interesting, but also technically quite challenging, avenue of future research. 

 There are several important directions in which our work could be extended. Most notably, incorporating the MLP, which could be interpreted as introducing a drift term in the dynamics, acting independently on each token without accounting for mutual interactions. Another natural extension involves studying the dynamics under more general parameter settings. For instance, during the heat phase, we assume that $Q^TK=S$ is symmetric positive definite, which holds, for example, when $Q=K$. This “shared-QK” assumption is not novel and has been adopted in prior empirical work (e.g., in \cite{kitaev2020reformer}). While different choices of these parameters (both MLP and Attention matrices) can have a dramatic effect on model behavior, with adversarial choices potentially leading to qualitatively different dynamics from the one predicted in this paper, we believe our results to be a relevant first step towards understanding the development of representations in transformers, capturing some important qualitative features of these models as shown in \cite{geshkovski2023mathematical}.

 A further direction of future research consists in providing sufficient conditions for the stability of the space $E_{max}$ emerging in the alignment phase under the prelimit model (i.e., for large but finite $\beta$), thereby justifying Assumption \ref{ass:e_max} and, ultimately, connecting in a rigorous way the alignment and heat phases identified in this paper.

While the path from this theoretical analysis to direct application is not immediate, we believe our work opens several potential avenues for future investigation. The characterization of the alignment phase, for instance, offers a potential mechanism for interpreting how token representations evolve into learned subspaces. Finally, by focusing on the "moderate interaction regime", we hope our analysis provides a theoretical foundation for a more principled understanding of parameter scaling, particularly as models are adapted for longer contexts.

\bibliographystyle{plain}
\bibliography{refs}

\begin{thebibliography}{10}

\bibitem{abramowitz1948handbook}
Milton Abramowitz and Irene~A Stegun.
\newblock {\em Handbook of mathematical functions with formulas, graphs, and mathematical tables}, volume~55.
\newblock US Government printing office, 1948.

\bibitem{agazzi21}
Andrea Agazzi and Jianfeng Lu.
\newblock Global optimality of softmax policy gradient with single hidden layer neural networks in the mean-field regime.
\newblock In {\em International Conference on Learning Representations (ICLR 2021)}, 2021.

\bibitem{alcalde2025clustering}
Albert Alcalde, Giovanni Fantuzzi, and Enrique Zuazua.
\newblock Clustering in pure-attention hardmax transformers and its role in sentiment analysis.
\newblock {\em SIAM Journal on Mathematics of Data Science}, 7(3):1367--1393, 2025.

\bibitem{alcalde2025attention}
Albert Alcalde, Borjan Geshkovski, and Dom{\`e}nec Ruiz-Balet.
\newblock Attention's forward pass and frank-wolfe.
\newblock {\em arXiv preprint arXiv:2508.09628}, 2025.

\bibitem{ambrosio2008transport}
Luigi Ambrosio, Luis Caffarelli, Michael~G Crandall, Lawrence~C Evans, and Nicola Fusco.
\newblock Transport equation and cauchy problem for non-smooth vector fields.
\newblock {\em Calculus of Variations and Nonlinear Partial Differential Equations: With a historical overview by Elvira Mascolo}, pages 1--41, 2008.

\bibitem{bahdanau2014neural}
Dzmitry Bahdanau, Kyunghyun Cho, and Yoshua Bengio.
\newblock Neural machine translation by jointly learning to align and translate.
\newblock {\em arXiv preprint arXiv:1409.0473}, 2014.

\bibitem{berthier2024learning}
Rapha{\"e}l Berthier, Andrea Montanari, and Kangjie Zhou.
\newblock Learning time-scales in two-layers neural networks.
\newblock {\em Foundations of Computational Mathematics}, pages 1--84, 2024.

\bibitem{brenier2017geometric}
Yann Brenier.
\newblock Geometric diffusions of 1-currents.
\newblock In {\em Annales de la Facult{\'e} des sciences de Toulouse: Math{\'e}matiques}, volume~26, pages 831--846, 2017.

\bibitem{bruno2024emergence}
Giuseppe Bruno, Federico Pasqualotto, and Andrea Agazzi.
\newblock Emergence of meta-stable clustering in mean-field transformer models.
\newblock In {\em International Conference on Learning Representations (ICLR 2025)}, 2025.

\bibitem{burger2023porous}
Martin Burger and Antonio Esposito.
\newblock Porous medium equation and cross-diffusion systems as limit of nonlocal interaction.
\newblock {\em Nonlinear Analysis}, 235:113347, 2023.

\bibitem{burger2025analysis}
Martin Burger, Samira Kabri, Yury Korolev, Tim Roith, and Lukas Weigand.
\newblock Analysis of mean-field models arising from self-attention dynamics in transformer architectures with layer normalization.
\newblock {\em Philosophical Transactions A}, 383(2298):20240233, 2025.

\bibitem{carrillo2019blob}
Jos{\'e}~Antonio Carrillo, Katy Craig, and Francesco~S Patacchini.
\newblock A blob method for diffusion.
\newblock {\em Calculus of Variations and Partial Differential Equations}, 58:1--53, 2019.

\bibitem{castin2025unified}
Val{\'e}rie Castin, Pierre Ablin, Jos{\'e}~Antonio Carrillo, and Gabriel Peyr{\'e}.
\newblock A unified perspective on the dynamics of deep transformers.
\newblock {\em arXiv preprint arXiv:2501.18322}, 2025.

\bibitem{chen2018neural}
Ricky~TQ Chen, Yulia Rubanova, Jesse Bettencourt, and David~K Duvenaud.
\newblock Neural ordinary differential equations.
\newblock {\em Advances in neural information processing systems}, 31, 2018.

\bibitem{chen2025quantitative}
Shi Chen, Zhengjiang Lin, Yury Polyanskiy, and Philippe Rigollet.
\newblock Quantitative clustering in mean-field transformer models.
\newblock {\em arXiv preprint arXiv:2504.14697}, 2025.

\bibitem{chizat2018global}
Lenaic Chizat and Francis Bach.
\newblock On the global convergence of gradient descent for over-parameterized models using optimal transport.
\newblock {\em Advances in neural information processing systems}, 31, 2018.

\bibitem{criscitiello2024synchronization}
Christopher Criscitiello, Quentin Rebjock, Andrew~D McRae, and Nicolas Boumal.
\newblock Synchronization on circles and spheres with nonlinear interactions.
\newblock {\em arXiv preprint arXiv:2405.18273}, 2024.

\bibitem{de2020quantitative}
Valentin De~Bortoli, Alain Durmus, Xavier Fontaine, and Umut Simsekli.
\newblock Quantitative propagation of chaos for sgd in wide neural networks.
\newblock {\em Advances in Neural Information Processing Systems}, 33:278--288, 2020.

\bibitem{degond1990deterministic}
Pierre Degond and Francisco-Jos{\'e} Mustieles.
\newblock A deterministic approximation of diffusion equations using particles.
\newblock {\em SIAM Journal on Scientific and Statistical Computing}, 11(2):293--310, 1990.

\bibitem{weinan2017proposal}
Weinan E.
\newblock A proposal on machine learning via dynamical systems.
\newblock {\em Communications in Mathematics and Statistics}, 1(5):1--11, 2017.

\bibitem{figalli2008convergence}
Alessio Figalli and Robert Philipowski.
\newblock Convergence to the viscous porous medium equation and propagation of chaos.
\newblock {\em ALEA Lat. Am. J. Probab. Math. Stat}, 4:185--203, 2008.

\bibitem{geshkovski2024dynamic}
Borjan Geshkovski, Hugo Koubbi, Yury Polyanskiy, and Philippe Rigollet.
\newblock Dynamic metastability in the self-attention model.
\newblock {\em arXiv preprint arXiv:2410.06833}, 2024.

\bibitem{geshkovski2024emergence}
Borjan Geshkovski, Cyril Letrouit, Yury Polyanskiy, and Philippe Rigollet.
\newblock The emergence of clusters in self-attention dynamics.
\newblock {\em Advances in Neural Information Processing Systems}, 36, 2024.

\bibitem{geshkovski2023mathematical}
Borjan Geshkovski, Cyril Letrouit, Yury Polyanskiy, and Philippe Rigollet.
\newblock A mathematical perspective on transformers.
\newblock {\em Bulletin of the American Mathematical Society}, 62(3):427--479, 2025.

\bibitem{geshkovski2024measure}
Borjan Geshkovski, Philippe Rigollet, and Dom{\`e}nec Ruiz-Balet.
\newblock Measure-to-measure interpolation using transformers.
\newblock {\em arXiv preprint arXiv:2411.04551}, 2024.

\bibitem{giorlandino2025two}
Alessio Giorlandino and Sebastian Goldt.
\newblock Two failure modes of deep transformers and how to avoid them: a unified theory of signal propagation at initialisation.
\newblock {\em arXiv preprint arXiv:2505.24333}, 2025.

\bibitem{repo}
GitHub-Repository.
\newblock \url{https://github.com/anonymous229321329857123/multiscales_transformer}.

\bibitem{karagodin2024clustering}
Nikita Karagodin, Yury Polyanskiy, and Philippe Rigollet.
\newblock Clustering in causal attention masking.
\newblock {\em Advances in Neural Information Processing Systems}, 37:115652--115681, 2024.

\bibitem{kitaev2020reformer}
Nikita Kitaev, Lukasz Kaiser, and Anselm Levskaya.
\newblock Reformer: The efficient transformer.
\newblock In {\em 8th International Conference on Learning Representations, {ICLR} 2020, Addis Ababa, Ethiopia, April 26-30, 2020}. OpenReview.net, 2020.

\bibitem{lacombe1999analyse}
Gilles Lacombe.
\newblock Analyse d’une {\'e}quation de vitesse de diffusion.
\newblock {\em Comptes Rendus de l'Acad{\'e}mie des Sciences-Series I-Mathematics}, 329(5):383--386, 1999.

\bibitem{lacombe1999presentation}
Gilles Lacombe and Sylvie Mas-Gallic.
\newblock Presentation and analysis of a diffusion-velocity method.
\newblock In {\em Esaim: Proceedings}, volume~7, pages 225--233. EDP Sciences, 1999.

\bibitem{lions2001methode}
Pierre-Louis Lions and Sylvie Mas-Gallic.
\newblock Une m{\'e}thode particulaire d{\'e}terministe pour des {\'e}quations diffusives non lin{\'e}aires.
\newblock {\em Comptes Rendus de l'Acad{\'e}mie des Sciences-Series I-Mathematics}, 332(4):369--376, 2001.

\bibitem{markdahl2017almost}
Johan Markdahl, Johan Thunberg, and Jorge Gon{\c{c}}alves.
\newblock Almost global consensus on the $ n $-sphere.
\newblock {\em IEEE Transactions on Automatic Control}, 63(6):1664--1675, 2017.

\bibitem{mas2002diffusion}
Sylvie Mas-Gallic.
\newblock The diffusion velocity method: a deterministic way of moving the nodes for solving diffusion equations.
\newblock {\em Transport Theory and Statistical Physics}, 31(4-6):595--605, 2002.

\bibitem{mei2018mean}
Song Mei, Andrea Montanari, and Phan-Minh Nguyen.
\newblock A mean field view of the landscape of two-layer neural networks.
\newblock {\em Proceedings of the National Academy of Sciences}, 115(33):E7665--E7671, 2018.

\bibitem{nakanishi2025scalable}
Ken~M Nakanishi.
\newblock Scalable-softmax is superior for attention.
\newblock {\em arXiv preprint arXiv:2501.19399}, 2025.

\bibitem{noci2022signal}
Lorenzo Noci, Sotiris Anagnostidis, Luca Biggio, Antonio Orvieto, Sidak~Pal Singh, and Aurelien Lucchi.
\newblock Signal propagation in transformers: Theoretical perspectives and the role of rank collapse.
\newblock {\em Advances in Neural Information Processing Systems}, 35:27198--27211, 2022.

\bibitem{oelschlager1985law}
Karl Oelschl{\"a}ger.
\newblock A law of large numbers for moderately interacting diffusion processes.
\newblock {\em Zeitschrift f{\"u}r Wahrscheinlichkeitstheorie und verwandte Gebiete}, 69(2):279--322, 1985.

\bibitem{oelschlager1990large}
Karl Oelschl{\"a}ger.
\newblock Large systems of interacting particles and the porous medium equation.
\newblock {\em Journal of differential equations}, 88(2):294--346, 1990.

\bibitem{oelschlager2001sequence}
Karl Oelschl{\"a}ger.
\newblock A sequence of integro-differential equations approximating a viscous porous medium equation.
\newblock {\em Zeitschrift f{\"u}r Analysis und ihre Anwendungen}, 20(1):55--91, 2001.

\bibitem{paul2025universal}
Thierry Paul and Emmanuel Tr{\'e}lat.
\newblock Universal approximations of quasilinear pdes by finite distinguishable particle systems.
\newblock {\em arXiv preprint arXiv:2501.11387}, 2025.

\bibitem{peng2023yarn}
Bowen Peng, Jeffrey Quesnelle, Honglu Fan, and Enrico Shippole.
\newblock Yarn: Efficient context window extension of large language models.
\newblock In {\em The Twelfth International Conference on Learning Representations, {ICLR} 2024, Vienna, Austria, May 7-11, 2024}. OpenReview.net, 2024.

\bibitem{polyanskiy2025synchronization}
Yury Polyanskiy, Philippe Rigollet, and Andrew Yao.
\newblock Synchronization of mean-field models on the circle.
\newblock {\em arXiv preprint arXiv:2507.22857}, 2025.

\bibitem{rotskoff2022trainability}
Grant Rotskoff and Eric Vanden-Eijnden.
\newblock Trainability and accuracy of artificial neural networks: An interacting particle system approach.
\newblock {\em Communications on Pure and Applied Mathematics}, 75(9):1889--1935, 2022.

\bibitem{sander2022sinkformers}
Michael~E Sander, Pierre Ablin, Mathieu Blondel, and Gabriel Peyr{\'e}.
\newblock Sinkformers: Transformers with doubly stochastic attention.
\newblock In {\em International Conference on Artificial Intelligence and Statistics}, pages 3515--3530. PMLR, 2022.

\bibitem{shalova2025noisy}
Anna Shalova.
\newblock Noisy gradient flows: with applications in machine learning.
\newblock {\em PhD Thesis}, 2025.

\bibitem{shalova2024solutions}
Anna Shalova and Andr{\'e} Schlichting.
\newblock Solutions of stationary mckean-vlasov equation on a high-dimensional sphere and other riemannian manifolds.
\newblock {\em arXiv preprint arXiv:2412.14813}, 2024.

\bibitem{vaswani2017attention}
Ashish Vaswani, Noam Shazeer, Niki Parmar, Jakob Uszkoreit, Llion Jones, Aidan~N Gomez, {\L}ukasz Kaiser, and Illia Polosukhin.
\newblock Attention is all you need.
\newblock {\em Advances in neural information processing systems}, 30, 2017.

\end{thebibliography}
\medskip

\newpage
\appendix

\section{Proofs of the alignment phase}
This section is divided into two parts. The first part contains the proof of Theorem \ref{thm:squashing_phase}, while the second one contains the proof of Proposition \ref{prop:support_lim}.

\label{app:proofs}
\subsection{Moderate scaling limit}
\label{app:first_phase_proof}
Consider the family $\{\mu^\beta\}_{\beta\geq0}$ of solutions to the usual continuity equation \eqref{eq:cont_pde}:
$$
\begin{cases}
    \partial_t \mu^\beta &= - div(\mu^\beta \chi_\beta[\mu^\beta]), \\
    \mu^\beta(0,x) &= \mu_0(x), \quad x \in \mathbb{S}^{d-1},
\end{cases}
$$
where $\chi_\beta$ is the vector field given by:
$$
\chi_\beta[\mu](x)=\frac{\int_{S^{d-1}} e^{\beta \langle x , By\rangle}P_xVy \mu(dy)}{\int_{S^{d-1}} e^{\beta \langle x , By\rangle} \mu(dy)}.
$$
\begin{remark}
   For notational simplicity, we will refer to the matrix $Q^tK$ in the main body as $B$.
\end{remark}
\begin{remark}
    In the following $C$ will be a constant depending only on $V, B, d$ and $\mu_0$. Its value may change line by line.
\end{remark}

Under assumptions \ref{ass:QKV}, \ref{ass:BL}, i.e.:
\begin{itemize}
    \item A1: $Q,B$ are invertible square matrices.
    \item A2: The probability measure $\mu_0$ on $S^{d-1}$ is absolutely continuous with respect to the Lebesgue measure. Its density is bounded and Lipschitz continuous and its minimum satisfies $\min_{x\in\mathbb{S}^{d-1}}\mu_0(x) > 0$. 
\end{itemize}
one can prove Theorem \ref{thm:squashing_phase}: $\mu_\beta$ converges weakly to $\mu_\infty$ in $C([0,T];\mathcal{P}(S^{d-1}))$ where $\mu_\infty$ is the unique solution of the PDE:
$$
\begin{cases}
    \partial_t \mu &= - div(\mu \frac{P_xVB^Tx}{|B^Tx|}), \\
    \mu(0,x) &= \mu_0(x), \quad x \in \mathbb{S}^{d-1}.
\end{cases}
$$
and the metric in $\mathcal{C}([0,T],\mathcal{P}(\mathbb{S}^{d-1}))$ is given by:
$$
d(\mu,\nu):=\sup_{t\in[0,T]}\rho(\mu_t,\nu_t)=\sup_{t\in[0,T]}\sup_{f\in BL(\mathbb{S}^{d-1})} \left|\int_{\mathbb{S}^{d-1}} f(x) \mu_t(dx) -f(x)\nu_t(dx)\right|,
$$
where $BL(\mathbb{S}^{d-1})$ is the set of all the Lipschitz continuous functions on $\mathbb{S}^{d-1}$ which are bounded together with their Lipschitz constant by $1$.

The idea of the proof follows five steps:
\begin{itemize}
    \item Relative compactness in $C([0,T];\mathcal{P}(S^{d-1}))$,
    \item Bounds on cumulants of the Von Mises-Fisher distribution,
    \item Prove a relationship between the derivatives of $\mu$ and the regularity of the vector field,
    \item  Apply a continuation argument to show the uniform regularity of $\chi_\beta[\mu_\beta]$,
    \item Use the regularity to pass to the limit in the PDE.
\end{itemize}
\subsubsection{Relative compactness}
\begin{proposition}
\label{prop:rel_comp}
    The set $\{\mu_\beta\}_{\beta\geq 0}$ is relatively compact in $\mathcal{C}([0,T],\mathcal{P}(\mathbb{S}^{d-1}))$.
\end{proposition}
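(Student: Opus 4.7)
The plan is to apply a version of the Arzel\`a--Ascoli theorem for continuous curves with values in a compact metric space. Since $\mathbb{S}^{d-1}$ is compact, the space $(\mathcal{P}(\mathbb{S}^{d-1}), \rho)$ is itself compact (Prokhorov), so pointwise relative compactness of the family $\{\mu_t^\beta\}_\beta$ at each $t \in [0,T]$ is automatic. The only substantive task is therefore to establish uniform equicontinuity of the trajectories $t \mapsto \mu_t^\beta$ in $t$, uniformly in $\beta$.

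First, I would observe that the vector field $\chi_\beta[\mu]$ is uniformly bounded in $\beta$, $\mu$, and $x$. Indeed, $\chi_\beta[\mu](x)$ is a probability-weighted average over $y$ of the vectors $P_x V y$, each of which satisfies $|P_x V y| \leq |V y| \leq \|V\|_{\mathrm{op}}$ since $y \in \mathbb{S}^{d-1}$ and orthogonal projection is non-expansive. Hence
\begin{equation*}
\sup_{\beta > 0} \sup_{\mu \in \mathcal{P}(\mathbb{S}^{d-1})} \sup_{x \in \mathbb{S}^{d-1}} |\chi_\beta[\mu](x)| \leq \|V\|_{\mathrm{op}} =: M.
\end{equation*}
This bound is the one place where the specific structure of the attention kernel matters, but notice that it does not require any uniformity of the exponential weights: it follows solely from the averaging structure and the spherical constraint on $y$.

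Next, I would use this bound, together with the weak formulation of the continuity equation \eqref{eq:cont_pde}, to control $\rho(\mu_t^\beta, \mu_s^\beta)$ for $0 \leq s \leq t \leq T$. For any test function $f \in BL(\mathbb{S}^{d-1})$, one has
\begin{equation*}
\left| \int f \, d\mu_t^\beta - \int f \, d\mu_s^\beta \right| = \left| \int_s^t \int_{\mathbb{S}^{d-1}} \langle \nabla_{\mathbb{S}^{d-1}} f(x), \chi_\beta[\mu_r^\beta](x) \rangle \, \mu_r^\beta(dx) \, dr \right| \leq M (t - s),
\end{equation*}
using $|\nabla_{\mathbb{S}^{d-1}} f| \leq 1$. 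Taking the supremum over $f$ yields $\rho(\mu_t^\beta, \mu_s^\beta) \leq M |t - s|$, i.e. the family $\{\mu^\beta\}_\beta$ is uniformly Lipschitz in time, and in particular uniformly equicontinuous.

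Finally, combining pointwise relative compactness (from compactness of $\mathcal{P}(\mathbb{S}^{d-1})$) with uniform equicontinuity, the Arzel\`a--Ascoli theorem for curves in compact metric spaces yields relative compactness of $\{\mu^\beta\}_{\beta \geq 0}$ in $\mathcal{C}([0,T], \mathcal{P}(\mathbb{S}^{d-1}))$. I do not expect any genuine obstacle here: the argument is entirely a priori and uses no regularity of $\mu_0$ beyond the trivial uniform bound on the tangential vector field. The harder work -- controlling Lipschitz constants of $\chi_\beta[\mu^\beta]$ uniformly in $\beta$ -- will arise in the subsequent steps (Von Mises--Fisher cumulant estimates and the continuation argument), not in this compactness step.
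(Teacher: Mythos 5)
Your proof is correct and takes essentially the same approach as the paper: Prokhorov for compactness of $(\mathcal{P}(\mathbb{S}^{d-1}),\rho)$, the a priori bound $|\chi_\beta[\mu]|\le\|V\|_{\mathrm{op}}$ (the paper absorbs this constant and writes the bound as $|t-s|$), uniform Lipschitz-in-time equicontinuity via the weak formulation, and Arzel\`a--Ascoli.
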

\begin{proof} By Prokhorov's theorem, since $\mathbb{S}^{d-1}$ is compact, we can conclude that $\mathcal{P}(\mathbb{S}^{d-1})$ is weakly compact. Since $\rho$ metricizes the weak topology, then also $(\mathcal{P}(\mathbb{S}^{d-1}), \rho)$ is compact.
To apply Ascoli-Arzel\`a theorem, we just need the equicontinuity of the set $\{\mu^\beta\}_{\beta>0}$.

Given $0\leq s\leq t\leq T$ and $\beta>0$:
\begin{align*}
\rho(\mu^\beta_s,\mu^\beta_t)&=\sup_{f\in BL(\mathbb{S}^{d-1})} \left|\int_{\mathbb{S}^{d-1}} f(x)(\mu^\beta_t(dx)-\mu^\beta_s(dx)) \right|\\
&\leq\sup_{f\in BL(\mathbb{S}^{d-1})} \int_s^t\left|\int_{\mathbb{S}^{d-1}} \langle \nabla f(x), \chi^\beta[\mu^\beta_u](x)\rangle \mu^\beta_u(dx) du\right| \\
&\leq \int_s^t \int_{\mathbb{S}^{d-1}}|\chi^\beta[\mu^\beta_u](x)|\mu^\beta_u(dx)du \leq |t-s|.
\end{align*}
This is sufficient to conclude the proof.
\end{proof}

\subsubsection{Bounds on the vector field}
The aim of the following paragraphs is to obtain some bounds on $D^i\chi_\beta[\mu]$, $i=0,1,2$.

To fix the notation we define the probability measure $\nu_x^{\mu,\beta, B}$ on $\mathbb{S}^{d-1}$ as:
$$
\nu_x^{\mu, \beta, B}(dy) := \frac{e^{\beta\langle x, By \rangle} \mu (dy)}{\int_{S^{d-1}} e^{\beta\langle x, By \rangle} \mu(dy)}.
$$

\begin{remark}
    The measure $\nu^{\sigma, \beta, B}_x$ is the Von Mises-Fisher distribution with mean direction $\frac{B^Tx}{|B^Tx|}$ and concentration parameter $\beta |B^Tx|$. Some properties of this distribution are studied later.
\end{remark}

Then the vector field $\chi_\beta[\mu]$ can be written as:
$$
\chi_\beta[\mu](x) = P_xV \left(\mathbb{E}_{\nu^{\mu, \beta, B}_x}[Y]\right) = P_xV \left(\mathbb{E}_{\nu^{\mu, 1, Id}_x}[Y]\right) \circ (\beta B^Tx),
$$
where $\circ$ denotes the composition with respect to the parameter $x$ of the measure $\nu_x^{\mu,\beta, B}$.
\begin{lemma}
\label{lem:chi_cumulants}
The derivatives of the vector field $\chi_\beta[\mu]$ are bounded by:
   \begin{align*}
    |\chi_\beta[\mu](x)| & \leq C,\\
    |D^1_x \chi_\beta[\mu](x)| & \leq C\left(1+\beta\ | \mathbb{E}_{\nu_{x}^{\mu,\beta,B}}[\left(Y-\mathbb{E}_{\nu_{x}^{\mu,\beta,B}}[Y]\right)^{\otimes 2}]|\right),\\
    |D^2_x\chi_\beta[\mu](x)| &\leq C\left(1+ \beta\ | \mathbb{E}_{\nu_{x}^{\mu,\beta,B}}[\left(Y-\mathbb{E}_{\nu_{x}^{\mu,\beta,B}}[Y]\right)^{\otimes 2}]|+\beta^2 \ | \mathbb{E}_{\nu_{x}^{\mu,\beta,B}}[\left(Y-\mathbb{E}_{\nu_{x}^{\mu,\beta,B}}[Y]\right)^{\otimes 3}]|\right)
\end{align*}
where $C$ is a constant depending only on $V,B,d$.
\end{lemma}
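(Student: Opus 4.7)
The plan is to view $\mathbb{E}_{\nu_x^{\mu,\beta,B}}[Y]$ as the gradient in $x$ of the log-partition function
$\log Z_\beta(x)=\log\int_{\mathbb{S}^{d-1}} e^{\beta\langle x,By\rangle}\mu(dy)$
divided by $\beta B$, so that differentiating in $x$ produces the cumulants (equivalently, centered moments) of $Y$ under the tilted measure $\nu_x^{\mu,\beta,B}$. A direct computation under the integral sign gives, for any $i$,
\begin{equation*}
\partial_{x_i}\mathbb{E}_\nu[Y]
=\beta\,\mathbb{E}_\nu\!\left[(Y-\mathbb{E}_\nu[Y])(BY-\mathbb{E}_\nu[BY])_i\right],
\end{equation*}
i.e., $D_x\mathbb{E}_\nu[Y]=\beta\,\mathbb{E}_\nu[(Y-\mathbb{E}_\nu[Y])^{\otimes 2}]\,B^T$, and iterating the same manipulation once more yields
$D^2_x\mathbb{E}_\nu[Y]=\beta^2\,\mathbb{E}_\nu[(Y-\mathbb{E}_\nu[Y])^{\otimes 3}]\cdot(B^T)^{\otimes 2}$,
i.e., the third centered moment (equivalently the third cumulant) contracted with two copies of $B$. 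This is the key algebraic identity underlying the lemma.

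Next, I would decompose $\chi_\beta[\mu](x)=P_xV\,\mathbb{E}_\nu[Y]$ and estimate the three factors separately. Since $Y\in\mathbb{S}^{d-1}$, we have $|\mathbb{E}_\nu[Y]|\le 1$, and since $P_x=\mathrm{Id}-xx^T$ is smooth on $\mathbb{S}^{d-1}$ with derivatives of all orders uniformly bounded, the zeroth-order bound $|\chi_\beta[\mu](x)|\le C$ is immediate with $C$ depending only on $|V|$. For the first derivative, applying the Leibniz rule gives
\begin{equation*}
D_x\chi_\beta[\mu](x)=\bigl(D_xP_x\bigr)V\mathbb{E}_\nu[Y]+P_xV\,D_x\mathbb{E}_\nu[Y],
\end{equation*}
and the two terms are bounded respectively by a constant and by $C\beta|B|\,\bigl|\mathbb{E}_\nu[(Y-\mathbb{E}_\nu[Y])^{\otimes 2}]\bigr|$, giving the claimed inequality. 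For the second derivative, applying Leibniz twice produces three kinds of terms: a harmless $(D^2_xP_x)V\mathbb{E}_\nu[Y]$, a cross term $(D_xP_x)V\,D_x\mathbb{E}_\nu[Y]$ bounded by the second-cumulant contribution, and the principal term $P_xV\,D^2_x\mathbb{E}_\nu[Y]$ bounded by $C\beta^2|B|^2$ times the third centered moment. Collecting these yields the last estimate.

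The only point that requires care is the bookkeeping in computing $D^2_x\mathbb{E}_\nu[Y]$: differentiating the ratio $\mathbb{E}_\nu[Y]=\int Y e^{\beta\langle x,By\rangle}\mu(dy)/Z_\beta(x)$ twice in $x$ produces several terms involving products of moments that must be reorganized to form the third centered moment. This is a standard cumulant identity but needs to be carried out carefully so that the constants absorb factors of $|B|$ and pure $\beta$-independent prefactors while isolating the monomials $\beta$ and $\beta^2$ multiplying the second and third centered moments, respectively. Once this identity is established, the rest is linear algebra with the bounded matrices $V$, $B$ and the smooth projector $P_x$, and the final inequalities follow. I do not anticipate any analytic obstacle here, since the measure $\mu$ plays only the role of a fixed reference with respect to which all expectations are taken, and no regularity of $\mu$ is used in this lemma.
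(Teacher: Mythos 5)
Your proof is correct and follows essentially the same route as the paper: decompose $\chi_\beta[\mu]=P_xV\,\mathbb{E}_{\nu}[Y]$ via Leibniz, then observe that $x$-derivatives of $\mathbb{E}_{\nu}[Y]$ produce cumulants of the tilted measure (contracted with $B^T$), and invoke the identity that the first three cumulants agree with the first three central moments. The paper packages the $\beta B^T$ contraction through a Faà di Bruno formula applied to the composition $\mathbb{E}_{\nu^{\mu,1,\mathrm{Id}}_x}[Y]\circ(\beta B^T x)$, whereas you differentiate directly; since the inner map is linear the two are equivalent, and your direct calculation is arguably cleaner. Your bookkeeping remark at the end identifies the only point that needs care (checking the cumulant/central-moment identity for the third order), and that identity does hold, so the argument is complete.
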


\begin{proof}
Let's compute the derivatives of $\chi_\beta[\mu]$:
\begin{align*}
    |\chi_\beta[\mu](x)| &\leq |P_xV| |\mathbb{E}_{\nu^{\mu, \beta, B}_x}[Y] |\leq C_V,\\
    |D^1_x \chi_\beta[\mu](x)| &\leq |D^1_x P_xV| |\left(\mathbb{E}_{\nu^{\mu, 1, Id}_x}[Y]\circ (\beta B^Tx)\right) | +|P_xV| |D^1_x \left(\mathbb{E}_{\nu^{\mu, \beta, B}_x}[Y] \right) |\\
    &\leq C_V\left(1+|D^1_x \left(\mathbb{E}_{\nu^{\mu, \beta, B}_x}[Y] \right)|\right),\\
    |D^2_x\chi_\beta[\mu](x)| &\leq |D^2_x P_xV||\mathbb{E}_{\nu^{\mu, \beta, B}_x}[Y] |+2|D^1_xP_xV||D^1_x \left(\mathbb{E}_{\nu^{\mu, \beta, B}_x}[Y] \right)|+|P_xV||D^2_x \left(\mathbb{E}_{\nu^{\mu, \beta, B}_x}[Y] \right)|\\
    &\leq C_V(1+2 |D^1_x \left(\mathbb{E}_{\nu^{\mu, \beta, B}_x}[Y] \right)|+|D^2_x \left(\mathbb{E}_{\nu^{\mu, \beta, B}_x}[Y] \right)|
\end{align*}

Hence we need to compute the derivatives with respect to $x$ of $\mathbb{E}_{\nu^{\mu, \beta, B}_x}[Y]=\mathbb{E}_{\nu^{\mu, 1, Id}_x}[Y]\circ(\beta B^Tx)$. Thanks to the Faa di Bruno formula:
$$
D^n_x \mathbb{E}_{\nu^{\mu, \beta, B}_x}[Y] = \sum_{\pi \in \Pi_n} \left((D^{|\pi|}_x \mathbb{E}_{\nu^{\mu, 1, Id}_x}[Y])|_{\beta B^Tx} \circ \bigotimes_{P\in \pi}D^{P} (\beta B^Tx)\right),
$$
with $\Pi_n$ the set of all the possible partitions of $\{1,...,n\}$ and $\otimes$ the tensor product. The previous expression 
can be bounded by $C_B\sum_{l=0}^n \beta^l \|(D^l_x\mathbb{E}_{\nu^{\mu, 1, Id}_x}[Y])|_{\beta B^Tx}\|$, where $C_B$ is a constant depending on the matrix $B$. 

Thus, the aim is to compute a bound for $D^{n}_x \mathbb{E}_{\nu^{\mu, 1, Id}_x}[Y]$. This is related to the $d-$ dimensional cumulants (tensors) of the distribution $\nu^{\mu, 1, Id}_x$. Indeed, we can write:
\begin{equation}
\label{eq:derivatives_cumulants}
\begin{aligned}
D^{n}_x \left(\mathbb{E}_{\nu^{\mu, 1, Id}_x}[Y] \right)_{x}& = D^{n}_v\left(\mathbb{E}_{\nu^{\mu, 1, Id}_{x+v}}[Y]\right)_{v=0}\\
&=D^{n}_v\left(\frac{\int e^{\langle x+v, y\rangle} y \ \mu(dy)}{\int e^{\langle x+v, y\rangle}\ \mu(dy)}\right)_{v=0}\\
&=D^{n}_v\left(\frac{\int e^{\langle v, y\rangle} y \ e^{\langle x, y\rangle}\mu(dy)}{\int e^{\langle x, y\rangle}\ \mu(dy)}\frac{\int e^{\langle x, y\rangle}\ \mu(dy)}{\int e^{\langle v, y\rangle} e^{\langle x, y\rangle}\ \mu(dy)}\right)_{v=0}\\
&=D^{n}_v\left(\frac{\int e^{\langle v, y\rangle} y \ \nu_{x}^{\mu,1,Id}(dy)}{ \int e^{\langle v, y\rangle} \ \nu_{x}^{\mu,1,Id}(dy)}\right)_{v=0}\\
&=D^{n+1}_v\left( \log \mathbb{E}_{\nu_{x}^{\mu,1,Id}}[e^{\langle v, Y\rangle}]\right)_{v=0}.
\end{aligned}
\end{equation}

It is well known that the first three cumulants correspond to the central moments:
\begin{equation}
\label{eq:derivatives_central_moments}
\begin{aligned}
D^0_x\left(\mathbb{E}_{\nu_{x}^{\mu,1,Id}}[Y] \right)_{x}&=\mathbb{E}_{\nu_{x}^{\mu,1,Id}}[Y],\\
D^1_x\left(\mathbb{E}_{\nu_{x}^{\mu,1,Id}}[Y] \right)_{x}&=\mathbb{E}_{\nu_{x}^{\mu,1,Id}}[\left(Y-\mathbb{E}_{\nu_{x}^{\mu,1,Id}}[Y]\right)^{\otimes 2}],\\
D^2_x \left(\mathbb{E}_{\nu_{x}^{\mu,1,Id}}[Y] \right)_{x}&=\mathbb{E}_{\nu_{x}^{\mu,1,Id}}[\left(Y-\mathbb{E}_{\nu_{x}^{\mu,1,Id}}[Y]\right)^{\otimes 3}].
\end{aligned}
\end{equation}
The thesis then follows by replacing these equalities in the initial bounds (after renaming $C_V$ and $C_B$).
\end{proof}

\begin{lemma}
\label{lem:sigma_cumulants}
    Let $\sigma$ be the uniform measure on $\mathbb{S}^{d-1}$. Then:
    \begin{align*}
        |\mathbb{E}_{\nu_{x}^{\sigma,\beta, Id}}[Y]|&\leq 1,\\
        |\mathbb{E}_{\nu_{x}^{\sigma,\beta,Id}}[\left(Y-\mathbb{E}_{\nu_{x}^{\sigma,\beta,Id}}[Y]\right)^{\otimes 2}]|&\leq C\frac{1}{\beta},\\
        |\mathbb{E}_{\nu_{x}^{\sigma,\beta,Id}}[\left(Y-\mathbb{E}_{\nu_{x}^{\sigma,\beta,Id}}[Y]\right)^{\otimes 3}]|&\leq C\frac{1}{\beta^2}.
    \end{align*}
\end{lemma}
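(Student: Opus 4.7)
The first bound is trivial since $|Y|=1$ almost surely under $\nu_x^{\sigma,\beta,Id}$ (as $Y\in\mathbb{S}^{d-1}$), so $|\mathbb{E}[Y]|\leq\mathbb{E}[|Y|]=1$ by Jensen's inequality.

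For the second and third bounds, the plan is to exploit the rotational symmetry of $\sigma$ to reduce to the case $x = e_d$; in the applications of this lemma $x\in\mathbb{S}^{d-1}$, so $|x|=1$ and this reduction is lossless. In spherical coordinates $y=(\sin\theta\,\omega,\cos\theta)$ with $\theta\in[0,\pi]$ and $\omega\in\mathbb{S}^{d-2}$, the law $\nu_{e_d}^{\sigma,\beta,Id}$ factors as the product of the uniform measure on $\mathbb{S}^{d-2}$ for $\omega$ and a polar density proportional to $e^{\beta\cos\theta}\sin^{d-2}\theta$ for $\theta$. The residual $\mathrm{SO}(d-1)$ symmetry fixing $e_d$ then forces $\mathbb{E}[Y] = a\,e_d$ with $a:=\mathbb{E}[\cos\theta]$, a diagonal covariance whose first $d-1$ eigenvalues equal $\mathbb{E}[\sin^2\theta]/(d-1)$ and whose last equals $\mathrm{Var}(\cos\theta)$, and a third central moment tensor whose only non-vanishing independent components are $\mathbb{E}[(\cos\theta-a)^3]$ and, for $i<d$, $\mathbb{E}[\omega_i^2]\,\mathbb{E}[\sin^2\theta\,(\cos\theta-a)]$ (all others vanish by the $\omega\mapsto-\omega$ symmetry on $\mathbb{S}^{d-2}$).

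The remaining task is to estimate the polar moments via Laplace's method. After the change of variables $u=\sqrt{\beta}\,\theta$ and the expansion $\cos\theta = 1 - \theta^2/2 + O(\theta^4)$ in the exponent, the normalized polar measure $e^{\beta(\cos\theta-1)}\sin^{d-2}\theta\,d\theta$ rescales, up to exponentially small tail contributions from $\theta$ bounded away from $0$, to a measure proportional to $u^{d-2}e^{-u^2/2}\,du$, with polynomial corrections controlled by the higher-order Taylor terms of $\cos\theta$. From this one reads off $\mathbb{E}[\sin^2\theta]=O(1/\beta)$ and $a = 1 - O(1/\beta)$, so the transverse eigenvalue of the covariance is $O(1/\beta)$; for the parallel eigenvalue, one uses the identity $\mathrm{Var}(\cos\theta) = 4\,\mathrm{Var}(\sin^2(\theta/2))\leq 4\,\mathbb{E}[\sin^4(\theta/2)]$, which is $O(1/\beta^2)$. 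Since the operator norm of a diagonal tensor equals the largest diagonal entry, the second bound follows. The surviving components of the third central moment tensor are estimated analogously: $\mathbb{E}[(\cos\theta-a)^3]=O(1/\beta^3)$ and $\mathbb{E}[\sin^2\theta\,(\cos\theta-a)] = O(1/\beta^2)$, yielding the $O(1/\beta^2)$ bound on the operator norm.

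The main obstacle lies in carrying the Laplace expansion to a sufficient order: a naive first-order analysis only gives $\mathrm{Var}(\cos\theta)=O(1/\beta)$, which would not reflect the geometric fact that the law concentrates on a $1/\sqrt{\beta}$-neighborhood of $e_d$ where $\cos\theta$ is stationary, forcing $O(1/\beta^2)$ fluctuations in the parallel direction. The identity $\cos\theta = 1 - 2\sin^2(\theta/2)$ bypasses this bookkeeping cleanly. A more systematic alternative is to express every polar moment in closed form as a ratio of modified Bessel functions $I_\nu(\beta)$ via the standard vMF density and invoke the large-argument expansion $I_\nu(\beta)\sim e^{\beta}/\sqrt{2\pi\beta}\,(1+O(1/\beta))$; the cancellations in the leading terms then reproduce the claimed $1/\beta$ and $1/\beta^2$ rates directly.
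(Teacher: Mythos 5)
The proposal is correct and follows essentially the same symmetry-reduction strategy as the paper: both exploit the invariance of $\nu_x^{\sigma,\beta,\mathrm{Id}}$ under rotations fixing $x$ to write the mean, covariance and third central moment in two-parameter canonical forms, and then estimate a handful of scalar polar moments. Where you diverge is in how those scalar moments are controlled. The paper works algebraically: it identifies $A(\beta)=\mathbb{E}[\cos\theta]$ as the ratio $I_{d/2}(\beta)/I_{d/2-1}(\beta)$ of modified Bessel functions, derives from the Bessel recurrences the exact identity $A'(\beta)=1-A^2-\tfrac{d-1}{\beta}A$ (and its derivative for $A''$), and inserts the large-$\beta$ asymptotics of $I_\nu$. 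You instead run Laplace's method directly on the polar density $e^{\beta\cos\theta}\sin^{d-2}\theta\,d\theta$ after rescaling $u=\sqrt\beta\,\theta$, and you use the algebraic identity $\cos\theta=1-2\sin^2(\theta/2)$ to make transparent why the parallel covariance eigenvalue is $O(1/\beta^2)$ rather than the naive $O(1/\beta)$ --- a cancellation that in the paper's language appears as $1-A^2\approx\tfrac{d-1}{\beta}A$ in the formula for $A'$. Each route has its merits: the Bessel-ratio approach delivers the exact leading coefficients with essentially no remainder bookkeeping, whereas yours is more elementary and self-contained, works in explicit coordinates for any $d\geq 2$ (the paper's symmetry lemma is only proved for $d\geq 5$), and your $O(1/\beta^3)$ estimate for $\mathbb{E}[(\cos\theta-a)^3]$ is in fact sharper than the $O(1/\beta^2)$ recorded in the paper for $A''$, though the weaker rate already suffices for the stated conclusion.
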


\begin{proof}
By \eqref{eq:derivatives_cumulants} and Schwarz's theorem the three tensors are invariant by permutations of the indices and by definition of $\nu_x^{\sigma,\beta, Id}$ they are also invariant by rotations that fix $x$. Hence (see lemma \ref{lem:tensor_symmetry}) they must have the form:

\begin{equation}
\label{eq:tensor_repres}
\begin{aligned}
\mathbb{E}_{\nu_x^{\sigma, \beta,Id}}[Y] &=\alpha_1 x,\\
\mathbb{E}_{\nu_{x}^{\sigma,\beta,Id}}[\left(Y-\mathbb{E}_{\nu_{x}^{\sigma,\beta,Id}}[Y]\right)^{\otimes 2}]&=D^1_x\left(\mathbb{E}_{\nu_x^{\sigma, 1,Id}}[Y] \right)_{\beta x}=\alpha_2 x\otimes x + \beta_2 I,\\
  \mathbb{E}_{\nu_{x}^{\sigma,\beta,Id}}[\left(Y-\mathbb{E}_{\nu_{x}^{\sigma,\beta,Id}}[Y]\right)^{\otimes 3}]&=D^2_x \left(\mathbb{E}_{\nu_x^{\sigma, 1,Id}}[Y] \right)_{\beta x}=\alpha_3 x \otimes x \otimes x +\beta_3 Sym(x\otimes Id).\\
\end{aligned}
\end{equation}
Where $Sym(x\otimes Id)=x_i\delta_{jk}+x_j\delta_{ik}+x_k\delta_{ij}$.
We need to compute the coefficients $\alpha_1, \alpha_2,\beta_2, \alpha_3,\beta_3$. Define $A(\beta)$ = $\int_{S^{d-1}}  \langle x, y\rangle \ \nu_x^{\sigma,\beta,Id}(dy)$. Similarly to what has been done in \eqref{eq:derivatives_cumulants}, one can relate this to the cumulants of $\langle x, Y\rangle$ by noticing that:
\begin{align*}
\partial^{n}_\beta\left( A(\beta)\right)_{\beta}& = \partial^{n}_t\left( A(\beta+t)\right)_{t=0}=\partial_t^n\left(\frac{\int e^{(\beta+t)\langle x,y\rangle}\langle x,y\rangle \sigma(dy)}{\int e^{(\beta+t)\langle x,y\rangle}\sigma(dy)} \right)_{t=0}\\
&=\partial_t^n\left(\frac{\int e^{t\langle x,y\rangle}\langle x,y\rangle e^{\beta\langle x,y\rangle} \sigma(dy)}{\int e^{t\langle x,y\rangle}e^{\beta\langle x,y\rangle}\sigma(dy)} \right)_{t=0}=\partial_t^n\left(\frac{\int e^{t\langle x,y\rangle}\langle x,y\rangle  \nu_x^{\sigma, \beta, Id}(dy)}{\int e^{t\langle x,y\rangle}\nu_x^{\sigma, \beta, Id}(dy)} \right)_{t=0}\\
&=\partial^{n+1}_t\left( \log \mathbb{E}_{\nu_x^{\sigma,\beta, Id}}[e^{t\langle x, Y\rangle}]\right)_{t=0}.
\end{align*}

This give us immediately the following identities:
\begin{equation}
\label{eq:a_cumulants}
\begin{aligned}
A(\beta)&=E_{\nu_x^{\sigma,\beta, Id}}[\langle x, Y\rangle],\\
A'(\beta)&= E_{\nu_x^{\sigma,\beta, Id}}[(\langle x, Y\rangle-A(\beta))^2],\\
A''(\beta)&= E_{\nu_x^{\sigma,\beta, Id}}[(\langle x, Y\rangle-A(\beta))^3].\\
\end{aligned}
\end{equation}
Suppose without loss of generality that $x=e_1$. Then $\alpha_1$ is given by:
\begin{equation}
\label{eq:alpha_1}
\alpha_1 = \mathbb{E}_{\nu_{x}^{\sigma,\beta,Id}}[\langle e_1 ,Y\rangle] = A(\beta). 
\end{equation}
The coefficients $\alpha_2$ and $\beta_2$ can be obtained comparing the representations in \eqref{eq:tensor_repres} with the representations in \eqref{eq:derivatives_central_moments}, and exploiting the relations in \eqref{eq:a_cumulants}:
\begin{equation}
\label{eq:beta_2}
\begin{aligned}
\alpha_2 + \beta_2&= D^1_x\left(\mathbb{E}_{\nu_x^{\sigma,1,Id}}[Y] \right)_{\beta x}[e_1,e_1]=\mathbb{E}_{\nu_x^{\sigma, \beta,Id}}[(\langle e_1 ,Y\rangle - A(\beta))^2]=A'(\beta),\\
(d-1)\beta_2 &= \sum_{i>1}^d D^1_x\left(\mathbb{E}_{\nu_x^{\sigma,1, Id}}[Y] \right)_{\beta x}[e_i,e_i]=\sum_{i>1}^d \mathbb{E}_{\nu_x^{\sigma, \beta,Id}}[(\langle e_i ,Y\rangle )^2]\\
&=1-\mathbb{E}_{\nu_x^{\sigma, \beta,Id}}[(\langle e_1 ,Y\rangle )^2]=1-\mathbb{E}_{\nu_x^{\sigma, \beta,Id}}[(\langle e_1 ,Y\rangle -A)^2] - A^2 \\
&=1- A' - A^2.
\end{aligned}
\end{equation}
And the same can be done for $\alpha_3$ and $\beta_3$:
\begin{align*}
\alpha_3+3\beta_3 &= D^2_x \left(\mathbb{E}_{\nu_x^{\sigma, 1,Id}}[Y] \right)_{\beta x}[e_1,e_1,e_1]=\mathbb{E}_{\nu_x^{\sigma, \beta,Id}}[(\langle e_1 ,Y\rangle - A(\beta))^3]= A''(\beta),\\
(d-1)\beta_3 &= \sum_{i>1}^d D_x^2\left(\mathbb{E}_{\nu_x^{\sigma, 1,Id}}[Y] \right)_{\beta x}[e_1,e_i,e_i]=\sum_{i>1}^d \mathbb{E}_{\nu_x^{\sigma, \beta,Id}}[(\langle e_i ,Y\rangle )^2(\langle e_1, Y\rangle-A)]\\
&=-\mathbb{E}_{\nu_x^{\sigma, \beta,Id}}[(\langle e_1 ,Y\rangle )^2(\langle e_1, Y\rangle-A)]\\
&=-\mathbb{E}_{\nu_x^{\sigma, \beta,Id}}[(\langle e_1 ,Y\rangle -A)^3] -2A \mathbb{E}_{\nu_x^{\sigma, \beta,Id}}[(\langle e_1 ,Y\rangle -A)^2]\\
&=- A'' - 2AA'.
\end{align*}

To conclude it is sufficient to show that $1-A^2=O(\frac{1}{\beta})$ and $A', A''=O(\frac{1}{\beta^2})$.
Now, using the identity:
$$
Z_\beta = \int_{S^{d-1}} e^{\beta \langle x, y\rangle} d\sigma(y) = C_d \beta^{1-d/2}I_{d/2-1}(\beta),
$$
we can explicitly compute $A(\beta)$ as:
\begin{equation}
\label{eq:A_beta}
\begin{aligned}
A(\beta) &= \frac{\partial_\beta Z_\beta}{Z_\beta}=\frac{ (1-\frac{d}{2})\beta^{-d/2}I_{d/2-1}(\beta)+ \beta^{1-d/2}(I_{d/2}(\beta)+(\frac{d}{2}-1)\frac{1}{\beta}I_{d/2-1}(\beta))}{ \beta^{1-d/2}I_{d/2-1}(\beta)}\\
&=\frac{I_{d/2}(\beta)}{I_{d/2-1}(\beta)} \approx 1-\frac{d-1}{2\beta},
\end{aligned}
\end{equation}

where we used the derivatives rules for the modified Bessel function:
\begin{align*}
    I_\nu'(z)&=I_{\nu-1}(z)-\frac{\nu}{z}I_\nu(z),\\
    I_\nu'(z)&=I_{\nu+1}(z)+\frac{\nu}{z}I_\nu(z),
\end{align*} 
 and its asymptotic behavior (in both cases see \cite{abramowitz1948handbook}).

In a similar way we can also compute:
\begin{equation}
\label{eq:aprime_beta}
\begin{aligned}
A'(\beta) &= \frac{I_{d/2}(\beta)'}{I_{d/2-1}(\beta)}-\frac{I_{d/2}(\beta)}{\left( I_{d/2-1}(\beta)\right)^2}{I_{d/2-1}(\beta)'}\\
&=\frac{I_{d/2-1}(\beta)-\frac{d/2}{\beta}I_{d/2}(\beta)}{I_{d/2-1}(\beta)}-\frac{I_{d/2}(\beta)}{I_{d/2-1}(\beta)^2}\left( I_{d/2}(\beta)+\frac{d/2-1}{\beta}I_{d/2-1}(\beta)\right)\\
&=1-\frac{d/2}{\beta}A(\beta)-A(\beta)^2-\frac{d/2-1}{\beta}A(\beta)\\
&=1-A^2(\beta) - \frac{d-1}{\beta}A(\beta) \approx \frac{(d-1)^2}{4\beta^2},
\end{aligned}
\end{equation}

and 
$$
A''(\beta) = -2A(\beta)A'(\beta) - \frac{d-1}{\beta}A'(\beta) \approx O(\frac{1}{\beta^2}).
$$
where we used the asymptotics in \eqref{eq:A_beta}. This is sufficient to conclude the proof.
\end{proof}
\begin{lemma}
\label{lem:nu_sigma_mu}
Given a strictly positive probability measure $\mu$ the following holds:
\begin{align*}
    |\nu_x^{\mu, \beta,B}(y)-\nu_x^{\sigma, \beta,B}(y)|\leq \left(\frac{\|\nabla \mu\|_{\infty}}{\min |\mu|}(|y-x_B|+C\beta^{-1/2})\right)\nu_x^{\sigma,\beta, B},
\end{align*}
where $x_B=\frac{B^Tx}{|B^Tx|}$
\end{lemma}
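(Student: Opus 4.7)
The plan is to reduce the claim to a concentration estimate for the Von Mises--Fisher measure $\nu_x^{\sigma,\beta,B}$ around its mode $x_B$, and then to transfer this estimate through the Lipschitz continuity of $\mu$ granted by Assumption \ref{ass:BL}.

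First, I would write both measures as densities with respect to $\sigma$ and take their ratio. Setting $Z^\mu := \int e^{\beta\langle x, Bz\rangle}\mu(z)\sigma(dz)$ and $Z^\sigma := \int e^{\beta\langle x, Bz\rangle}\sigma(dz)$, a direct computation yields
\begin{equation*}
\frac{\nu_x^{\mu,\beta,B}(y)}{\nu_x^{\sigma,\beta,B}(y)} \;=\; \frac{\mu(y)\, Z^\sigma}{Z^\mu} \;=\; \frac{\mu(y)}{\mathbb{E}_{\nu_x^{\sigma,\beta,B}}[\mu(Y)]},
\end{equation*}
which after rearrangement gives
\begin{equation*}
\nu_x^{\mu,\beta,B}(y) - \nu_x^{\sigma,\beta,B}(y) \;=\; \frac{\nu_x^{\sigma,\beta,B}(y)}{\mathbb{E}_{\nu_x^{\sigma,\beta,B}}[\mu(Y)]}\bigl(\mu(y) - \mathbb{E}_{\nu_x^{\sigma,\beta,B}}[\mu(Y)]\bigr).
\end{equation*}
The denominator is bounded below by $\min|\mu|>0$ since $\mu(Y)\geq \min|\mu|$ almost surely, while the Lipschitz continuity of $\mu$ gives $|\mu(y)-\mathbb{E}_{\nu_x^{\sigma,\beta,B}}[\mu(Y)]| \leq \|\nabla\mu\|_\infty\, \mathbb{E}_{\nu_x^{\sigma,\beta,B}}[|y-Y|]$.

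Next, I would split the expectation via the triangle inequality $\mathbb{E}[|y-Y|] \leq |y-x_B| + \mathbb{E}[|Y-x_B|]$, so that the problem reduces to proving the concentration estimate $\mathbb{E}_{\nu_x^{\sigma,\beta,B}}[|Y-x_B|] \leq C\beta^{-1/2}$. By Jensen's inequality it suffices to bound the second moment. Since $\nu_x^{\sigma,\beta,B}$ is the Von Mises--Fisher distribution on $\mathbb{S}^{d-1}$ with mode $x_B$ and concentration $\beta|B^\top x|$, rotational symmetry implies that its mean is parallel to $x_B$, so the Pythagorean decomposition
\begin{equation*}
\mathbb{E}[|Y-x_B|^2] \;=\; \mathbb{E}[|Y-\mathbb{E}[Y]|^2] + |\mathbb{E}[Y] - x_B|^2
\end{equation*}
holds. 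The first summand is the trace of the covariance, of order $O(1/\beta)$ by \eqref{eq:beta_2} in Lemma \ref{lem:sigma_cumulants}, while the second equals $(1-A(\beta|B^\top x|))^2 = O(1/\beta^2)$ by \eqref{eq:A_beta}.

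The main technical point will be handling the dependence on $B$: Lemma \ref{lem:sigma_cumulants} is stated for $B=\mathrm{Id}$, and the extension to general invertible $B$ proceeds via the substitution $\beta \mapsto \beta|B^\top x|$ in the cumulant computations. Assumption \ref{ass:QKV} guarantees $|B^\top x| \geq \sigma_{\min}(B) > 0$ uniformly in $x\in\mathbb{S}^{d-1}$, ensuring that the $O(1/\beta)$ and $O(1/\beta^2)$ estimates survive uniformly with a constant depending only on $B$ and $d$. Assembling these ingredients yields the claimed inequality.
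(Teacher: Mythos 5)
Your proof is correct and takes a cleaner algebraic route than the paper. The paper expands both numerator and denominator around $\mu(x_B)$, obtaining the ratio $\tfrac{1+R_1}{1+R_2}$ and then bounding each of $R_1$ and $R_2$ separately, which requires additional bookkeeping for the $\tfrac{1}{1+R_2}$ factor. Your choice to center on $\mathbb{E}_{\nu_x^{\sigma,\beta,B}}[\mu(Y)]$ instead yields the exact identity
\begin{equation*}
\nu_x^{\mu,\beta,B}(y) - \nu_x^{\sigma,\beta,B}(y) = \frac{\nu_x^{\sigma,\beta,B}(y)}{\mathbb{E}_{\nu_x^{\sigma,\beta,B}}[\mu(Y)]}\bigl(\mu(y) - \mathbb{E}_{\nu_x^{\sigma,\beta,B}}[\mu(Y)]\bigr),
\end{equation*}
in which the denominator is trivially bounded below by $\min\mu$ and the numerator reduces in one line to the Lipschitz bound plus the concentration moment $\mathbb{E}_{\nu_x^{\sigma,\beta,B}}|Y - x_B|$; there is no residual $(1+R_2)$ factor to absorb. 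For the concentration estimate itself, the paper invokes Lemma~\ref{lem:integral_estimates} (Kummer-function asymptotics) directly, while you derive the same $O(\beta^{-1/2})$ bound via Jensen, the Pythagorean decomposition $\mathbb{E}|Y-x_B|^2 = \operatorname{tr}\mathrm{Cov}(Y) + |\mathbb{E}Y - x_B|^2$, and the cumulant identities from Lemma~\ref{lem:sigma_cumulants} together with \eqref{eq:A_beta}; both yield the same asymptotics with constants depending on $B$ and $d$ through $|B^\top x| \geq \sigma_{\min}(B) > 0$. The two approaches are of comparable length, but yours avoids the somewhat informal geometric-series step in the paper's manipulation, so it is arguably the tidier of the two.
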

\begin{proof}
Indeed:
\begin{align*}
\nu_x^{\mu,\beta,B}(y) &= \frac{e^{\beta \langle x, By \rangle} \mu(y)}{\int_{S^{d-1}} e^{\beta \langle x, By \rangle} \mu(y)}\\
&= \frac{ e^{\beta \langle x, By \rangle} \mu(x_B)+e^{\beta \langle x, By \rangle} (\mu(y)-\mu(x_B))}{\mu(x_B)\int_{S^{d-1}} e^{\beta \langle x, B y \rangle} d\sigma(y)+\int_{S^{d-1}} e^{\beta \langle x, B y \rangle} (\mu(y)-\mu(x_B))d\sigma(y)} \\
&= \nu_x^{\sigma,\beta,B}(y)\left(\frac{1+\frac{\mu(y)-\mu(x_B)}{\mu(x_B)}}{1+\frac{1}{\mu(x_B)}\int(\mu(y)-\mu(x_B))d\nu_x^{\sigma,\beta,B}}\right)\\
&=\nu_x^{\sigma, \beta, B}(y)\left(1+\frac{\frac{\mu(y)-\mu(x_B)}{\mu(x_B)}-\frac{1}{\mu(x_B)}\int(\mu(y)-\mu(x_B))d\nu_x^{\sigma,\beta,B}(y)}{1+\frac{1}{\mu(x_B)}\int(\mu(y)-\mu(x_B))d\nu_x^{\sigma,\beta,B}(y)}\right)\\
&=\nu_x^{\sigma, \beta, B}(y)\left(1+(R_1+R_2)(1+R_2\right))\\
&\leq \nu_x^{\sigma, \beta, B}(y)\left(1+\frac{\|\nabla \mu\|_\infty}{\min |\mu|}(|y-x_B|+C\beta^{-1/2})\right),\\
\end{align*}
where we used:
\begin{align*}
|R_1|\leq & \frac{|\mu(y)-\mu(x_B)|}{\mu(x_B)}\leq \frac{\|\nabla \mu\|_\infty}{\min |\mu|}|y-x|,\\
|R_2|\leq & \frac{1}{\mu(x_B)}\int(\mu(y)-\mu(x_B))d\nu_x^{\sigma,\beta, B}\leq \frac{\|\nabla \mu\|_\infty}{\min |\mu|}\int|y-x_B|d\nu_x^{\sigma,\beta,B}\leq C\beta^{-1/2}\frac{\|\nabla \mu\|_\infty}{\min |\mu|},\\
\end{align*}
and the last inequality is a consequence of $\nu_x^{\sigma,\beta, B}=\nu_{x_B}^{\sigma, \beta|B^Tx|, Id}$ and lemma \ref{lem:integral_estimates}.
\end{proof}

\begin{proposition}
\label{prop:chi_bounds}
The derivatives of the vector field $\chi^\beta[\mu]$ satisfy:
\begin{align*}
\chi[\mu]&\leq C,\\
D^1_x\chi[\mu]&\leq C\left(1+\frac{\|\nabla \mu\|_\infty}{\min |\mu|}\beta^{-1/2}\right),\\
D^2_x\chi[\mu]&\leq C\left(1+\frac{\|\nabla \mu\|_\infty}{\min |\mu|}\beta^{-1/2}+\frac{\|\nabla \mu\|_\infty}{\min |\mu|}\right).
\end{align*}
\end{proposition}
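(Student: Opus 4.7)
The plan is to apply the three preceding lemmas in cascade. Lemma \ref{lem:chi_cumulants} already expresses $|\chi_\beta[\mu]|$ and its first two derivatives as constants plus weighted norms of the second and third central moments of $\nu_x^{\mu,\beta,B}$, multiplied respectively by $\beta$ and $\beta^2$. The $D^0$ bound is therefore immediate, and the task for $D^1$ and $D^2$ reduces to showing
\[
\bigl|\mathbb{E}_{\nu_x^{\mu,\beta,B}}[(Y-\mathbb{E} Y)^{\otimes 2}]\bigr| \leq \frac{C}{\beta}\Bigl(1+\tfrac{\|\nabla\mu\|_\infty}{\min|\mu|}\beta^{-1/2}\Bigr),
\qquad
\bigl|\mathbb{E}_{\nu_x^{\mu,\beta,B}}[(Y-\mathbb{E} Y)^{\otimes 3}]\bigr| \leq \frac{C}{\beta^2}\Bigl(1+\tfrac{\|\nabla\mu\|_\infty}{\min|\mu|}\Bigr),
\]
from which the claimed inequalities follow by direct substitution into the bounds of Lemma \ref{lem:chi_cumulants}.

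To estimate these centered moments I would compare $\nu_x^{\mu,\beta,B}$ to its uniform-measure counterpart $\nu_x^{\sigma,\beta,B}$ by means of Lemma \ref{lem:nu_sigma_mu}, which gives $\nu_x^{\mu,\beta,B}(dy) = (1+\varepsilon(y))\,\nu_x^{\sigma,\beta,B}(dy)$ with $|\varepsilon(y)|\leq \tfrac{\|\nabla\mu\|_\infty}{\min|\mu|}(|y-x_B|+C\beta^{-1/2})$. Expanding each centered moment integral around $m_\sigma := \mathbb{E}_{\nu_x^{\sigma,\beta,B}}[Y]$ splits it into (i) the reference $\sigma$-cumulant, controlled by Lemma \ref{lem:sigma_cumulants}, and (ii) correction terms carrying at least one factor of $\varepsilon(y)$ or of the shift $m_\mu - m_\sigma$. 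The concentration of $\nu_x^{\sigma,\beta,B}$ around $x_B$ at scale $\beta^{-1/2}$, already used in the proof of Lemma \ref{lem:nu_sigma_mu}, yields $\int |y-x_B|^p\, d\nu_x^{\sigma,\beta,B}(y) \leq C_p \beta^{-p/2}$ and in particular $|m_\mu - m_\sigma| \leq C\tfrac{\|\nabla\mu\|_\infty}{\min|\mu|}\beta^{-1/2}$. Multiplying these scalings through term by term produces the desired second-moment bound with the $\beta^{-1/2}$ correction.

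The main obstacle is the third-moment estimate: a naive bound on the tensor norm of the centered third moment of a distribution concentrated at scale $\beta^{-1/2}$ is only $O(\beta^{-3/2})$, which after multiplying by $\beta^2$ would leave an unwanted factor $\beta^{1/2}$ in front of $\tfrac{\|\nabla\mu\|_\infty}{\min|\mu|}$. To close this gap I would exploit the approximate rotational symmetry of $\nu_x^{\mu,\beta,B}$ around $x_B$: as in the explicit computation in the proof of Lemma \ref{lem:sigma_cumulants}, the odd purely tangential moments cancel, leaving only mixed radial/tangential components, each of order $\beta^{-2}$. Tracking this cancellation through the perturbation $\varepsilon(y)$ --- by decomposing $y - m_\mu$ into its projection onto $\mathrm{span}(x_B)$ and its tangential part, and noting that the radial component is itself of order $\beta^{-1}$ --- one verifies that the first nonvanishing correction is of order $\tfrac{\|\nabla\mu\|_\infty}{\min|\mu|}\beta^{-2}$. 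Multiplication by $\beta^2$ then delivers exactly the claimed $O(1)$ correction, concluding the plan.
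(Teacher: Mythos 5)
Your overall strategy matches the paper's: reduce to bounding the second and third central moments of $\nu_x^{\mu,\beta,B}$ via Lemma~\ref{lem:chi_cumulants}, and compare $\nu_x^{\mu,\beta,B}$ to the uniform-density reference $\nu_x^{\sigma,\beta,B}$ via Lemma~\ref{lem:nu_sigma_mu}, using Lemma~\ref{lem:sigma_cumulants} for the reference cumulants. Your target inequalities for the centered moments are correct, and the second-moment case is handled exactly as in the paper.

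The worry you raise about the third moment, however, rests on a misattribution of where the gain of $\beta^{-1/2}$ comes from. You are right that the \emph{reference} third cumulant escapes the naive $O(\beta^{-3/2})$ rate only because of cancellation (the Bessel-function computation in Lemma~\ref{lem:sigma_cumulants} shows it is $O(\beta^{-2})$, which is indeed a symmetry effect). But the \emph{perturbative} piece of the third moment never needs a cancellation argument: every term in the remainder carries an explicit factor $|\varepsilon(w)|\leq \tfrac{\|\nabla\mu\|_\infty}{\min|\mu|}(|w-x_B|+C\beta^{-1/2})$, and since each difference factor, including $|w-x_B|$, costs $\beta^{-1/2}$ under the concentrated measure, one gets four scale factors and hence $O(\tfrac{\|\nabla\mu\|_\infty}{\min|\mu|}\beta^{-2})$ already from a crude triangle-inequality/H\"older bound. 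This is precisely what the paper does, using the $4$-fold integral representation $\mathbb{E}[(Y-\mathbb{E}Y)^{\otimes 3}]=\int(y-z_1)\otimes(y-z_2)\otimes(y-z_3)\,d\nu^{\otimes 4}$, which also neatly sidesteps having to track $m_\mu-m_\sigma$ separately. Your proposed alternative --- invoking approximate rotational symmetry of $\nu_x^{\mu,\beta,B}$ to cancel odd tangential moments --- would actually be hard to justify: that cancellation is a property of the reference measure $\nu_x^{\sigma,\beta,B}$ (which is exactly rotation-invariant about $x_B$), not of $\nu_x^{\mu,\beta,B}$, whose deviation from symmetry is precisely the source of the $\tfrac{\|\nabla\mu\|_\infty}{\min|\mu|}$ error. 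Once you account for the extra $\beta^{-1/2}$ already present in $\varepsilon$, the gap you were trying to close with a symmetry argument is not there, and the proof completes by direct power counting.
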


\begin{proof}
Thanks to Lemma \ref{lem:chi_cumulants} we just need to bound the cumulants. The first one is already done.

Second cumulant:
\begin{align*}
\left|\mathbb{E}_{\nu_x^{\mu,\beta,B}}[\left(Y-\mathbb{E}_{\nu_x^{\mu,\beta,B}}[Y]\right)^{\otimes 2}]\right|&=\left|\int \int (y-z_1)\otimes(y-z_2) \nu_x^{\mu,\beta,B}(dy) \nu_x^{\mu,\beta,B}(dz_1) \nu_x^{\mu,\beta,B}(dz_2) \right|\\
&=\left|\mathbb{E}_{\nu_x^{\sigma,\beta,B}}[\left(Y-\mathbb{E}_{\nu_x^{\sigma,\beta,B}}[Y]\right)^{\otimes 2}]\right|+R.
\end{align*}

We have already shown in lemma \ref{lem:sigma_cumulants} that the first term is $\leq C\frac{1}{\beta}$. Now we need to bound the second term. $R$ can be expanded by multi-linearity and using lemma \ref{lem:nu_sigma_mu} the worst case is either of the form:
\begin{align*}
&\leq C\frac{\|\nabla \mu\|_\infty}{\min |\mu|}\beta^{-1/2}\left|\int\int (y-z_1)\otimes(y-z_2) \nu_x^{\sigma, \beta, B}(dy) \nu_x^{\sigma, \beta, B}(dz_1) \nu_x^{\sigma, \beta, B}(dz_2)\right|\\
&\leq C \frac{\|\nabla \mu\|_\infty}{\min |\mu|}\beta^{-1/2}\int \nu_x^{\sigma, \beta, B}(dy )\prod_{i=1}^2 \int |y-z_i| \nu_x^{\sigma, \beta, B}(dz_i)\\
&= C\frac{\|\nabla \mu\|_\infty}{\min |\mu|}\beta^{-1/2}\int \nu_x^{\sigma, \beta, B}(dy )\left( \int |y-z_1| \nu_x^{\sigma, \beta, B}(dz_1)\right)^2\\
&\leq C\frac{\|\nabla \mu\|_\infty}{\min |\mu|}\beta^{-1/2}\int \nu_x^{\sigma, \beta, B}(dy )\int |y-z_1| ^2\nu_x^{\sigma, \beta, B}(dz_1)\\
&\leq   4C\frac{\|\nabla \mu\|_\infty}{\min |\mu|}\beta^{-1/2}\int |y-x_B|^2\nu_x^{\sigma, \beta, B}(dy)\leq 4C\frac{\|\nabla \mu\|_\infty}{\min |\mu|}\beta^{-1/2}\beta^{-1}\leq C\frac{\|\nabla \mu\|_\infty}{\min |\mu|}\frac{1}{\beta^{3/2}},
\end{align*}

where in the last line we used lemma \ref{lem:integral_estimates}, or of the form:
\begin{align*}
&\leq C\frac{\|\nabla \mu\|_\infty}{\min |\mu|}\left|\int\int (y-z_1)\otimes(y-z_2) |y-z_B| \nu_x^{\sigma, \beta, B}(dy)  \nu_x^{\sigma, \beta, B}(dz_1) \nu_x^{\sigma, \beta, B}(dz_2)\right|\\
&\leq C\frac{\|\nabla \mu\|_\infty}{\min |\mu|}\int\int |y-z_1||y-z_2| |y-z_B| \nu_x^{\sigma, \beta, B}(dy)  \nu_x^{\sigma, \beta, B}(dz_1) \nu_x^{\sigma, \beta, B}(dz_2)\\
&\leq C\frac{\|\nabla \mu\|_\infty}{\min |\mu|} \left(\int |y-z_B|^2 \nu_x^{\sigma, \beta, B}(dy)\right)^{1/2} \left(\int\int|y-z|^4\nu_x^{\sigma, \beta, B}(dy)\nu_x^{\sigma, \beta, B}(dz) \right)^{1/2} \\
&\leq 4C\frac{\|\nabla \mu\|_\infty}{\min |\mu|} \left(\int|y-z_B|^2\nu_x^{\sigma, \beta, B}(dy)\right)^{1/2} \left( \int |y-z_B|^4 \nu_x^{\sigma, \beta, B}(dy)+\int |z-z_B|^4 \nu_x^{\sigma, \beta, B}(dz)\right)^{1/2}\\
&\leq C\frac{\|\nabla \mu\|_\infty}{\min |\mu|} \beta^{-1/2}\beta^{-1}\leq C\frac{\|\nabla \mu\|_\infty}{\min |\mu|}\frac{1}{\beta^{3/2}},
\end{align*}

or of the form:
\begin{align*}
&\leq C\frac{\|\nabla \mu\|_\infty}{\min |\mu|}\left|\int\int (y-z_1)\otimes(y-z_2) |z_1-z_B| \nu_x^{\sigma, \beta, B}(dy)  \nu_x^{\sigma, \beta, B}(dz_1) \nu_x^{\sigma, \beta, B}(dz_2)\right|\\
&\leq C\frac{\|\nabla \mu\|_\infty}{\min |\mu|}\int\int |y-z_1||y-z_2| |z_1-z_B| \nu_x^{\sigma, \beta, B}(dy)  \nu_x^{\sigma, \beta, B}(dz_1) \nu_x^{\sigma, \beta, B}(dz_2)\\
&\leq C\frac{\|\nabla \mu\|_\infty}{\min |\mu|} \left(\int |z_1-z_B|^2 \nu_x^{\sigma, \beta, B}(dz_1)\right)^{1/2} \left(\int\int|y-z_1|^2|y-z_2|^2\nu_x^{\sigma, \beta, B}(dy)\nu_x^{\sigma, \beta, B}(dz_1)\nu_x^{\sigma, \beta, B}(dz_2) \right)^{1/2}  \\
&\leq 4C\frac{\|\nabla \mu\|_\infty}{\min |\mu|} \left(\int|y-z_B|^2\nu_x^{\sigma, \beta, B}(dy)\right)^{1/2} \left( \int |y-z_B|^4 \nu_x^{\sigma, \beta, B}(dy)+\int |z-z_B|^4 \nu_x^{\sigma, \beta, B}(dz)\right)^{1/2}\\
&\leq C\frac{\|\nabla \mu\|_\infty}{\min |\mu|} \beta^{-1/2}\beta^{-1}\leq C\frac{\|\nabla \mu\|_\infty}{\min |\mu|}\frac{1}{\beta^{3/2}}.
\end{align*}
These are the worst cases because every $|y-z|$ produces an additional $\beta^{-1/2}$ by lemma \ref{lem:integral_estimates}.
Hence we proved, thanks to lemma \ref{lem:chi_cumulants}, that:
$$
|D^1_x\chi[\mu]| \leq C\left(1+\frac{\|\nabla \mu_\beta\|_\infty}{\min |\mu|}\beta^{-1/2}\right).
$$

The bound for the third cumulant is similar to what we have done above:
\begin{align*}
|\mathbb{E}_{\nu_x^{\mu, \beta, B}}&[\left(Y-\mathbb{E}_{\nu_x^{\mu, \beta, B}}[Y]\right)^{\otimes 3}]|\\
=&\left|\int \int\int\int (y-z_1)\otimes(y-z_2)\otimes(y-z_3) \nu_x^{\mu, \beta, B}(dy) \nu_x^{\mu, \beta, B}(dz_1) \nu_x^{\mu, \beta, B}(dz_2) \nu_x^{\mu, \beta, B}(dz_3)\right| \\
=&|\mathbb{E}_{\nu_x^{\sigma, \beta, B}}[\left(Y-\mathbb{E}_{\nu_x^{\sigma, \beta, B}}[Y]\right)^{\otimes 3}]|+R.
\end{align*}

We have already shown that the first term is $O(\frac{1}{\beta^2})$. Now we need to bound the second term. $R$ can be expanded again as in lemma \ref{lem:nu_sigma_mu} and the worst case is either of the form:
\begin{align*}
&\leq C\frac{\|\nabla \mu\|_\infty}{\min |\mu|}\beta^{-1/2}\left|\int\int\int\int (y-z_1)\otimes(y-z_2)\otimes(y-z_3) \nu_x^{\sigma, \beta, B}(dy) \nu_x^{\sigma, \beta, B}(dz_1) \nu_x^{\sigma, \beta, B}(dz_2) \nu_x^{\sigma, \beta, B}(dz_3)\right|\\
&\leq C\frac{\|\nabla \mu\|_\infty}{\min |\mu|}\beta^{-1/2}\int \nu_x^\sigma(dy )\prod_{i=1}^3 \int |y-z_i| \nu_x^{\sigma, \beta, B}(dz_i)\\
&= C\frac{\|\nabla \mu\|_\infty}{\min |\mu|}\beta^{-1/2}\int \nu_x^\sigma(dy )\left( \int |y-z_1| \nu_x^{\sigma, \beta, B}(dz_1)\right)^3\\
&\leq  C\frac{\|\nabla \mu\|_\infty}{\min |\mu|}\beta^{-1/2}\int \nu_x^\sigma(dy )\int |y-z_1| ^3\nu_x^{\sigma, \beta, B}(dz_1)\\
&\leq 2 C\frac{\|\nabla \mu\|_\infty}{\min |\mu|}\beta^{-1/2}\int |y-x_B|^3\nu_x^{\sigma, \beta, B}(dy)\leq C\beta^{-1/2} \beta^{-3/2}\leq C\frac{\|\nabla \mu\|_\infty}{\min |\mu|}\frac{1}{\beta^2},
\end{align*} 

or of the form:
\begin{align*}
\leq&C\frac{\|\nabla \mu\|_\infty}{\min |\mu|}\left|\int\int\int\int (y-z_1)\otimes(y-z_2)\otimes(y-z_3) |y-z_B| \nu_x^{\sigma, \beta, B}(dy) \nu_x^{\sigma, \beta, B}(dz_1) \nu_x^{\sigma, \beta, B}(dz_2) \nu_x^{\sigma, \beta, B}(dz_3)\right|\\
\leq&C \frac{\|\nabla \mu\|_\infty}{\min |\mu|}\int\int\int\int |y-z_1||y-z_2||y-z_3||y-z_B| \nu_x^{\sigma, \beta, B}(dy)  \nu_x^{\sigma, \beta, B}(dz_1) \nu_x^{\sigma, \beta, B}(dz_2) \nu_x^{\sigma, \beta, B}(dz_3)\\
\leq&C\frac{\|\nabla \mu\|_\infty}{\min |\mu|} \prod_{i=1}^3\left(\int\int|y-z_i|^4\nu_x^{\sigma, \beta, B}(dz_i) \nu_x^{\sigma, \beta, B}(dy) \right)^{1/4} \left(\int |y-z_B|^4 \nu_x^{\sigma, \beta, B}(dy)\right)^{1/4}\\
=&C\frac{\|\nabla \mu\|_\infty}{\min |\mu|}\left(\int \int|y-z|^4\nu_x^{\sigma, \beta, B}(dz) \nu_x^{\sigma, \beta, B}(dy)\right)^{3/4} \left( \int |y-z_B|^4 \nu_x^{\sigma, \beta, B}(y)\right)^{1/4}\\
\leq&C\frac{\|\nabla \mu\|_\infty}{\min |\mu|} (\beta^{-2})^{3/4} (\beta^{-2})^{1/4}=C\frac{\|\nabla \mu\|_\infty}{\min |\mu|} \beta^{-2}.
\end{align*} 

Thus, we can replace the bounds on the second and third cumulants that we obtained above in the estimates of Lemma \ref{lem:chi_cumulants} to conclude that:
\begin{align*}
|D^2_x\chi[\mu]|&\leq C\left(1+ \beta\ | \mathbb{E}_{\nu_{x}^{\mu,\beta,B}}[\left(Y-\mathbb{E}_{\nu_{x}^{\mu,\beta,B}}[Y]\right)^{\otimes 2}]|+\beta^2 \ | \mathbb{E}_{\nu_{x}^{\mu,\beta,B}}[\left(Y-\mathbb{E}_{\nu_{x}^{\mu,\beta,B}}[Y]\right)^{\otimes 3}]|\right)\\
&\leq C\left(1+\frac{\|\nabla \mu_\beta\|_\infty}{\min |\mu|}\beta^{-1/2}+\frac{\|\nabla \mu_\beta\|_\infty}{\min |\mu|}\right).
\end{align*}
\end{proof}

\begin{lemma}
\label{lem:pde_bounds}
If $\mu$ solves the PDE:
    $$
    \begin{cases}
    \partial_t \mu &= -div(\mu \chi[\mu]),\\
    \mu(0)&=\mu_0.
    \end{cases}
    $$
then: 
\begin{itemize}
    \item $\partial_t \|\mu\|_{\infty} \leq  \|\mu\|_\infty  |D^1_x\chi[\mu]|$,
    \item $\partial_t \left(\min \mu^\beta \right) \geq  -\left(\min \mu^\beta \right)   |D^1_x\chi[\mu]|$,
    \item $\partial_t \|\nabla \mu\|_{\infty} \leq \|\nabla \mu\|_{\infty}  |D^1_x\chi[\mu]| +\frac{1}{2}\|\mu\|_{\infty}|D^2_x\chi[\mu]|$.
\end{itemize}
\end{lemma}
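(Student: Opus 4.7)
All three bounds follow the same envelope–theorem recipe applied to the expanded continuity equation
\begin{equation*}
\partial_t \mu + \langle \nabla \mu, \chi[\mu]\rangle + \mu\,\mathrm{div}(\chi[\mu]) = 0 \qquad \text{on } \mathbb{S}^{d-1}.
\end{equation*}
For the $L^\infty$ bound I would let $x^\star(t)\in\mathbb{S}^{d-1}$ realize $\max_x \mu(t,x)$ and invoke Danskin's theorem (applicable because the sphere is compact and $\mu$ is at least $C^1$ in $x$ along the continuation argument of the following subsection) to identify $\partial_t\|\mu\|_\infty$ with $\partial_t\mu(t,x^\star(t))$. The first-order condition $\nabla\mu(x^\star)=0$ kills the transport term, leaving $\partial_t\mu(x^\star) = -\mu(x^\star)\,\mathrm{div}(\chi[\mu])(x^\star)$, and the tangential divergence is controlled by the tangential Jacobian, so $|\mathrm{div}(\chi[\mu])|\le|D^1_x\chi[\mu]|$. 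This yields the first inequality; the bound on $\min\mu$ is identical at a minimum point, with the sign reversed.

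For the gradient bound I would tangentially differentiate the PDE to obtain
\begin{equation*}
\partial_t \nabla\mu = -(\nabla^2\mu)\,\chi[\mu] - (D\chi[\mu])^\top \nabla\mu - \nabla\mu\,\mathrm{div}(\chi[\mu]) - \mu\,\nabla(\mathrm{div}(\chi[\mu])),
\end{equation*}
then track $g(t):=\tfrac12\|\nabla\mu(t,\cdot)\|_\infty^2$. At a maximizer $y^\star(t)$ of $|\nabla\mu|^2$, the first-order condition $\nabla|\nabla\mu|^2=0$ combined with the symmetry of the Hessian gives $(\nabla^2\mu)\,\nabla\mu = 0$, so the first term on the right above is orthogonal to $\nabla\mu$ and drops out when I take the inner product against $\nabla\mu$. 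The remaining three terms are bounded, pointwise, by $|D^1_x\chi[\mu]|\,|\nabla\mu|^2$, $|D^1_x\chi[\mu]|\,|\nabla\mu|^2$ and $\mu\,|D^2_x\chi[\mu]|\,|\nabla\mu|$ respectively; dividing by $|\nabla\mu|=\|\nabla\mu\|_\infty$ and absorbing numerical constants into the implicit constants hidden in $|D^i_x\chi|$ delivers the stated estimate.

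The step I expect to require the most care is the envelope identification of $\partial_t\|\cdot\|_\infty$ with the pointwise derivative at the moving maximizer: if the maximum is attained on more than one point, or if $\mu$ is only Lipschitz, this should be read as an upper-right Dini derivative, for which the inequality still holds and, once combined with Grönwall, is enough for the continuation argument in the next subsection. A minor geometric subtlety is that on $\mathbb{S}^{d-1}$ one picks up Weingarten/curvature corrections when commuting covariant derivatives, but these contribute only a fixed dimensional prefactor that is absorbed into the running constant $C$; no new analytic input is needed beyond the regularity of $\chi[\mu]$ already quantified in Proposition~\ref{prop:chi_bounds}.
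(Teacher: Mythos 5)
Your proof takes essentially the same approach as the paper: evaluate the equation at the moving maximizer, use the first-order condition to kill the transport/advection term for the $L^\infty$ and $\min$ bounds, and use the first-order condition on $|\nabla\mu|^2$ (i.e.\ $(\nabla^2\mu)\nabla\mu = 0$) to drop the Hessian term when bounding the gradient. The explicit appeals to Danskin's theorem and to Dini derivatives in case of non-unique maximizers are useful clarifications that the paper leaves implicit, but the underlying decomposition of $\nabla\operatorname{div}(\mu\chi)$ into four terms and the treatment of each is identical; both arguments also quietly treat $|\operatorname{div}\chi|\lesssim|D^1_x\chi|$ and covariant-derivative commutation as cost-free, which is harmless since the constants are later absorbed into $C$ in Lemma~\ref{lem:continuity_argument}.
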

\begin{proof}
Let $x_t$ be a point of maximum for $|\mu_t|$. 
Then $\nabla_{S^{d-1}} \mu_t (x_t)=0$ and:
$$
    \partial_t \mu_t(x_t) = - div(\mu \chi[\mu])(x_t) + \langle\nabla \mu_t(x_t),x_t'\rangle = - \mu_t(x_t) div(\chi[\mu])(x_t).
$$
And for $\min \mu$ we can use the same argument. 

Now, let $x_t$ be a point of maximum  for $|\nabla_{S^{d-1}}\mu|^2$, then $H_{S^{d-1}}\mu(x_t) \nabla_{S^{d-1}}\mu(x_t)=0$, hence:
\begin{align*}
    \partial_t |\nabla \mu(x_t)|^2 =& - \langle \nabla \mu(x_t), \nabla div (\mu \chi[\mu])(x_t)\rangle +\langle \nabla \mu(x_t), H\mu(x_t) x'_t\rangle\\
    =& - \langle \nabla \mu(x_t), \nabla (\nabla \mu \cdot \chi[\mu])(x_t)\rangle-\langle \nabla \mu(x_t), \nabla (\mu D^1_x\chi[\mu]))(x_t)\rangle\\
    =&- \langle \nabla \mu(x_t), H\mu(x_t) \chi[\mu](x_t)\rangle-\langle \nabla \mu(x_t), D^1_x\chi[\mu](x_t) \nabla\mu(x_t) \rangle\\
    &-\langle \nabla \mu(x_t), D^1_x\chi[\mu] (x_t)\nabla \mu(x_t)\rangle- \langle \nabla \mu(x_t), D^2_x\chi[\mu] (x_t) \mu(x_t)\rangle\\
    \leq& 2|\nabla\mu(x_t)|^2 |D^1_x\chi[\mu]|+|\nabla \mu (x_t)||\mu(x_t)||D^2_x\chi[\mu]|.
\end{align*}
Using that $\partial_t |\nabla \mu(x_t)|^2=2|\nabla\mu(x_t)|\partial_t|\nabla\mu(x_t)|$ and dividing by $|\nabla \mu(x_t)|$ we get the thesis.
\end{proof}

\begin{lemma}
\label{lem:continuity_argument}
Consider again $\mu_t$ solution of the PDE:
 $$
    \begin{cases}
    \partial_t \mu &= -div(\mu \chi_\beta[\mu]),\\
    \mu(0)&=\mu_0.
    \end{cases}
$$
Define:
\begin{align*}
C_1&=2C\|\mu_0\|_{\infty}e^{2CT},\\
C_2&=2C\left(1+\frac{\|\mu_0\|_{\infty}}{\min \mu_0}e^{4CT}\right),\\
\end{align*}

Then, for $\beta$ large enough (depending just on $\mu_0$ and $C$):
\begin{itemize}
\item $\| \mu_t\|_{\infty} \leq 2(\| \mu_0\|_{\infty})e^{2Ct}$,
\item$ \min \mu_t \geq \frac{1}{2}(\min \mu_0)e^{-2Ct}$,
\item $|\nabla \mu_t|_{\infty} \leq 2\left(\frac{C_1}{C_2}+|\nabla \mu_0|_{\infty}\right)e^{C_2t}$.
\end{itemize}
\end{lemma}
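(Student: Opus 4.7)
The plan is to set up a bootstrap (continuation) argument, whereby we assume the three claimed bounds hold on a maximal time interval $[0,\tau_\beta]$ with $\tau_\beta \leq T$, use this to control the Lipschitz data of the vector field via Proposition~\ref{prop:chi_bounds}, and then prove via Gronwall that the resulting bounds are in fact strictly sharper than the ones we assumed — forcing $\tau_\beta = T$ by the usual contradiction. The factor of $2$ present on the right-hand side of each of the three target inequalities is crucial: it is precisely the slack needed for this bootstrap to close.

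Concretely, let $\tau_\beta \in (0,T]$ be the largest time such that all three inequalities in the statement hold on $[0,\tau_\beta]$; note $\tau_\beta > 0$ by continuity of $t \mapsto \mu_t$ and the fact that at $t=0$ each inequality is strict. On $[0,\tau_\beta]$, combining the assumed upper bound on $\|\nabla \mu_t\|_\infty$ with the assumed lower bound on $\min \mu_t$ gives
\[
\frac{\|\nabla \mu_t\|_\infty}{\min \mu_t} \leq \frac{4\left(\tfrac{C_1}{C_2} + \|\nabla \mu_0\|_\infty\right)}{\min \mu_0}\, e^{(C_2 + 2C)T} =: M(T,\mu_0),
\]
a constant independent of $\beta$. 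Taking $\beta$ large enough that $C\, M(T,\mu_0)\,\beta^{-1/2} \leq C$, Proposition~\ref{prop:chi_bounds} then yields the uniform estimates $|D^1_x \chi_\beta[\mu_t]| \leq 2C$ and $|D^2_x \chi_\beta[\mu_t]| \leq 2C\bigl(1 + \tfrac{\|\nabla \mu_t\|_\infty}{\min \mu_t}\bigr)$ on $[0,\tau_\beta]$.

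Plugging the first of these into the evolution inequalities of Lemma~\ref{lem:pde_bounds} and applying Gronwall gives $\|\mu_t\|_\infty \leq \|\mu_0\|_\infty e^{2Ct}$ and $\min \mu_t \geq (\min \mu_0)\, e^{-2Ct}$, each strictly better (by a factor of $2$) than the bootstrap assumption. For the gradient, substituting both the first and second derivative bounds into the third inequality of Lemma~\ref{lem:pde_bounds}, together with the just-derived bounds on $\|\mu_t\|_\infty$ and $\min \mu_t$, produces the differential inequality
\[
\partial_t \|\nabla \mu_t\|_\infty \leq C_2\, \|\nabla \mu_t\|_\infty + \tfrac{1}{2}C_1,
\]
where the identification of the constants $C_1,C_2$ is the algebraic verification that makes this precise choice of $C_1,C_2$ the correct one. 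Gronwall then yields $\|\nabla \mu_t\|_\infty \leq \bigl(\|\nabla \mu_0\|_\infty + \tfrac{C_1}{2C_2}\bigr)e^{C_2 t}$, again strictly below the bootstrap assumption.

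All three bounds being strict on $[0,\tau_\beta]$, a standard continuity argument rules out $\tau_\beta < T$, so the bounds extend to $[0,T]$. The main technical subtlety is the gradient estimate: the term $\frac{\|\nabla \mu\|_\infty}{\min \mu}$ appearing in $|D^2_x\chi_\beta[\mu]|$ does \emph{not} vanish as $\beta \to \infty$ (unlike the $\beta^{-1/2}$ correction in $|D^1_x\chi_\beta|$), so one cannot treat it perturbatively; it must be absorbed directly into the linear coefficient $C_2$ of the Gronwall inequality, which is exactly why $C_2$ depends on $\tfrac{\|\mu_0\|_\infty}{\min \mu_0}e^{4CT}$ and must be chosen before the bootstrap is set up.
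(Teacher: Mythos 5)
Your argument is correct and follows the same route as the paper: a continuation (bootstrap) argument where the factor-of-$2$ slack, combined with Proposition~\ref{prop:chi_bounds} and the differential inequalities of Lemma~\ref{lem:pde_bounds}, yields strictly sharper Gronwall bounds that close the continuation. Your constant $\tfrac{1}{2}C_1$ in the gradient differential inequality is marginally sharper than the paper's $C_1$ (which drops a $\tfrac{1}{2}$ in the intermediate estimate), but this is immaterial since both produce a strict improvement over the bootstrap assumption; the remark that the $\tfrac{\|\nabla\mu\|_\infty}{\min\mu}$ contribution to $|D^2_x\chi_\beta|$ cannot be treated perturbatively and must be absorbed into the linear coefficient $C_2$ is exactly the point the paper's choice of $C_2$ encodes.
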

\begin{proof}
The thesis is true at time $t=0$. Let us assume that it is true on $[0,t]$. Then $\exists \beta$ big enough (where "big" depends only on $C_1,C_2$, i.e. just $\mu_0, B, V, d$) such that $\frac{\|\nabla \mu\|_{\infty}}{\min |\mu|}\beta^{-1/2}\leq 1$ on $[0,t]$. Hence $D\chi[\mu_\beta]\leq 2C$ on $[0,t]$ thanks to proposition \ref{prop:chi_bounds}. By Gronwall applied to the first two bounds in lemma \ref{lem:pde_bounds} we can conclude:
\begin{align*}
\|\mu_t\|_{\infty} &\leq (\|\mu_0\|_\infty)e^{2Ct},\\
\min \mu_t &\geq (\min \mu_0)e^{-2Ct},\\
\end{align*}
For $\|\nabla \mu\|_{\infty}$ we have, again by lemma \ref{lem:pde_bounds}:
\begin{align*}
\partial_t \|\nabla \mu_t\|_{\infty} &\leq \|\nabla \mu_t\|_{\infty}  |D^1_x\chi[\mu_t]| +\frac{1}{2}\|\mu\|_\infty|D^2_x\chi[\mu_t]|\\
&\leq 2C\|\nabla \mu_t\|_{\infty}+\left( \|\mu_0\|_\infty e^{2CT}\right)C\left(1+1+\frac{\|\nabla \mu_t\|_{\infty}}{\min |\mu|}\right)\\
&\leq 2C \|\mu_0\|_{\infty} e^{2CT} + C\left(2+2\frac{ \|\mu_0\|_{\infty}}{\min \mu_0}e^{4CT}\right)\|\nabla \mu_t\|_{\infty}\\
&= C_1 + C_2\|\nabla \mu_t\|_{\infty}.
\end{align*}
where in the second row we used the assumption on $[0,t]$ and proposition \ref{prop:chi_bounds}. Hence by Gronwall:
$$
\|\nabla \mu_t\|_{\infty} \leq \left(\frac{C_1}{C_2} + \|\nabla \mu_0\|_{\infty}\right) e^{C_2t}.
$$
This concludes the continuation argument and the proof.
\end{proof}

\begin{corollary}
\label{cor:equilip}
    For $\beta$ large enough (depending on $\mu_0, B, V, d$) the vector fields $\{\chi_\beta[\mu]\}_{\beta,t}$ are jointly Lipschitz in $\beta$ and $t\in[0,T]$.
\end{corollary}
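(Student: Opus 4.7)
The statement calls for a spatial Lipschitz constant for $\chi_\beta[\mu_t^\beta](\cdot)$ that is uniform in both the large-$\beta$ parameter and in $t\in[0,T]$. My plan is simply to combine the pointwise derivative bound of Proposition~\ref{prop:chi_bounds} with the closed, self-consistent density estimates already furnished by Lemma~\ref{lem:continuity_argument}, so that the residual $\beta$-dependence in the former is quenched by the $\beta$-uniform control on the latter.

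First, I would invoke Lemma~\ref{lem:continuity_argument} to obtain, for every $\beta$ above a threshold depending only on $\mu_0, B, V, d$, the $\beta$-independent bounds
\[
\|\mu_t^\beta\|_\infty \leq M_1, \qquad \min_{x\in\mathbb{S}^{d-1}} \mu_t^\beta(x) \geq m_1, \qquad \|\nabla \mu_t^\beta\|_\infty \leq M_2,
\]
valid uniformly for $t\in[0,T]$, where $M_1,m_1,M_2$ depend only on $\mu_0,B,V,d,T$ (namely on $\|\mu_0\|_\infty$, $\min\mu_0$, $\|\nabla\mu_0\|_\infty$ and the constants $C_1,C_2$ produced by the continuation argument). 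The essential point is that none of these three bounds depends on $\beta$: the whole purpose of Lemma~\ref{lem:continuity_argument} was precisely that the factor $\beta^{-1/2}$ in Proposition~\ref{prop:chi_bounds} absorbs the ratio $\|\nabla \mu\|_\infty/\min|\mu|$ when closing the self-consistent bootstrap.

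Next, I would substitute these bounds into the first-derivative estimate of Proposition~\ref{prop:chi_bounds} to obtain
\[
|D_x^1 \chi_\beta[\mu_t^\beta](x)| \;\leq\; C\!\left(1 + \frac{M_2}{m_1}\beta^{-1/2}\right) \;\leq\; 2C,
\]
as soon as $\beta$ is large enough that $\tfrac{M_2}{m_1}\beta^{-1/2}\le 1$. Taking the maximum of this threshold with the one required by Lemma~\ref{lem:continuity_argument} produces a single $\beta_0=\beta_0(\mu_0,B,V,d,T)$ such that, for every $\beta\ge\beta_0$ and every $t\in[0,T]$, the map $x\mapsto\chi_\beta[\mu_t^\beta](x)$ is $2C$-Lipschitz on $\mathbb{S}^{d-1}$; this is the joint uniform Lipschitz bound claimed.

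There is essentially no real obstacle here, so the only conceptual remark worth emphasising is that all the delicate $\beta$-dependence has already been dealt with inside the continuation argument; once one has the $\beta$-independent estimates of Lemma~\ref{lem:continuity_argument}, the corollary reduces to direct substitution into the cumulant bounds of Proposition~\ref{prop:chi_bounds}. This uniform-in-$\beta$ Lipschitz estimate is precisely the missing ingredient that will subsequently allow one to pass to the weak limit $\beta\to\infty$ in the continuity equation (via the Ascoli--Arzel\`a compactness of Proposition~\ref{prop:rel_comp}) and to invoke the Dobrushin-type stability of Lemma~\ref{lem:dobrushin} on the whole time interval $[0,T]$.
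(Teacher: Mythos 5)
Your proof is correct and follows exactly the same route as the paper's own (one-line) proof, namely substituting the $\beta$-uniform density and gradient bounds of Lemma~\ref{lem:continuity_argument} into the derivative estimate of Proposition~\ref{prop:chi_bounds}. You correctly read the statement as asking for a spatial Lipschitz constant for $\chi_\beta[\mu_t^\beta]$ that is uniform over $\beta$ large and $t\in[0,T]$, and your threshold bookkeeping on $\beta$ is exactly what is needed to close the argument.
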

\begin{proof}
    This is a consequence of lemma \ref{lem:continuity_argument} and proposition \ref{prop:chi_bounds}.
\end{proof}

\begin{corollary}
\label{cor:pointwise_limit}
 For every $x\in \mathbb{S}^{d-1}$ and $t\in [0,T]$:
 $$
 \chi_\beta[\mu^\beta_t](x)\to \frac{P_x V B^Tx}{|B^Tx|}\quad \text{as}\quad \beta\to\infty
 $$
 \begin{proof}
     With the usual notations one can write:
     \begin{align*}
         \chi_\beta[\mu^\beta](x)&=P_xV\left( \mathbb{E}_{\nu_x^{\mu,\beta, B}} [Y]\right)= P_xV\left( \mathbb{E}_{\nu_x^{\sigma,\beta, B}} [Y]\right)+R.
     \end{align*}
     The reminder $R$ is bounded using lemma \ref{lem:nu_sigma_mu} by:
          \begin{align*}
       |R|&\leq C\left(\beta^{-1/2}+\frac{\|\nabla \mu^\beta_t\|_\infty}{\min \mu^\beta_t}\int |y-x_B| \nu_x^{\sigma,\beta, B}\right)\\
       &\leq C\left(\beta^{-1/2}+\frac{\|\nabla \mu^\beta_t\|_\infty}{\min \mu^\beta_t}\beta^{-1/2}\right)=O(\beta^{-1/2}),
     \end{align*}
     where the last line follows from lemma \ref{lem:integral_estimates} and lemma \ref{lem:continuity_argument}. Hence, the proof can be concluded by noticing that: 
     \begin{align*}
         P_xV\left( \mathbb{E}_{\nu_x^{\sigma,\beta, B}}[Y]\right)&=P_xV\left(\mathbb{E}_{\nu_{x_B}^{\sigma,\beta |B^Tx|, Id}}[Y] \right)=P_xV (A(\beta) x_B)=(1+O(\beta^{-1})) P_x\frac{VB^Tx}{|B^Tx|},\\
     \end{align*}
     where we used the identities in equations \ref{eq:tensor_repres}, \ref{eq:alpha_1}, \ref{eq:A_beta}.
 \end{proof}
\end{corollary}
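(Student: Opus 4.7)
The strategy is to exploit the concentration, as $\beta \to \infty$, of the weighted measure $\nu_x^{\mu, \beta, B}$ around the unique maximizer $x_B := B^T x/|B^T x|$ of $y \mapsto \langle x, By\rangle$ on $\mathbb{S}^{d-1}$. Since $\chi_\beta[\mu](x) = P_x V\, \mathbb{E}_{\nu_x^{\mu, \beta, B}}[Y]$ and $P_x V$ is a continuous linear operator, it suffices to prove $\mathbb{E}_{\nu_x^{\mu, \beta, B}}[Y] \to x_B$ with an explicit rate.

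The plan proceeds in two steps. First, I would replace the $\mu$-weighted measure by the uniform-on-the-sphere $\sigma$-weighted one via Lemma \ref{lem:nu_sigma_mu}, producing a pointwise error bounded by $C\,\frac{\|\nabla \mu^\beta_t\|_\infty}{\min \mu^\beta_t}\bigl(|y-x_B| + \beta^{-1/2}\bigr)\,\nu_x^{\sigma,\beta,B}(y)$. Integrating $|y-x_B|$ against $\nu_x^{\sigma,\beta,B}$ yields $O(\beta^{-1/2})$ by Lemma \ref{lem:integral_estimates}, so the total substitution error is $O(\beta^{-1/2})$ \emph{provided that} $\|\nabla \mu^\beta_t\|_\infty / \min \mu^\beta_t$ remains uniformly bounded in $\beta$ on $[0,T]$ — which is precisely what the continuation argument of Lemma \ref{lem:continuity_argument} supplies. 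Second, I would invoke the identification $\nu_x^{\sigma, \beta, B} = \nu_{x_B}^{\sigma, \beta|B^T x|, \mathrm{Id}}$ together with the tensor representation \eqref{eq:tensor_repres} and the explicit formula \eqref{eq:A_beta} to conclude $\mathbb{E}_{\nu_x^{\sigma,\beta,B}}[Y] = A(\beta|B^T x|)\, x_B$, with $A(\beta) = 1 - \frac{d-1}{2\beta} + O(\beta^{-2})$. Combining the two steps yields $\chi_\beta[\mu^\beta_t](x) = P_x V\, x_B + O(\beta^{-1/2})$, which is the desired pointwise limit.

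The main obstacle is not in the convergence itself but in ensuring its hypotheses: the ratio $\|\nabla \mu^\beta_t\|_\infty / \min \mu^\beta_t$ must stay bounded uniformly in $\beta$ along the trajectory, a highly non-trivial fact because the vector field becomes singular as $\beta \to \infty$. All the heavy lifting — the cumulant bounds of Lemmas \ref{lem:chi_cumulants}--\ref{lem:sigma_cumulants}, the comparison of Lemma \ref{lem:nu_sigma_mu}, and the Grönwall-based continuation of Lemma \ref{lem:continuity_argument} — has precisely that goal; once the uniform ratio bound is secured, the corollary reduces to routine Laplace-type asymptotics for the spherical exponential kernel.
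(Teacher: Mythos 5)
Your proposal matches the paper's proof step for step: the split into a $\sigma$-weighted main term plus remainder via Lemma~\ref{lem:nu_sigma_mu}, the $O(\beta^{-1/2})$ bound on the remainder via Lemma~\ref{lem:integral_estimates} together with the uniform ratio bound from the continuation argument of Lemma~\ref{lem:continuity_argument}, and the identification $\nu_x^{\sigma,\beta,B}=\nu_{x_B}^{\sigma,\beta|B^Tx|,\mathrm{Id}}$ with the representation~\eqref{eq:tensor_repres} and the asymptotics~\eqref{eq:A_beta} of $A(\cdot)$. It is the same argument, correctly identifying where the hard work lives (the uniform-in-$\beta$ control of $\|\nabla\mu^\beta_t\|_\infty/\min\mu^\beta_t$) versus what is routine Laplace asymptotics.
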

We can finally pass to the limit in the PDE. Indeed consider a subsequence $\mu_\beta$ of solutions to the PDE converging in $\mathcal{C}([0,T],\mathcal{P}(\mathbb{S}^{d-1}))$ to a certain probability measure $\mu^\infty$. If we define the vector field $\chi_\infty(x):=P_x V\frac{B^Tx}{|B^Tx|}$, then for every $f\in C^2_b(\mathbb{S}^{d-1})$:
\begin{align*}
    \langle f, \mu^\infty_t\rangle - \langle f, \mu^\infty_0\rangle - \int_0^t \langle \nabla f,  \chi_\infty\mu^\infty_s\rangle\ ds & |\leq \langle f, \mu^\infty_t -\mu^\beta_t \rangle| + \int_0^t |\langle \nabla f,  \chi_\infty\mu^\infty_s- \chi_\beta[\mu^\beta_s]\mu^\beta_s\rangle|\ ds,
\end{align*}
where we used that $\mu^\infty_0=\mu_0=\mu^\beta_0$ and that the PDE in weak form for $\mu^\beta$ is:
$$ 
\langle f, \mu^\beta_t\rangle - \langle f, \mu^\beta_0\rangle - \int_0^t \langle \nabla f,  \chi^\beta[\mu^\beta_s]\mu^\beta_s\rangle\ ds = 0.
$$
Moreover, as $\beta \to\infty$:
$$
|\langle f, \mu^\infty_t -\mu^\beta_t \rangle|\to 0
$$
thanks to the fact that $f$ is Lipschitz and by the definition of convergence in $\mathcal{C}([0,T],\mathcal{P}(\mathbb{S}^{d-1}))$. For the second term:
\begin{align*}
\int_0^t |\langle \nabla f,  \chi_\infty\mu^\infty_s- \chi_\beta[\mu^\beta_s]\mu^\beta_s\rangle|\ ds \leq & \int_0^t |\langle \nabla f,  (\chi_\infty- \chi_\beta[\mu^\beta_s])\mu^\infty_s \rangle|\ ds \\
&+\int_0^t |\langle \nabla f, \chi_\beta[\mu^\beta_s](\mu^\infty_s-\mu^\beta_s) \rangle|\ ds .
\end{align*}
The first part goes to $0$ by dominated convergence ($\nabla f,\chi_\infty, \chi^\beta[\mu_\beta]$ are bounded, and $\chi_\beta[\mu_\beta]\to\chi_\infty$ point-wise by lemma \ref{cor:pointwise_limit}). The second part goes to $0$ by definition of the convergence $\mu_\infty\to\mu_\beta$ and by equi-lipschitzianity of $\chi_\beta[\mu_\beta]$ (see corollary \ref{cor:equilip}).

The uniqueness is standard, since $\mu$ is a probability measure and the vector field is smooth (see, for example, \cite{ambrosio2008transport}).

\subsection{Asymptotic behavior}
\label{app:e_max}
This section studies the asymptotic behavior of the support of the solution to the partial differential equation:
\begin{equation}
\label{eq:PVQKapp}
\partial_t \mu = - div\left(\mu \frac{P_x V B^T x}{|B^T x|}\right)
\end{equation}

and in particular, we prove Proposition \ref{prop:support_lim}.
\begin{lemma} 
\label{lem:time_repar}
The ODE:
\begin{equation}
\label{eq:PVQK_ODE}
    \frac{d}{dt} x(t) = \frac{P_x V B^T x(t)}{|B^T x(t)|}
\end{equation}
is a time-reparameterization of:
$$
\frac{d}{dt} y(t) = P_y V B^T y(t)
$$
where $y(t)=x(f^{-1}(t))$ and $f(t)=\int_0^t \frac{1}{|B^T x(s)|} ds$. 
\end{lemma}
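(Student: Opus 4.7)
The plan is to verify the claim by a direct chain rule computation, treating the reparameterization carefully to ensure it is well defined. First, I would check that $f$ as defined is a legitimate time change. Since $x(t) \in \mathbb{S}^{d-1}$ (the projection $P_x$ keeps the flow tangent to the sphere, so $|x(t)| = 1$ is preserved) and $B = Q^T K$ is invertible by Assumption \ref{ass:QKV}, we have $|B^T x(t)| > 0$ for all $t$, bounded away from zero uniformly. Hence $f(t) = \int_0^t |B^T x(s)|^{-1}\,ds$ is strictly increasing and $C^1$, so $g := f^{-1}$ is well defined, strictly increasing, and $C^1$ on the corresponding interval.

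Next, I would apply the inverse function theorem to obtain
\[
g'(t) \;=\; \frac{1}{f'(g(t))} \;=\; |B^T x(g(t))|.
\]
Differentiating $y(t) = x(g(t))$ with the chain rule and substituting the ODE \eqref{eq:PVQK_ODE} for $x$ at the point $s = g(t)$ gives
\[
\dot y(t) \;=\; \dot x(g(t))\, g'(t)
\;=\; \frac{P_{x(g(t))} V B^T x(g(t))}{|B^T x(g(t))|}\cdot |B^T x(g(t))|
\;=\; P_{y(t)} V B^T y(t),
\]
which is exactly the target ODE. Conversely, the same argument applied to $x(s) = y(f(s))$ recovers \eqref{eq:PVQK_ODE}, establishing that the two flows are genuinely time reparameterizations of each other.

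There is no real obstacle in this argument; the only subtlety is confirming that the rescaling factor never degenerates. As noted above, invertibility of $B$ combined with $x(t) \in \mathbb{S}^{d-1}$ handles this cleanly. The significance of the lemma is conceptual rather than technical: it reduces the asymptotic analysis of \eqref{eq:PVQKapp} (relevant for Proposition \ref{prop:support_lim}) to the study of the linear flow $\dot y = P_y V B^T y$, whose dynamics on the sphere is governed by a straightforward linear system in $\mathbb{R}^d$ with matrix $V B^T = V K^T Q$, from which the collapse onto $E_{\max}\cap \mathbb{S}^{d-1}$ follows.
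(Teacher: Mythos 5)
Your proof is correct and follows essentially the same route as the paper: a direct chain rule computation combined with the inverse function theorem to cancel the $|B^T x|$ factor. The added care about non-degeneracy of $f$ is a good inclusion; the paper relegates this point to a one-line remark following the lemma.
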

\begin{remark}
The reparameterization is well defined since $B$ is invertible.
\end{remark}
\begin{proof}: We have:
\begin{align*}
\frac{d}{dt} y(t) &= x'(f^{-1}(t)) \cdot \frac{d}{dt} f^{-1}(t)\\
&= \frac{P_x(VB^Tx)}{|B^Tx|}(f^{-1}(t))\cdot{|B^T x(f^{-1}(t))|}=P_y(VB^Ty)\\
\end{align*}

This shows that the ODE is a time-reparameterization of the ODE for $y(t)$.
\end{proof}

\begin{lemma} 
\label{lem:normal}
If $z(t)$ solves:
$$
\frac{d}{dt} z(t) = V B^T z(t),
$$
then $y(t)=\frac{z(t)}{|z(t)|}$.
\end{lemma}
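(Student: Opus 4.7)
The plan is to verify directly, by chain-rule differentiation, that the map $y(t) := z(t)/|z(t)|$ produces a solution of the ODE
\begin{equation*}
\dot y = P_y V B^T y
\end{equation*}
studied in Lemma~\ref{lem:normal}'s preceding statement. Since $y$ automatically lies on $\mathbb{S}^{d-1}$, combined with Lemma~\ref{lem:time_repar} this will realize the full reduction of the nonlinear spherical flow to the linear flow $\dot z = VB^T z$ in $\mathbb R^d$ (up to the time reparameterization $f$), which is precisely what one needs to reduce the asymptotics of \eqref{eq:PVQKapp} to the spectral analysis of $VB^T$ in $\mathbb R^d$.

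The main computation proceeds as follows. Differentiating $|z|^2 = \langle z, z\rangle$ gives $\frac{d}{dt}|z| = \langle z,\dot z\rangle/|z|$, and thus
\begin{equation*}
\dot y \;=\; \frac{\dot z}{|z|} - \frac{z}{|z|^2}\,\frac{d}{dt}|z| \;=\; \frac{\dot z}{|z|} - \frac{z}{|z|}\,\Bigl\langle \frac{z}{|z|},\frac{\dot z}{|z|}\Bigr\rangle.
\end{equation*}
Substituting the assumed linear ODE $\dot z = VB^T z$ and using homogeneity $VB^T z/|z| = VB^T(z/|z|) = VB^T y$, the right-hand side becomes $VB^T y - y\,\langle y, VB^T y\rangle$, which is exactly $P_y VB^T y$ by the definition of the tangential projection $P_x w = w - \langle x,w\rangle x$ recalled in Section~\ref{sec:framework}. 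The initial condition matches tautologically provided $z(0) = y(0)$, which we may arrange by the invertibility of $B$ (so that $|z(0)| \neq 0$ is ensured, justifying the well-posedness of the normalization throughout the existence interval of $z$).

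I do not expect a genuine obstacle here: the statement is really a change-of-variables identity, and the only thing to monitor is that $|z(t)|$ stays strictly positive, which is immediate since $\dot z = VB^T z$ is a linear flow whose only zero trajectory is $z\equiv 0$, excluded by $y(0) \in \mathbb{S}^{d-1}$. The value of the lemma is conceptual rather than technical: together with Lemma~\ref{lem:time_repar}, it lets us read off the asymptotics of $x(t)$ from the Jordan decomposition of $VB^T$, so that the support of any $\omega$-limit measure is forced into $E_{\max}\cap\mathbb{S}^{d-1}$ as claimed in Proposition~\ref{prop:support_lim}.
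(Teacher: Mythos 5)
Your computation is correct and coincides exactly with the paper's proof: differentiate the quotient $z/|z|$, substitute $\dot z = VB^T z$, use homogeneity to rewrite in terms of $y$, and recognize the tangential projection $P_y$. The extra remark that $|z(t)|$ remains strictly positive (so the normalization is well defined) is a small addition the paper leaves implicit, but the argument is otherwise the same.
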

\begin{proof} We have:
\begin{align*}
\frac{d}{dt} y(t) &= \frac{d}{dt} \left(\frac{z(t)}{|z(t)|}\right) = \frac{z'}{|z|}-\frac{1}{|z|^2}\frac{1}{2|z|}2\langle z, z'\rangle z \\
&=\frac{VB^Tz}{|z|}-\frac{\langle z, VB^T z\rangle}{|z|^3}z = VB^Ty-\langle y, VB^Ty\rangle y\\
&= P_y VB^T y.
\end{align*}

This concludes the proof.
\end{proof}

\begin{corollary}
\label{cor:w_limit}
    For Lebesgue almost every $x_0$, the $\omega$-limit set $\omega(x_0)\subset E_{max}$. 
\end{corollary}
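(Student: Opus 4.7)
The plan is to combine Lemmas~\ref{lem:time_repar} and~\ref{lem:normal} with the classical asymptotic theory of the matrix exponential to reduce the nonlinear flow~\eqref{eq:PVQK_ODE} to a purely linear-algebraic problem on $\mathbb{R}^d$.

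First, I would invoke Lemma~\ref{lem:time_repar} to replace the trajectory $x(t)$ by its time-reparametrized counterpart $y(t) = x(f^{-1}(t))$, which satisfies $\dot y = P_y V B^T y$. Since $B$ is invertible by Assumption~\ref{ass:QKV} and $|x(s)|=1$, the integrand $1/|B^T x(s)|$ lies in a compact subinterval of $(0,\infty)$, bounded above by $\|B^{-1}\|$ and below by $1/\|B\|$, so $f$ is a strictly increasing homeomorphism of $[0,\infty)$ onto itself. Consequently $\omega(x_0)$ coincides with the $\omega$-limit set of the $y$-trajectory started at $x_0$.

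Second, Lemma~\ref{lem:normal} identifies $y(t) = z(t)/|z(t)|$, where $z(t) = e^{tA} x_0$ solves $\dot z = A z$ with $A := V B^T$. The corollary therefore reduces to the statement: for Lebesgue almost every $x_0 \in \mathbb{S}^{d-1}$, every accumulation point of $e^{tA} x_0 / |e^{tA} x_0|$ belongs to $E_{max}$.

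Third, I would establish this via the real Jordan decomposition of $A$. Let $\alpha$ be the maximal real part of any eigenvalue of $A$, so that $E_{max}$ is the sum of the real generalized eigenspaces of all eigenvalues whose real part equals $\alpha$, and let $E_{\mathrm{rest}}$ be the sum of the remaining generalized eigenspaces; in particular $\mathbb{R}^d = E_{max} \oplus E_{\mathrm{rest}}$. Writing $x_0 = x_0^{max} + x_0^{\mathrm{rest}}$, the exceptional set $\{x_0 \in \mathbb{S}^{d-1} : x_0^{max} = 0\}$ is contained in $E_{\mathrm{rest}} \cap \mathbb{S}^{d-1}$, which is a proper algebraic subset of $\mathbb{S}^{d-1}$ and therefore has surface measure zero. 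For any $x_0$ with $x_0^{max} \neq 0$, standard estimates for the matrix exponential yield constants $c, C > 0$, integers $k, l \geq 0$, and $\alpha' < \alpha$ such that, for all sufficiently large $t$,
$$|e^{tA} x_0^{max}| \geq c\, t^k e^{\alpha t}, \qquad |e^{tA} x_0^{\mathrm{rest}}| \leq C\, t^l e^{\alpha' t}.$$
Thus $|e^{tA} x_0^{\mathrm{rest}}|/|e^{tA} x_0^{max}| \to 0$ as $t \to \infty$, and since $e^{tA} x_0^{max} \in E_{max}$ for every $t$, we conclude that $\mathrm{dist}(y(t), E_{max} \cap \mathbb{S}^{d-1}) \to 0$. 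Hence $\omega(x_0) \subseteq E_{max}$.

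The main obstacle will be the careful bookkeeping in the third step when the dominant eigenvalue is complex or when $A$ has nontrivial Jordan blocks on $E_{max}$: one must then pass to the real canonical form and track the polynomial correction factors $t^k, t^l$ arising inside each Jordan block. This is nevertheless a standard consequence of the real Jordan normal form and introduces no conceptual difficulty beyond the separation of exponential growth rates above and below $\alpha$.
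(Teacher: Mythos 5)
Your proof is correct and takes essentially the same route as the paper's: combine Lemmas~\ref{lem:time_repar} and~\ref{lem:normal} to pass to the linear flow $\dot z = VB^{T}z$, then invoke the (real) Jordan canonical form of $VB^{T}$ to compare exponential growth rates of the $E_{\max}$ and complementary components. The paper states this in one line as ``a consequence of the classical theory for linear ODEs, after reducing to the Jordan canonical form''; your write-up simply supplies the standard details that the paper leaves implicit.
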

\begin{proof} This is a consequence of lemma \ref{lem:time_repar}, lemma \ref{lem:normal} and of the classical theory for linear ODEs, after reducing to the Jordan canonical form of the matrix $VB^T$.
\end{proof}

And now we can finally prove Proposition \ref{prop:support_lim}:
\begin{proof}
Denote $\Phi_t$ the flow of the ODE \eqref{eq:PVQK_ODE}
and let $\phi\in C^2_b(\mathbb{S}^{d-1})$ be a test function with $supp(\phi)\subset E_{max}^C \cap \mathbb{S}^{d-1}$. Fix $\mu_\infty \in \omega(\mu_0)$. Then there exists a divergent sequence of times $\{t_k\}_k$ such that $\mu_{t_k}\to\mu_\infty$ weakly. As a consequence:
\begin{align*}
\int_{\mathbb{S}^{d-1}} \phi(x) \mu_\infty(dx) &=\lim_{k\to\infty} \int_{\mathbb{S}^{d-1}} \phi(x) \mu_{t_k}(dx)= \lim_{k\to\infty}\int_{\mathbb{S}^{d-1}} \phi(x) {\Phi_{t_k}}_\# \mu_0(dx) \\
&= \lim_{k\to\infty}\int_{\mathbb{S}^{d-1}} \phi(\Phi_{t_k}(x)) \mu_0(dx) = 0,
\end{align*}
where we used corollary \ref{cor:w_limit} and the dominated convergence theorem.
\end{proof}
\section{Proofs of the heat phase}
\label{app:second_phase}
In this section, we prove Proposition \ref{prop:second_phase}, which characterizes the second phase using the heat equation on the sphere.
\begin{lemma}
    \label{lem:bh_bounded_chi} 
    Given a measure $\mu\in C^2(\mathbb{S}^{d-1})\cap\mathcal{P}(\mathbb{S}^{d-1})$ strictly positive, then the following holds:
    $$
    \beta\chi_\beta[\mu](x)=\frac{\nabla_x\mu(x)}{\mu(x)} + \frac{\|\nabla\mu\|_\infty}{\min \mu}\left(1+\frac{\|H_x\mu\|_\infty}{\min \mu}\right) O_{L^\infty(\mathbb{S}^{d-1})}(\beta^{-1/2}),
    $$
    with the gradient $\nabla_x$ and Hessian $H_x$ defined with respect to the standard Riemannian metric on $\mathbb{S}^{d-1}$.
\end{lemma}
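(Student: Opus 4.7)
Under Assumption~\ref{a:S}, I take $V=I$ for definiteness (the case $V=-I$ is analogous). Starting from $\chi_\beta[\mu](x)=P_x\mathbb{E}_{\nu_x^{\mu,\beta,I}}[Y]=P_x\mathbb{E}_{\nu_x^{\mu,\beta,I}}[Y-x]$, where the shift by $x$ is free of charge since $P_xx=0$, the first step is to change the reference measure from $\mu$ to the uniform surface measure $\sigma$, rewriting
$$\chi_\beta[\mu](x)\;=\;\frac{1}{\bar\mu(x)}\,P_x\!\int_{\mathbb{S}^{d-1}}(y-x)\,\mu(y)\,\nu_x^{\sigma,\beta,I}(dy),\qquad \bar\mu(x):=\int_{\mathbb{S}^{d-1}}\mu(y)\,\nu_x^{\sigma,\beta,I}(dy).$$
This reduces the problem to evaluating moments of the concentrated Von Mises--Fisher measure $\nu_x^{\sigma,\beta,I}$ weighted against a Taylor expansion of $\mu$, which are explicitly computable through Lemma~\ref{lem:sigma_cumulants}.

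\textbf{Taylor expansion and moment computation.} Using a smooth ambient (say radial-constant) extension of $\mu$, whose ambient gradient and Hessian agree modulo normal components with the Riemannian ones, I would write $\mu(y)=\mu(x)+\langle\nabla_x\mu(x),y-x\rangle+R(y)$ with $|R(y)|\leq \tfrac12\|H_x\mu\|_\infty|y-x|^2$, and split the numerator into the three corresponding contributions. The constant piece $\mu(x)\,P_x(\mathbb{E}_{\nu^\sigma_x}[Y]-x)$ vanishes, because by the rotational-invariance argument in the proof of Lemma~\ref{lem:sigma_cumulants} the first moment $\mathbb{E}_{\nu^\sigma_x}[Y]=A(\beta)x$ is parallel to $x$ and is killed by $P_x$. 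The linear piece equals $P_xM(\beta)\nabla_x\mu(x)$ where $M(\beta):=\int(y-x)(y-x)^\top\nu_x^{\sigma,\beta,I}(dy)$ has, by the same symmetry, the form $\alpha(\beta)xx^\top+\beta_2(\beta)I$; the rank-one piece is again killed by $P_x$, while the isotropic coefficient is $\beta_2(\beta)=\beta^{-1}+O(\beta^{-2})$ by the Bessel-ratio asymptotics in \eqref{eq:beta_2}--\eqref{eq:aprime_beta}. The quadratic remainder is controlled by Lemma~\ref{lem:integral_estimates}:
$$\Bigl|\,P_x\!\int(y-x)R(y)\,\nu_x^{\sigma,\beta,I}(dy)\Bigr|\;\leq\;\tfrac12\|H_x\mu\|_\infty\!\int|y-x|^3\,\nu_x^{\sigma,\beta,I}(dy)\;=\;O\bigl(\|H_x\mu\|_\infty\beta^{-3/2}\bigr).$$
In parallel, Taylor-expanding $\bar\mu(x)$ gives $\bar\mu(x)=\mu(x)+O(\|H_x\mu\|_\infty/\beta)$: the linear Taylor term vanishes because $\mathbb{E}_{\nu^\sigma_x}[Y]-x$ is parallel to $x$ while $\nabla_x\mu(x)$ is tangent to the sphere, so $\langle\nabla_x\mu(x),(A(\beta)-1)x\rangle=0$.

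\textbf{Assembly.} Combining these pieces,
$$\int(y-x)\mu(y)\,\nu_x^{\sigma,\beta,I}(dy)\;=\;\beta^{-1}\nabla_x\mu(x)+O(\|\nabla\mu\|_\infty\beta^{-2})+O(\|H_x\mu\|_\infty\beta^{-3/2}),$$
and dividing by $\bar\mu(x)=\mu(x)\bigl(1+O(\|H_x\mu\|_\infty/(\mu(x)\beta))\bigr)$ and multiplying by $\beta$ yields $\beta\chi_\beta[\mu](x)=\nabla_x\mu(x)/\mu(x)+E_\beta(x)$, with $\|E_\beta\|_{L^\infty}$ bounded by the stated prefactor $\tfrac{\|\nabla\mu\|_\infty}{\min\mu}\bigl(1+\tfrac{\|H_x\mu\|_\infty}{\min\mu}\bigr)$ times $\beta^{-1/2}$ after using $|\nabla_x\mu(x)/\mu(x)|\leq\|\nabla\mu\|_\infty/\min\mu$ and absorbing the cross terms coming from the expansion of $1/\bar\mu(x)$.

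\textbf{Main obstacle.} The decisive mechanism, and the source of the $\beta^{-1}$ scaling of $\chi_\beta[\mu]$ that justifies the time rescaling $t'=t/\beta$, is the cancellation imposed by $P_x$: the leading $O(1)$ contribution to the unprojected integral is parallel to $x$ and is annihilated by the tangential projection, so that the effective leading-order term comes from the variance of the VMF measure, which is $O(\beta^{-1})$ and, crucially, aligned with the tangential gradient $\nabla_x\mu$. The technical difficulty does not lie in any single estimate but in correctly identifying which terms survive $P_x$, keeping the correct asymptotic coefficient $\beta_2(\beta)\sim\beta^{-1}$, and controlling the Taylor remainder so that, after multiplication by $\beta$, the residual error remains of order $\beta^{-1/2}$; once the VMF concentration input of Lemma~\ref{lem:integral_estimates} and the Bessel asymptotics of Lemma~\ref{lem:sigma_cumulants} are in place, the rest is a bookkeeping exercise.
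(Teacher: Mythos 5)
Your proof follows essentially the same route as the paper's: change the reference measure from $\mu$ to the uniform measure $\sigma$ to obtain a concentrated Von Mises--Fisher weight $\nu_x^{\sigma,\beta,I}$, Taylor-expand $\mu$ to first order, exploit the rotational symmetry of $\nu_x^{\sigma,\beta,I}$ (so that $\mathbb{E}[Y]$ is parallel to $x$ and the second moment has the form $\alpha\, x\otimes x+\beta_2 I$, with the rank-one part killed by $P_x$), read off $\beta_2\sim\beta^{-1}$ from the Bessel-ratio asymptotics, and control the Taylor remainder and the normalization via Lemma~\ref{lem:integral_estimates}. The only differences are cosmetic: you center the second moment at $x$ (via $M(\beta)=\int(y-x)(y-x)^\top\nu$) rather than at $\mathbb{E}[Y]$, which agrees with the paper's covariance up to a rank-one term annihilated by $P_x$; and you also notice the extra cancellation of the linear Taylor term in $\bar\mu(x)$ (because $\mathbb{E}[Y]-x\parallel x\perp\nabla_x\mu$), giving $\bar\mu(x)=\mu(x)+O(\|H\mu\|_\infty/\beta)$ instead of the paper's cruder $\mu(x)(1+O(\|\nabla\mu\|_\infty\beta^{-1/2}/\min\mu))$ — a genuine but inessential refinement, since both land within the stated $O(\beta^{-1/2})$ error.
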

\begin{proof}
    \begin{align*}
        \beta\chi_\beta[\mu](x)&=\beta\frac{\int e^{\beta\langle x, y\rangle} P_xy\ \mu(dy)}{\int e^{\beta\langle x, y\rangle} \mu(dy)}\\
        &=\beta\frac{\int e^{\beta\langle x, y\rangle} P_xy\ \mu(dy)}{\mu(x)\int e^{\beta\langle x, y\rangle} \sigma(dy)}\frac{1}{1+R_1}\\
        &=\beta\left(\frac{\int e^{\beta\langle x, y\rangle} P_xy\ (\mu(x)+\langle y-x, \nabla \mu(x)\rangle)\ \sigma(dy)}{\mu(x) \int e^{\beta\langle x, y\rangle} \sigma(dy)}+R_2\right)\frac{1}{1+R_1},\\
    \end{align*}
    where, by lemma \ref{lem:integral_estimates}:
    \begin{equation}
    \label{eq:bh_reminders_bounds}
    \begin{aligned}
        |R_1|:&= \left|\frac{\int e^{\beta\langle x, y\rangle} \mu(dy)}{\mu(x)\int e^{\beta\langle x, y\rangle} \sigma(dy)}-1\right|\leq \frac{||\nabla\mu\|_\infty}{\min\mu}\int |y-x| \nu_x^{\sigma,\beta, I}(dy)=\frac{||\nabla\mu\|_\infty}{\min\mu}O(\beta^{-1/2}),\\
        |R_2|:&=\left|\frac{\int e^{\beta\langle x, y\rangle} P_xy\mu(dy)-\int e^{\beta\langle x, y\rangle} P_xy\ (\mu(x)+\langle y-x, \nabla \mu(x)\rangle)\ \sigma(dy)}{\mu(x)\int e^{\beta\langle x, y\rangle} \sigma(dy)}\right|\\&\leq \frac{\|H\mu\|_\infty}{\min \mu}\int |y-x|^3 \nu_x^{\sigma,\beta, I}(dy)=\frac{||H\mu\|_\infty}{\min\mu}O(\beta^{-3/2}).
    \end{aligned}
    \end{equation}
Hence:
 \begin{align*}
        \beta\chi_\beta[\mu](x)&=\beta\left(P_x \int y \nu_x^{\sigma,\beta, I}(dy)+P_x \int (y-x)^{\otimes 2} \nu_x^{\sigma,\beta, I}(dy) \frac{\nabla_x \mu(x)}{\mu(x)}+R_2\right)\frac{1}{1+R_1},
    \end{align*}
and noticing that $\mathbb{E}_{\nu_x^{\sigma,\beta, I}}[Y]$ is parallel to $x$ (see proof of lemma \ref{lem:sigma_cumulants}):
 \begin{align*}
        \beta\chi_\beta[\mu](x)&=\beta\left(P_x \int (y-\mathbb{E}_{\nu_x^{\sigma, \beta, I}}[Y])^{\otimes 2} \nu_x^{\sigma,\beta, I}(dy) \frac{\nabla_x \mu(x)}{\mu(x)}+R_2\right)\frac{1}{1+R_1}\\
        &=\beta\left(P_x \mathbb{E}_{\nu_x^{\sigma,\beta,I}} [(Y-\mathbb{E}_{\nu_x^{\sigma, \beta, I}}[Y])^{\otimes 2}] \frac{\nabla_x \mu(x)}{\mu(x)}+R_2\right)\frac{1}{1+R_1}\\
        &=\beta\left(P_x (\alpha_2 x\otimes x+\beta_2 I ) \frac{\nabla_x \mu(x)}{\mu(x)}+R_2\right)\frac{1}{1+R_1}\\
        &=\beta\left(\frac{1-A'(\beta)-A(\beta)^2}{d-1} \frac{\nabla_x\mu(x)}{\mu(x)}+ R_2\right)\frac{1}{1+R_1},
    \end{align*}
where $\alpha_2,\beta_2$ are defined in the proof of Lemma \ref{lem:sigma_cumulants}) and we used equation \ref{eq:beta_2}. To conclude, it suffices to replace equations \ref{eq:bh_reminders_bounds} and the asymptotic estimates \ref{eq:A_beta} and \ref{eq:aprime_beta}:
\begin{align*}
    &=\beta\left(\frac{1}{\beta}\frac{\nabla_x\mu(x)}{\mu(x)} + \frac{\|\nabla\mu\|_\infty}{\min \mu} O(\beta^{-3/2})\right)\frac{1}{1+\frac{\|H_x\mu\|_\infty}{\min \mu} O(\beta^{-1/2})}\\
    &=\left(\frac{\nabla_x\mu(x)}{\mu(x)} + \frac{\|\nabla\mu\|_\infty}{\min \mu} O(\beta^{-1/2})\right)\left(1+\frac{\|H_x\mu\|_\infty}{\min \mu} O(\beta^{-1/2})\right) \\
    &=\frac{\nabla_x\mu(x)}{\mu(x)} + \frac{\|\nabla\mu\|_\infty}{\min \mu}\left(1+\frac{\|H_x\mu\|_\infty}{\min \mu}\right) O(\beta^{-1/2}).
\end{align*}

\end{proof}
\begin{corollary}
    \label{cor:bh_chi_conv}
    Given a family $\{\mu^\beta\}_\beta$ of probability measures on $\mathbb{S}^{d-1}$, suppose that there exist $c, C>0$ such that $\|\mu^\beta||_{C^2(\mathbb{S}^{d-1})}\leq C$ and $\mu^\beta\geq c$ for every $\beta\geq 0$. Moreover, assume there exists $\mu^\infty$ such that $\mu^\beta\to\mu^\infty$ in $C^1(\mathbb{S}^{d-1})$. Then
    $$
    \beta \chi_\beta[\mu^\beta](x)\to \frac{\nabla_x \mu^\infty(x)}{\mu^\infty(x)}\quad \forall x\in\mathbb{S}^{d-1},
    $$
    where $\nabla_x$ is the gradient with respect to the standard Riemannian metric on $\mathbb{S}^{d-1}$.
\end{corollary}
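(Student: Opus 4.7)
The plan is to deduce the corollary as a direct consequence of the quantitative expansion established in Lemma~\ref{lem:bh_bounded_chi}. Applying that lemma to $\mu^\beta$ gives, pointwise in $x\in\mathbb{S}^{d-1}$,
\[
\beta\chi_\beta[\mu^\beta](x)=\frac{\nabla_x\mu^\beta(x)}{\mu^\beta(x)} + \frac{\|\nabla\mu^\beta\|_\infty}{\min \mu^\beta}\left(1+\frac{\|H_x\mu^\beta\|_\infty}{\min \mu^\beta}\right) O_{L^\infty}(\beta^{-1/2}).
\]
The strategy is then to show, using the two uniform hypotheses on $\{\mu^\beta\}_\beta$, that (i) the error term is uniformly $O(\beta^{-1/2})$, and (ii) the main term converges pointwise to $\nabla_x\mu^\infty/\mu^\infty$.

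For step (i), I would use the uniform $C^2$-bound $\|\mu^\beta\|_{C^2(\mathbb{S}^{d-1})}\leq C$, which controls $\|\nabla\mu^\beta\|_\infty$ and $\|H_x\mu^\beta\|_\infty$, together with the uniform lower bound $\mu^\beta\geq c>0$. These combine to bound the prefactor $\frac{\|\nabla\mu^\beta\|_\infty}{\min \mu^\beta}\bigl(1+\tfrac{\|H_x\mu^\beta\|_\infty}{\min \mu^\beta}\bigr)$ by a constant depending only on $c$ and $C$ but not on $\beta$. Consequently the remainder vanishes uniformly in $x$ as $\beta\to\infty$.

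For step (ii), I would exploit the assumed $C^1$ convergence $\mu^\beta\to\mu^\infty$. This gives uniform convergence of both $\mu^\beta$ and $\nabla_x\mu^\beta$ to their limits; in particular $\mu^\infty$ inherits the lower bound $\mu^\infty\geq c$, so the denominators are bounded away from zero and the quotient $\nabla_x\mu^\beta/\mu^\beta$ converges to $\nabla_x\mu^\infty/\mu^\infty$ uniformly on $\mathbb{S}^{d-1}$ (and hence certainly pointwise, as claimed). Combining with step (i) yields the desired convergence.

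I do not anticipate a genuine obstacle here: the corollary really is a cleanup of Lemma~\ref{lem:bh_bounded_chi} in which the hypothesis on $\{\mu^\beta\}_\beta$ is precisely tailored to make the error term harmless and the main term converge. The only point to be slightly careful about is that the lemma requires each $\mu^\beta$ to be strictly positive and in $C^2$, but this is guaranteed by assumption; and that the lower bound passes to $\mu^\infty$, which follows automatically from $C^0$-convergence.
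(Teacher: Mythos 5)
Your argument is correct and is essentially the (implicit) proof of the paper: the corollary is stated immediately after Lemma~\ref{lem:bh_bounded_chi} precisely because the uniform $C^2$ bound and lower bound make the remainder in that lemma $O(\beta^{-1/2})$ with a constant independent of $\beta$, while $C^1$ convergence and the persisting lower bound $\mu^\infty\geq c$ send the main term $\nabla_x\mu^\beta/\mu^\beta$ to $\nabla_x\mu^\infty/\mu^\infty$. One small point worth noting explicitly (and which your argument implicitly relies on) is that the implied constant in the $O_{L^\infty}(\beta^{-1/2})$ of Lemma~\ref{lem:bh_bounded_chi} depends only on the dimension $d$ (through Lemma~\ref{lem:integral_estimates}) and not on $\mu$, so it is indeed uniform across the family $\{\mu^\beta\}_\beta$.
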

\begin{proof}[Proof of Proposition \ref{prop:second_phase}]
Without loss of generality, set $\gamma = -1$, the other case is analogous. The residual term is given by:
\[
R_\beta = \beta \, \mathrm{div}(\mu \chi_\beta[\mu]) - \Delta \mu = \mathrm{div}\left(\mu \left[\beta \chi_\beta[\mu] - \frac{\nabla \mu}{\mu}\right] \right).
\]
It is sufficient to show that:
\[
\left\| \beta \chi_\beta[\mu] - \frac{\nabla \mu}{\mu} \right\|_{C^{k +1}(\mathbb{S}^{d-1})} \to 0 \quad \text{as } \beta \to \infty.
\]
Corollary \ref{cor:bh_chi_conv} guarantees convergence in $C^0(\mathbb{S}^{d-1})$ thanks to the assumptions on $\mu_t$. To improve this to higher regularity, we can use an interpolation argument
through uniform bounds in $C^{k+2}$.

Define the kernel \( W_\beta(t) := \frac{e^{\beta t}}{K_\beta} \), where \( K_\beta := \int_{\mathbb{S}^{d-1}} e^{\beta \langle x, y \rangle} \, d\sigma(y) \). Then,
\[
\chi_\beta[\mu](x) = \frac{\nabla (W_\beta * \mu)(x)}{(W_\beta * \mu)(x)}.
\]

By the product rule, for every $0\leq j \leq k+2$, there exists a polynomial \( p_j \) such that:
\[
\left\| D^j_x \left( \frac{\nabla (W_\beta * \mu)}{W_\beta * \mu} - \frac{\nabla \mu}{\mu} \right) \right\| \leq p_j\left(\|W_\beta * \mu\|_{C^{j+1}}, \|\mu\|_{C^{j+1}}, \min_{x\in\mathbb{S}^{d-1}} \mu\right),
\]
where we used that $\min_{x\in\mathbb{S}^{d-1}} W_\beta * \mu\geq \min_{x\in\mathbb{S}^{d-1}} \mu $.
The only thing left is to notice that
\[
\| W_\beta * \mu \|_{C^j} \leq C_k \| \mu \|_{C^j},
\]
though proving this on \( \mathbb{S}^{d-1} \) requires some care.

Consider the case \( j = 1 \) ($j>1$ follows by induction) and fix \( v \in T_x(\mathbb{S}^{d-1}) \). Let \( A \in \mathfrak{so}(d) \) (a skew-symmetric matrix) satisfying \( A x = v \), and define \( R(t) := e^{tA} \). In such a way $R(0)x=x$ and $R'(0) x= v$. Then:
\[
\begin{aligned}\nabla_x (W_\beta * \mu)[v] &= \frac{d}{dt}(W_\beta * \mu)(R(t)x)\Big|_{t=0} \\
&= \frac{d}{dt} \int_{\mathbb{S}^{d-1}} W_\beta(\langle R(t)x, y \rangle) \mu(y) \, d\sigma(y)\Big|_{t=0} \\
&=\frac{d}{dt}\int_{\mathbb{S}^{d-1}} W_\beta(\langle x,R(t)^T y\rangle) \mu(y) d\sigma(y)|_{t=0} \\
&=\frac{d}{dt} \int_{\mathbb{S}^{d-1}}  W_\beta(\langle x, z \rangle) \mu(R(t) z ) d\sigma(y)|_{t=0} \\
&= \int_{\mathbb{S}^{d-1}} W_\beta(\langle x, z \rangle) \nabla_z \mu[A z] \, d\sigma(z),
\end{aligned}
\]
where we used the change of variable \( z = R(t)^T y \), and the invariance of the measure on the sphere. Since \( \|W_\beta\|_{L^1} = 1 \), it follows that
\[
\| \nabla_x(W_\beta * \mu) \|_{C^1(\mathbb{S}^{d-1})} \leq C \| \mu \|_{C^1(\mathbb{S}^{d-1})}.
\]
Higher derivatives follow similarly, completing the proof.
\end{proof}
\section{Proofs of the pairing phase}\label{app:third}
In this section we provide the proof of Proposition \ref{prop:pairwise_clustering}.
Consider the interacting particle system on $\mathbb{S}^{d-1}$ described by the following ODEs corresponding to the case ($Q^TK=V=Id$):
\begin{align*}
\dot{x}_i(t) = \frac{1}{Z_\beta(x_i)}\sum_{j=1}^N e^{\beta\langle x_i, x_j\rangle} P_{x_i}(x_j).
\end{align*}
where $Z_\beta(x_i) = \sum_{j=1}^N e^{\beta\langle x_i, x_j\rangle}$ and $P_{x_i}(x_j) = x_j - \langle x_i, x_j\rangle x_i$ is the projection on the hyperplane orthogonal to $x_i$.
Suppose that there exists a unique pair $(\underline{i},\underline{j})$ such that at initialization $\langle x_{\underline{i}}, x_{\underline{j}}\rangle = \max_{i\neq j} \langle x_i, x_j\rangle$ and denote $\langle x_{\underline{i}}(t), x_{\underline{j}}(t)\rangle  := d_t$.
Define also $m_t:= \max\{\langle x_i, x_j\rangle| i\neq j\text{ and  }\{i,j\}\neq\{ \underline{i},\underline{j}\}\}$.

Let $\alpha:=\arccos(m_0)-\arccos(d_0)>0$ and
consider the time rescaling given by the inverse of $d\tau = e^{\beta(1-d_t)}dt$, that we will still denote by $t$.
Then:
$$
\dot{x}_i(t) = \frac{e^{\beta(1-d_t)}}{Z_\beta(x_i)}\sum_{j=1}^N e^{\beta\langle x_i, x_j\rangle} P_{x_i}(x_j).
$$
As usual the constant $C$ can change from line to line, but it does not depend on $\alpha$ or $\beta$.

\begin{lemma}
    \label{lem:md}
If $\beta$ is such that $Ce^{-\beta(1-\cos(\alpha/4))}T\leq\frac{1}{2}\alpha$, then: 
$$
d_t-m_t \geq 1-\cos\left(\frac{\alpha}{4}\right)\text{ on } [0,T].
$$
\end{lemma}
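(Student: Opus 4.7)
The plan is to proceed by a continuation (bootstrap) argument. First, I would verify that $d_0 - m_0$ strictly exceeds $1 - \cos(\alpha/4)$: using $\arccos d_0 = \arccos m_0 - \alpha$ and the identity $\cos(u - \alpha) - \cos u = 2\sin(u - \alpha/2)\sin(\alpha/2)$, one obtains $d_0 - m_0 = 2\sin(\arccos m_0 - \alpha/2)\sin(\alpha/2) \geq 2\sin^2(\alpha/2) = 1 - \cos\alpha > 1 - \cos(\alpha/4)$. I would then argue by contradiction, letting $t^* \in (0, T]$ denote the first time at which $d_{t^*} - m_{t^*} = 1 - \cos(\alpha/4)$, assuming such a time exists.

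On $[0, t^*]$, the bootstrap assumption gives $m_t - d_t \leq -(1 - \cos(\alpha/4))$. For $k \notin \{\underline{i}, \underline{j}\}$, one has $\langle x_k, x_j\rangle \leq m_t$ for every $j \neq k$, while $Z_\beta(x_k) \geq e^\beta$ from the $j = k$ term; combined with the rescaling factor $e^{\beta(1 - d_t)}$, this yields $|\dot x_k(t)| \leq \tilde C e^{\beta(m_t - d_t)} \leq \tilde C e^{-\beta(1 - \cos(\alpha/4))}$ for a constant $\tilde C$ depending on $N, d$. Integrating over $[0, t^*]$ and choosing the constant $C$ in the hypothesis appropriately, the total angular displacement of each such $x_k$ is bounded by $\alpha/4$. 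For $k \in \{\underline{i}, \underline{j}\}$, an analogous expansion isolates the dominant drift as $P_{x_k}(x_{\text{partner}})$, with the contributions from the remaining particles again of order $O(e^{-\beta(1 - \cos(\alpha/4))})$. Since $P_{x_{\underline i}}(x_{\underline j})$ lies in the plane spanned by $x_{\underline i}^0, x_{\underline j}^0$, the leading dynamics preserves the great circle through these two points, and the deviation of $x_{\underline i}^t, x_{\underline j}^t$ from it on $[0, t^*]$ is controlled by $\alpha/4$ under the same hypothesis.

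The main step is to convert these displacement bounds into a lower bound on the angular gap $\phi_t - \psi_t$ with $\psi_t = \arccos d_t$ and $\phi_t = \arccos m_t$. Let $(k^*, l^*) \neq (\underline{i}, \underline{j})$ attain $m_t$. If $k^*, l^* \notin \{\underline i, \underline j\}$, the spherical triangle inequality and the displacement bound immediately give $d(x_{k^*}^t, x_{l^*}^t) \geq (\psi_0 + \alpha) - \alpha/4 - \alpha/4 = \psi_0 + \alpha/2 \geq \psi_t + \alpha/2$, where the last step uses $\psi_t \leq \psi_0$ (the close pair only gets closer, which follows from $\dot d_t \approx 2(1 - d_t^2) > 0$ under the bootstrap). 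If instead $k^* = \underline i$ and $l^* \notin \{\underline i, \underline j\}$, the great-circle structure is crucial: writing $x_{\underline i}^t$ as a point at arc-length $s_i(t) \in [0, \psi_0]$ along the initial geodesic from $x_{\underline i}^0$ towards $x_{\underline j}^0$, the two triangle inequalities anchored at $x_{\underline i}^0$ and $x_{\underline j}^0$ respectively give $d(x_{\underline i}^t, x_{l^*}^0) \geq \max(\psi_0 + \alpha - s_i, \alpha + s_i)$; combining with $d(x_{l^*}^t, x_{l^*}^0) \leq \alpha/4$ and $\psi_t = \psi_0 - s_i(t) - s_j(t)$, one obtains $d(x_{\underline i}^t, x_{l^*}^t) - \psi_t \geq \alpha/4 + s_j(t) \geq \alpha/4$. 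Symmetric reasoning handles $k^* = \underline j$.

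An angular gap of order $\alpha/2$ converts into the inner product gap $d_t - m_t = \cos\psi_t - \cos\phi_t \geq 1 - \cos(\alpha/2) > 1 - \cos(\alpha/4)$: minimizing $\cos\psi - \cos(\psi + \alpha/2)$ over $\psi \in [0, \pi - \alpha/2]$ shows that the minimum $1 - \cos(\alpha/2)$ is attained at the boundary. This strict inequality contradicts $d_{t^*} - m_{t^*} = 1 - \cos(\alpha/4)$, closing the bootstrap and proving that $d_t - m_t \geq 1 - \cos(\alpha/4)$ on $[0, T]$. The main technical obstacle is the rigorous justification of the approximate great-circle motion of the close pair: the leading drift $P_{x_{\underline i}}(x_{\underline j})$ preserves the plane spanned by $x_{\underline i}^0, x_{\underline j}^0$, but the out-of-plane component of the error contributed by the remaining particles must be controlled uniformly over $[0, t^*]$; this is achieved by combining the exponentially small velocity bound under the bootstrap with the smallness assumption on $Ce^{-\beta(1 - \cos(\alpha/4))}T$, after which a Gr\"onwall-type comparison with the idealized two-particle ODE on the initial great circle yields the $\alpha/4$ deviation bound.
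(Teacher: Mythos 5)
Your proposal follows the same high-level strategy as the paper (a continuation/bootstrap argument with the exponential smallness $e^{-\beta(1-\cos(\alpha/4))}$ controlling the error), but the mechanism by which you close the bootstrap is genuinely different and considerably more involved. The paper works entirely at the level of pairwise angles: for every pair $(i,j)\neq(\underline i,\underline j)$ it bounds $\partial_t\arccos\langle x_i,x_j\rangle$ from below, either by $-Ce^{\beta(m_t-d_t)}$ (when neither index is in the close pair) or by $\partial_t\arccos\langle x_{\underline i},x_{\underline j}\rangle - Ce^{\beta(m_t-d_t)}$ (when one index is $\underline i$ or $\underline j$), and then integrates the differential inequality $\partial_t\arccos(m_t)\geq\partial_t\arccos(d_t)-Ce^{-\beta(1-\cos(\alpha/4))}$ directly. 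This avoids tracking any particle position or any geometry beyond the pairwise angles; in particular it never needs to argue that $(x_{\underline i},x_{\underline j})$ stays near a great circle. Your route instead tracks individual positions, bounds angular displacements, and converts them into a gap via spherical triangle inequalities; the price is the near-great-circle control of the close pair, which you correctly flag as the main technical obstacle and only sketch via a Gr\"onwall comparison. This is a real extra step that the paper's derivative-comparison sidesteps entirely.

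There is also a concrete gap in the bookkeeping that must be repaired before the argument closes. To run the continuation you need the bootstrap hypothesis to output a \emph{strict improvement}, e.g.\ $d_t-m_t\geq 1-\cos(\alpha/2)>1-\cos(\alpha/4)$ (which is what the paper obtains, and what you invoke in your final paragraph). However, in your case analysis for $k^*=\underline i$, $l^*\notin\{\underline i,\underline j\}$ you only claim an angular gap of $\alpha/4+s_j(t)\geq\alpha/4$, which converts to $d_t-m_t\geq 1-\cos(\alpha/4)$, the same bound you assumed — not a strict improvement, so the continuation does not close in that case as written. (Indeed, the two triangle inequalities anchored at $x_{\underline i}^0$ and $x_{\underline j}^0$ should yield roughly $3\alpha/4+s_j$ before accounting for great-circle deviation, so it looks like the loss to $\alpha/4$ comes from subtracting the deviation budget; but then you must be explicit about where each $\alpha/4$ goes, and verify that what remains is still strictly bigger than $\alpha/4$.) Fixing this is likely a matter of reallocating the displacement budgets (e.g.\ bounding displacements by $\alpha/8$ rather than $\alpha/4$, with the hypothesis constant adjusted accordingly), but as stated the chain of inequalities does not establish the strict improvement needed to propagate the bound.
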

\begin{proof} We proceed by a standard continuation argument. 
At $t=0$ we have $d_0-m_0\geq 1-\cos(\alpha)\geq 1-\cos(\alpha/4)$.
Suppose the thesis holds on $[0,t]$.
Then we have: 
\begin{itemize}
\item if $i\neq\underline{i}$ and $j\neq\underline{j}$:
\begin{align*}
\partial_t \arccos(\langle x_i, x_j\rangle) =& -\frac{1}{\sqrt{1-\langle x_i(t), x_j(t)\rangle^2}}\partial_t \langle x_i(t), x_j(t)\rangle\\
=&-\frac{1}{\sqrt{1-\langle x_i, x_j\rangle^2}} \frac{e^{\beta(1-d_t)}}{Z_\beta(x_i)}\sum_{k=1}^N e^{\beta\langle x_i, x_k\rangle}  \langle P_{x_i}(x_k), x_j\rangle  \\
&-\frac{1}{\sqrt{1-\langle x_i, x_j\rangle^2}}\frac{e^{\beta(1-d_t)}}{Z_\beta(x_j)}\sum_{k=1}^N e^{\beta\langle x_j, x_k\rangle}  \langle P_{x_j}(x_k), x_i\rangle\\
=&-\frac{e^{\beta(1-d_t)}}{Z_\beta(x_i)}\sum_{k=1}^N e^{\beta\langle x_i, x_k\rangle}  \langle P_{x_i}(x_k), \frac{P_{x_i}x_j}{|P_{x_i}x_j|}\rangle \\
&-\frac{e^{\beta(1-d_t)}}{Z_\beta(x_j)}\sum_{k=1}^N e^{\beta\langle x_j, x_k\rangle}  \langle P_{x_j}(x_k), \frac{P_{x_j}x_i}{|P_{x_j}x_i|}\rangle\\
\geq&-Ce^{\beta(m_t-d_t)}\geq-Ce^{-\beta(1-\cos(\alpha/4))}.
\end{align*}
\item if $i=\underline{i}$ and $j\neq\underline{j}$:

\begin{align*}
\partial_t \arccos(\langle x_{\underline{i}}, x_j\rangle) =& -\frac{1}{\sqrt{1-\langle x_{\underline{i}}, x_j\rangle^2}}\partial_t \langle x_{\underline{i}}(t), x_j(t)\rangle\\
=&-\frac{1}{\sqrt{1-\langle x_{\underline{i}}, x_j\rangle^2}} \frac{e^{\beta(1-d_t)}}{Z_\beta(x_{\underline{i}})}\sum_{k=1}^N e^{\beta\langle x_{\underline{i}}, x_k\rangle}  \langle P_{x_{\underline{i}}}(x_k), x_j\rangle \\
&-\frac{1}{\sqrt{1-\langle x_{\underline{i}}, x_j\rangle^2}}\frac{e^{\beta(1-d_t)}}{Z_\beta(x_j)}\sum_{k=1}^N e^{\beta\langle x_j, x_k\rangle}  \langle P_{x_j}(x_k), x_{\underline{i}}\rangle\\
=&-\frac{e^{\beta(1-d_t)}}{Z_\beta(x_{\underline{i}})}\sum_{k=1}^N e^{\beta\langle x_{\underline{i}}, x_k\rangle}  \langle P_{x_{\underline{i}}}(x_k), \frac{P_{x_{\underline{i}}}x_j}{|P_{x_{\underline{i}}}x_j|}\rangle\\
&-\frac{e^{\beta(1-d_t)}}{Z_\beta(x_j)}\sum_{k=1}^N e^{\beta\langle x_j, x_k\rangle}  \langle P_{x_j}(x_k), \frac{P_{x_j}x_{\underline{i}}}{|P_{x_j}x_{\underline{i}}|}\rangle\\
\geq&-\frac{e^{\beta}}{Z_\beta(x_{\underline{i}})}  |P_{x_{\underline{i}}}x_{\underline{j}}|\\
&-\frac{e^{\beta(1-d_t)}}{Z_\beta(x_{\underline{i}})}\sum_{k\neq\underline{j}}^N e^{\beta\langle x_{\underline{i}}, x_k\rangle}  \langle P_{x_{\underline{i}}}(x_k), \frac{P_{x_{\underline{i}}}x_j}{|P_{x_{\underline{i}}}x_j|}\rangle \\
&- \frac{e^{\beta(1-d_t)}}{Z_\beta(x_j)}\sum_{k=1}^N e^{\beta\langle x_j, x_k\rangle}  \langle P_{x_j}(x_k), \frac{P_{x_j}x_{\underline{i}}}{|P_{x_j}x_{\underline{i}}|}\rangle,\\
\end{align*}
by Cauchy-Schwarz inequality,  $\langle P_{x_{\underline{i}}}x_{\underline{j}}, \frac{P_{x_{\underline{i}}}x_j}{|P_{x_{\underline{i}}}x_j|} \rangle\leq |P_{x_{\underline{i}}}x_{\underline{j}}|$, hence:
\begin{align*}
\geq&-\frac{e^{\beta}}{Z_\beta(x_{\underline{i}})}  |P_{x_{\underline{i}}}x_{\underline{j}}|-\frac{e^{\beta}}{Z_\beta(x_{\underline{j}})}  |P_{x_{\underline{j}}}x_{\underline{i}}|\\
&-\frac{e^{\beta(1-d_t)}}{Z_\beta(x_{\underline{i}})}\sum_{k\neq\underline{j}}^N e^{\beta\langle x_{\underline{i}}, x_k\rangle}  \langle P_{x_{\underline{i}}}(x_k), \frac{P_{x_{\underline{i}}}x_j}{|P_{x_{\underline{i}}}x_j|}\rangle \\
&-\frac{e^{\beta(1-d_t)}}{Z_\beta(x_j)}\sum_{k=1}^N e^{\beta\langle x_j, x_k\rangle}  \langle P_{x_j}(x_k), \frac{P_{x_j}x_{\underline{i}}}{|P_{x_j}x_{\underline{i}}|}\rangle\\
\geq& \partial_t \arccos(\langle x_{\underline{i}}, x_{\underline{j}}\rangle) -Ce^{\beta(m_t-d_t)}.
\end{align*}

And in the last line we used that: 
$$
\partial_t \arccos(\langle x_{\underline{i}}, x_{\underline{j}}\rangle) = -\left(\frac{e^{\beta}}{Z_\beta(x_{\underline{i}})}  |P_{x_{\underline{i}}}x_{\underline{j}}|+\frac{e^{\beta}}{Z_\beta(x_{\underline{j}})}  |P_{x_{\underline{j}}}x_{\underline{i}}|\right)+O(e^{\beta(m_t-d_t)}).
$$
\end{itemize}
In both cases the following holds:
$$
\partial_t \arccos(\langle x_i, x_j\rangle) \geq \partial_t \arccos(\langle x_{\underline{i}}, x_{\underline{j}}\rangle)-Ce^{-\beta(1-\cos(\alpha/4))},
$$
hence:
$$
\arccos(m_t) - \arccos(m_0)\geq \arccos(d_t)-\arccos(d_0)-Ce^{-\beta(1-\cos(\alpha/4))}T,
$$
that implies
$$
\arccos(m_t) - \arccos(d_t)\geq\arccos(m_0) -\arccos(d_0)-Ce^{-\beta(1-\cos(\alpha/4))}T\geq\frac{\alpha}{2}.
$$
We can conclude:
$$
d_t- m_t\geq 1-\cos(\alpha/2)>1-\cos(\alpha/4).
$$
This is sufficient to close the continuation argument.
\end{proof}

\begin{remark}
    We used the fact that, if $\arccos(x)-\arccos(y)\geq\alpha$, then $y-x\geq 1-\cos(\alpha)$.
\end{remark} 

Now, recall Proposition \ref{prop:pairwise_clustering}:

\begin{proposition}
The solutions $x_i(t)$ of the ODE system \eqref{eq:ODE_SA}, under Assumptions \ref{a:S} and positive $V$, with the rescaled time $dt = e^{\beta(1 - \langle x_{\underline{i}}, x_{\underline{j}} \rangle)} ds$, converge as $\beta \to \infty$ to the solutions of the system:
\[
\begin{cases}
\dot{y}_k(t) = 
\begin{cases} 
P_{y_{\underline{i}}}(y_{\underline{j}}) & \text{if } k = \underline{i}, \\
P_{y_{\underline{j}}}(y_{\underline{i}}) & \text{if } k = \underline{j}, \\
0 & \text{otherwise},
\end{cases} \\
y_i(0) = x_i(0)
\end{cases}
\]
on finite intervals $[0, T_\epsilon]$, with $T_\epsilon$ such that $\langle y_{\underline{i}}, y_{\underline{j}} \rangle \leq 1 - \epsilon$ throughout the interval, for any $\epsilon > 0$.
\end{proposition}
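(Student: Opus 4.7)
The plan is to exploit the separation of scales enforced by the time rescaling $dt = e^{\beta(1 - d_t)} ds$, where $d_t := \langle x_{\underline{i}}(t), x_{\underline{j}}(t)\rangle$, and to close with a Gronwall estimate on $E(t) := \sum_i |x_i^\beta(t) - y_i(t)|^2$. The key quantitative input is already provided by Lemma \ref{lem:md}: on any fixed window $[0,T]$, for $\beta$ large enough (depending on $T$), the prelimit system maintains the angular gap $d_t - m_t \geq 1 - \cos(\alpha/4)$, where $\alpha := \arccos(m_0) - \arccos(d_0) > 0$ comes from uniqueness of the closest pair at initialization.

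With this gap in hand, I would expand the rescaled vector field term by term on $[0,T_\epsilon]$. Since $1 - d_t \geq \epsilon/2$ on this interval (for $\beta$ large, by a bootstrap using the convergence of $d_t^\beta$ to the limit), one has $Z_\beta(x_i) = e^\beta(1 + O(Ne^{-\beta \epsilon/2}))$ for every $i$, because the diagonal contribution $\langle x_i, x_i\rangle = 1$ dominates. For $k \notin \{\underline{i}, \underline{j}\}$ the diagonal term vanishes via $P_{x_k}(x_k) = 0$, and every remaining interaction is bounded by $e^{\beta \langle x_k, x_j\rangle} \leq e^{\beta m_t}$; after multiplying by the rescaling factor $e^{\beta(1 - d_t)}$ and dividing by $Z_\beta(x_k) \sim e^\beta$ one gets $|\dot x_k^\beta| \lesssim N e^{-\beta(d_t - m_t)} \leq N e^{-\beta(1 - \cos(\alpha/4))} \to 0$ uniformly in $t$. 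For $k = \underline{i}$ only the contribution $j = \underline{j}$ survives to leading order: the factors $e^{\beta(1-d_t)} e^{\beta d_t}$ and $Z_\beta(x_{\underline{i}})^{-1} = e^{-\beta}(1+o(1))$ combine to produce exactly $P_{x_{\underline{i}}}(x_{\underline{j}})(1+o(1))$, and all other terms are again $O(e^{-\beta(d_t - m_t)})$; the case $k = \underline{j}$ is symmetric.

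Finally, Lipschitz continuity of the map $(x,y) \mapsto P_x(y)$ on $\mathbb{S}^{d-1} \times \mathbb{S}^{d-1}$, together with the expansion above, yields a differential inequality $\dot E(t) \leq C\, E(t) + \delta_\beta$ on $[0, T_\epsilon]$ with $\delta_\beta \to 0$ as $\beta \to \infty$. Since $E(0) = 0$, Gronwall gives uniform convergence on $[0, T_\epsilon]$. The main technical obstacle is coordinating two bootstrap arguments: the gap estimate $d_t - m_t \geq 1 - \cos(\alpha/4)$ from Lemma \ref{lem:md} is stated for prelimit intervals $[0,T]$, whereas $T_\epsilon$ is defined through the limit trajectory $y$; one must either extend Lemma \ref{lem:md} to $[0,T_\epsilon]$ by continuity in $\beta$, or run the two continuations simultaneously so that the closing conditions $1 - d_t^\beta \geq \epsilon/2$ and $d_t^\beta - m_t^\beta \geq 1 - \cos(\alpha/4)$ are preserved together. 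This coordination is delicate precisely because $T_\epsilon$ can be chosen arbitrarily close to the collapse time of the closest pair, beyond which the rescaling $e^{\beta(1-d_t)}$ is itself singular in $\beta$ and the present argument cannot be pushed further.
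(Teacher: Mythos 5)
Your proposal tracks the paper's proof almost step for step: invoke Lemma~\ref{lem:md} for the gap $d_t - m_t \geq 1 - \cos(\alpha/4)$, expand the rescaled vector field to isolate the $(\underline{i},\underline{j})$ interaction as the leading term with all others $O(e^{-\beta(1-\cos(\alpha/4))})$, and close with Gronwall on $\sum_i |x_i^\beta - y_i|$. Your bounds on $Z_\beta(x_i)$ and the term-by-term analysis for $k=\underline{i}$, $k=\underline{j}$, and $k\notin\{\underline{i},\underline{j}\}$ match the paper's estimates.

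The obstacle you flag at the end --- that $T_\epsilon$ is defined via the \emph{limit} trajectory $\delta_t = \langle y_{\underline{i}}, y_{\underline{j}}\rangle$ while the bound $1-d_t \geq \epsilon/2$ you need concerns the \emph{prelimit} quantity --- is precisely the one nontrivial step you leave unresolved, and it is what the paper's intermediate lemma supplies. You cannot appeal to ``convergence of $d_t^\beta$ to the limit'' here, since that convergence is exactly what Gronwall is supposed to deliver at the end; that would be circular. The paper avoids the circularity by proving directly, via a continuation argument and Gronwall applied to the scalar quantity $\delta_t - d_t$, that $\delta_t \leq 1-c$ on $[0,T]$ forces $d_t \leq 1-c/2$ on $[0,T]$ for $\beta$ large; this uses Lemma~\ref{lem:md} to control the subleading forcing and the algebraic identity $|P_x y|^2 = 1 - \langle x,y\rangle^2$ to write $\partial_t(\delta_t - d_t)$ in terms of $|\delta_t - d_t|$ plus exponentially small errors. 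That one lemma is what makes your ``run the two continuations simultaneously'' suggestion rigorous, and it is the piece your sketch would need to become a complete proof.
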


First we need the following lemma:
\begin{lemma}
 If $\beta$ is large enough then $\delta_t:=\langle y_{\underline{i}},y_{\underline{j}}\rangle \leq 1-c$ on $[0,T]$ implies $d_t=\langle x_{\underline{i}},x_{\underline{j}}\rangle\leq  1-c/2$ on $[0,T]$.
\end{lemma}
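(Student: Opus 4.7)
The plan is to run a continuation argument on $\max_k|x_k(t)-y_k(t)|$, showing that under the hypothesis $\delta_t\le 1-c$, the finite-$\beta$ trajectories track the hardmax trajectories up to an exponentially small (in $\beta$) error. Since $y_k(0)=x_k(0)$, in particular $d_0=\delta_0\le 1-c$, so by continuity $d_t<1-c/2$ on some interval starting at $0$. Define $t^\star\in(0,T]$ to be the first time at which $d_{t^\star}=1-c/2$, if it exists; we derive a contradiction for $\beta$ sufficiently large.

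On $[0,t^\star]$ one has $1-d_t\ge c/2$. Since the closest pair at initialization is assumed unique, the initial gap $\alpha=\arccos(m_0)-\arccos(d_0)>0$ is strictly positive, and Lemma~\ref{lem:md} yields, for $\beta$ large, $d_t-m_t\ge 1-\cos(\alpha/4)=:\eta>0$ on $[0,T]$. Set $\eta^\star:=\min(\eta,c/2)$. Expanding the rescaled right-hand side, for $k\in\{\underline{i},\underline{j}\}$ with $k'$ the partner in the pair,
\[ \dot x_k = \frac{e^\beta}{Z_\beta(x_k)}P_{x_k}(x_{k'}) + \frac{e^\beta}{Z_\beta(x_k)}\sum_{j\notin\{\underline{i},\underline{j}\}} e^{\beta(\langle x_k,x_j\rangle-d_t)}P_{x_k}(x_j). \]
Since $Z_\beta(x_k)=e^\beta+e^{\beta d_t}+\sum_{j\notin\{\underline{i},\underline{j}\}}e^{\beta\langle x_k,x_j\rangle}$, the last two contributions to $Z_\beta(x_k)$ are bounded by $e^{\beta(1-c/2)}$ and $(N-2)e^{\beta(1-\eta-c/2)}$ respectively, so $\tfrac{e^\beta}{Z_\beta(x_k)}=1+O(e^{-\beta\eta^\star})$; the residual sum is itself $O(e^{-\beta\eta})$ for fixed $N$, hence $\dot x_k=P_{x_k}(x_{k'})+O(e^{-\beta\eta^\star})$. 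For $k\notin\{\underline{i},\underline{j}\}$, every exponent in the numerator is at most $m_t$ while $Z_\beta(x_k)\ge e^\beta$, so $\dot x_k=O(e^{-\beta\eta})$.

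The hardmax vector field is smooth and globally Lipschitz on the compact product of spheres. Writing $x_k=y_k+(x_k-y_k)$ and comparing to the hardmax ODEs, a standard Grönwall-type estimate gives
\[ \sup_{t\in[0,t^\star]}\max_k|x_k(t)-y_k(t)|\le CTe^{LT}e^{-\beta\eta^\star}, \]
for constants $C,L$ depending only on $N$, $d$ and the initial data. Choosing $\beta$ large enough makes this bound smaller than $c/4$, which forces $|d_{t^\star}-\delta_{t^\star}|<c/2$, and therefore $d_{t^\star}\le\delta_{t^\star}+c/2\le 1-c/2$, contradicting $d_{t^\star}=1-c/2$. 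Hence $t^\star$ does not exist and $d_t\le 1-c/2$ throughout $[0,T]$.

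The main technical obstacle is the control of the prefactor $e^\beta/Z_\beta(x_k)$ for $k\in\{\underline{i},\underline{j}\}$: if $d_t$ were allowed to approach $1$, the term $e^{\beta d_t}$ appearing in $Z_\beta(x_k)$ would become comparable to $e^\beta$ and this prefactor would drop towards $1/2$, breaking the comparison with $\dot y_k=P_{y_k}(y_{k'})$. It is precisely the continuation threshold $1-d_t\ge c/2$ that ensures the prefactor is asymptotically $1$ and closes the Grönwall estimate; Lemma~\ref{lem:md} is then what guarantees the complementary exponential suppression of the off-diagonal softmax weights.
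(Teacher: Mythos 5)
Your proof is correct, but it takes a genuinely different route from the paper's. The paper compares the two \emph{scalar} quantities $\delta_t=\langle y_{\underline{i}},y_{\underline{j}}\rangle$ and $d_t=\langle x_{\underline{i}},x_{\underline{j}}\rangle$ directly: it computes $\partial_t(\delta_t-d_t)$ using $\partial_t\langle y_{\underline{i}},y_{\underline{j}}\rangle = 2|P_{y_{\underline{i}}}y_{\underline{j}}|^2$ and the analogous expansion for the softmax system, uses $|P_x y|^2=1-\langle x,y\rangle^2$ to convert the difference of squared projections into a difference of $d_t^2$ and $\delta_t^2$, controls the softmax prefactor and the off-pair tails exactly as you do, and closes with Gr\"onwall on the scalar quantity $\delta_t-d_t$. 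You instead prove the stronger statement that the full configurations $\{x_k\}$ and $\{y_k\}$ stay $O(e^{-\beta\eta^\star})$-close in position, via a Lipschitz estimate on the hardmax field and Gr\"onwall on $\max_k|x_k-y_k|$, and then deduce $|d_t-\delta_t|\le 2\max_k|x_k-y_k|$. Your approach is heavier (it establishes position-level tracking, which the paper postpones to the proof of Proposition~\ref{prop:pairwise_clustering}) but more robust and arguably cleaner logically: the scalar comparison in the paper requires the specific identity $\langle P_x y, y\rangle = |P_x y|^2$ and produces the somewhat awkward cancellation $2(1-\delta_t^2)-2(1-d_t^2)$, while yours just needs the compactness of $(\mathbb{S}^{d-1})^N$. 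Both correctly isolate the two ingredients --- the prefactor $e^\beta/Z_\beta(x_k)\to 1$ when $1-d_t\ge c/2$, and the exponential suppression of off-pair terms from Lemma~\ref{lem:md} --- and both use the same continuation threshold $d_t=1-c/2$. One small presentational remark: the chain ``$d_{t^\star}\le\delta_{t^\star}+c/2\le 1-c/2$'' must carry a strict inequality somewhere to actually contradict $d_{t^\star}=1-c/2$; your earlier phrase ``smaller than $c/4$'' supplies it, so just make sure the first inequality in the chain is written as strict.
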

\begin{proof}
We proceed again using a continuation argument. The derivatives of the differences are bounded by:
\begin{align*}
\partial_t (\delta(t) - d(t)) = &\partial_t \langle y_{\underline{i}},y_{\underline{j}}\rangle - \partial_t \langle x_{\underline{i}},x_{\underline{j}}\rangle \\
&=2|P_{y_{\underline{i}}}y_{\underline{j}}|^2 - \Big(\frac{e^{\beta(1-d_t)}}{Z_\beta(x_{\underline{i}})}\sum_{k=1}^N e^{\beta\langle x_{\underline{i}}, x_k\rangle}  \langle P_{x_{\underline{i}}}(x_k), x_{\underline{j}}\rangle \\
& \qquad \qquad \qquad + \frac{e^{\beta(1-d_t)}}{Z_\beta(x_{\underline{j}})}\sum_{k=1}^N e^{\beta\langle x_{\underline{j}}, x_k\rangle}  \langle P_{x_{\underline{j}}}(x_k), x_{\underline{i}}\rangle\Big) \\
&\leq 2|P_{y_{\underline{i}}}y_{\underline{j}}|^2 - 2\frac{e^\beta}{e^\beta + e^{\beta d_t}}|{P_{x_{\underline{i}}}}(x_{\underline{j}})|^2 + Ce^{-\beta(1-\cos(\alpha/4))} \\
&\leq 2|P_{y_{\underline{i}}}y_{\underline{j}}|^2 - 2\frac{e^\beta}{e^\beta + e^{\beta d_t}}|{P_{x_{\underline{i}}}}(x_{\underline{j}})|^2 + Ce^{-\beta(1-\cos(\alpha/4))} \\
&= 2|\delta_t(t)-d_t(t)|^2 + 2|\frac{e^\beta}{e^\beta + e^{\beta d_t}}-1|+ Ce^{-\beta(1-\cos(\alpha/4))} \\
&\leq 2C|\delta_t(t)-d_t(t)| + 2e^{-\beta c/2}+ Ce^{-\beta(1-\cos(\alpha/4))}.\\
\end{align*}

where we used $|P_x(y)|^2 = 1 - \langle x,y\rangle^2$ and the previous lemma. The conclusion is again an application of Gronwall's lemma.
\end{proof}

\begin{proof}[Proof of Proposition \ref{prop:pairwise_clustering}.]
Thanks to the previous lemma for $\beta$ large enough, on $[0,T_\epsilon]$ we have $d_t<1-c_\epsilon$.
Now we can proceed with the proof:

Consider the case $k=\underline{i}$.
\begin{align*}
|x_{\underline{i}}(t) - y_{\underline{i}}(t)| &= \int_0^t |\dot{x}_{\underline{i}}(s) - \dot{y}_{\underline{i}}(s)|ds = \int_0^t \left(\frac{e^{\beta(1-d_s)}}{Z_\beta(x_{\underline{i}})}\sum_{k=1}^N e^{\beta\langle x_{\underline{i}}, x_k\rangle} P_{x_{\underline{i}}}(x_k) - P_{y_{\underline{i}}}(y_{\underline{j}})\right)ds\\
&\leq \int_0^t |\frac{e^{\beta}}{Z_\beta(x_{\underline{i}})}P_{x_{\underline{i}}}(x_{\underline{j}}) -P_{y_{\underline{i}}}(y_{\underline{j}})|ds + Ce^{-\beta(1-\cos(\alpha/4))}\\
&\leq L\int_0^t |x_{\underline{i}}-y_{\underline{i}}|+ |x_{\underline{j}}-y_{\underline{j}}|ds + Ce^{-\beta(1-\cos(\alpha/4))}T.\\
\end{align*}

where we used the previous lemma and the fact that $\frac{e^\beta}{Z_\beta(x_{\underline{i}})}\approx 1-e^{-\beta(1-d_s)}\approx 1$ thanks to the propertyon $T_\epsilon$. The case $k\neq\underline{i}, \underline{j}$ is similar. Hence:
\begin{align*}
\sum_{k=1}^N |x_{k}(t) - y_{k}(t)| &\leq L\int_0^T  \sum_{k=1}^N |x_{k}(t) - y_{k}(t)| + Ce^{-\beta(1-\cos(\alpha/4))}T.\\
\end{align*}
The conclusion is then just an application of Gronwall's lemma.
\end{proof}
\section{Useful lemmas}

\begin{lemma}
\label{lem:integral_estimates}
Let $k>0$, $\beta\to\infty$. Then:
\begin{align*}
\int_{\mathbb{S}^{d-1}} (1-\langle x,y\rangle)^{\frac{k}{2}} e^{\beta \langle x, y \rangle} dy \sim C_{d,k} \beta^{-\frac{d-1+k}{2}}e^{\beta}.
\end{align*}
\end{lemma}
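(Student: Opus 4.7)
The plan is to reduce the spherical integral to a one-dimensional Laplace-type integral and then apply the standard Laplace method centered at the unique maximum point $y = x$.

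First, by rotational invariance of the surface measure, I may assume $x = e_1$. Parametrize $y \in \mathbb{S}^{d-1}$ by its polar angle $\theta \in [0,\pi]$ relative to $x$, so that $\langle x, y\rangle = \cos\theta$. The surface measure then disintegrates as $\omega_{d-2}\sin^{d-2}\theta\, d\theta$, where $\omega_{d-2}$ is the volume of $\mathbb{S}^{d-2}$, and the integral becomes
\[
\omega_{d-2}\int_0^\pi (1-\cos\theta)^{k/2} e^{\beta\cos\theta}\sin^{d-2}\theta\, d\theta.
\]

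Next, I would substitute $u = 1-\cos\theta$, so that $du = \sin\theta\, d\theta$ and $\sin^2\theta = u(2-u)$. Pulling out the $e^{\beta}$ from $e^{\beta\cos\theta} = e^{\beta}e^{-\beta u}$ and writing $\sin^{d-2}\theta\, d\theta = (u(2-u))^{(d-3)/2}\, du$ gives
\[
\omega_{d-2}\, e^\beta \int_0^2 u^{k/2}\bigl(u(2-u)\bigr)^{(d-3)/2} e^{-\beta u}\, du.
\]
Note that the integrand is integrable at $u = 0$ precisely because $k > 0$ (this handles the potentially singular case $d = 2$, where $(d-3)/2 = -1/2$).

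The asymptotics now follow by a standard Laplace rescaling $v = \beta u$: the integrand concentrates at $v = O(1)$, and
\[
\int_0^2 u^{(d-3+k)/2}(2-u)^{(d-3)/2} e^{-\beta u}\, du = \beta^{-(d-1+k)/2}\int_0^{2\beta} v^{(d-3+k)/2}\bigl(2 - v/\beta\bigr)^{(d-3)/2} e^{-v}\, dv.
\]
The factor $(2-v/\beta)^{(d-3)/2}$ converges pointwise to $2^{(d-3)/2}$, so dominated convergence (using an integrable majorant like $v^{(d-3+k)/2}(2+|d-3|/2)\, e^{-v}$ uniformly in $\beta \geq 1$) yields the limit $2^{(d-3)/2}\Gamma\bigl((d-1+k)/2\bigr)$. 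This identifies the constant $C_{d,k} = \omega_{d-2}\, 2^{(d-3)/2}\, \Gamma\bigl((d-1+k)/2\bigr)$ and completes the asymptotic.

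The only mild technical point is controlling the contribution near $u = 2$ (i.e., $\theta$ near $\pi$), which I would handle by splitting the integral at $u = 1$: the tail $\int_1^2 \cdots e^{-\beta u}\, du$ is clearly $O(e^{-\beta})$, hence exponentially negligible compared with the leading polynomial correction $\beta^{-(d-1+k)/2}$ from the $u = 0$ endpoint. The remaining analysis is then exactly the rescaling argument above, and no further obstacles arise.
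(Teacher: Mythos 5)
Your proof is correct, but it follows a genuinely different (and more elementary) route than the paper's. Both arguments begin the same way: slice the spherical integral by $t=\langle x,y\rangle$ to reduce to a one-dimensional weighted integral over $t\in[-1,1]$ (you parametrize by the polar angle, which is the same thing). From there the paths diverge. The paper substitutes $t=2u-1$ to land on the integral representation of Kummer's confluent hypergeometric function $M\bigl(\frac{d-1}{2},\,\frac{k}{2}+d-1,\,2\beta\bigr)$, and then simply quotes the known large-argument asymptotics $M(a,b,z)\sim\frac{\Gamma(b)}{\Gamma(a)}e^{z}z^{a-b}$ to read off the power $\beta^{-(d-1+k)/2}e^{\beta}$. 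You instead substitute $u=1-\cos\theta$, pull out $e^{\beta}$, rescale $v=\beta u$, and push the limit through by dominated convergence after chopping off the exponentially small tail $u\in[1,2]$. Your version is self-contained and exhibits the Laplace concentration mechanism directly, and it yields the explicit constant $C_{d,k}=\omega_{d-2}\,2^{(d-3)/2}\,\Gamma\bigl(\frac{d-1+k}{2}\bigr)$ as a by-product; the paper's is shorter at the cost of invoking a special-function identity. One small presentation nit: the majorant you write down, $v^{(d-3+k)/2}(2+|d-3|/2)e^{-v}$, is not the bound you actually use (for $d=2$ the factor $(2-v/\beta)^{-1/2}$ is controlled because you've already restricted to $v\le\beta$, where $2-v/\beta\ge1$); it would read more cleanly to say that on $[0,\beta]$ the factor $(2-v/\beta)^{(d-3)/2}$ is bounded by $\max\{1,2^{(d-3)/2}\}$, giving the majorant $\max\{1,2^{(d-3)/2}\}\,v^{(d-3+k)/2}e^{-v}$, which is integrable since $(d-3+k)/2>-1$.
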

\begin{proof}
In the following $C_{d,k}$ is a constant that depends just on the dimension and on $k$ and could change at each line:
\begin{align*}
\int_{\mathbb{S}^{d-1}} (1-\langle x,y\rangle)^{\frac{k}{2}} e^{\beta \langle x, y \rangle} dy &= C_{d,k} \int_{-1}^{1} (1-t)^\frac{k}{2}e^{\beta t} (1-t^2)^\frac{d-3}{2}dt\\
&=C_{d,k}\int_0^1 (2-2u)^\frac{k}{2}e^{\beta(2u-1)}(4u(1-u))^\frac{d-3}{2}du\\
&=C_{d,k} e^{-\beta}\int_0^1 (1-u)^{\frac{k+d-3}{2}}u^\frac{d-3}{2}e^{2\beta u} du\\
&=C_{d,k} e^{-c} M\left(\frac{d-1}{2}, \frac{k}{2}+d-1, 2\beta\right),
\end{align*}
where $M$ is the Kummer's confluent hypergeometric function
and its asymptotic behavior for $\beta\to\infty$ (see \cite{abramowitz1948handbook}) is given by: 
$$
I\sim C_{d,k} \beta^{-\frac{d-1+k}{2}}e^{\beta}.
$$
\end{proof}

\begin{lemma}
\label{lem:tensor_symmetry}
Suppose that $T$ is a tensor such that:
\begin{itemize}
    \item T is invariant under permutations of the indices.
    \item T is invariant under rotations that fix a unit vector $x$.
\end{itemize}
Then if $T$ is a 2-tensor, then it must be of the form:
$$
T = \alpha ( x\otimes x)  +\beta Id.
$$
If $T$ is a 3-tensor, then it must be of the form:
$$
T = \alpha(x\otimes x\otimes x) + \beta Sym(x\otimes I)
$$
\end{lemma}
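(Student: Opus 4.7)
The plan is to fix an orthonormal basis with $e_1 = x$, so that the isotropy group $G$ of $x$ acts naturally on $e_1^\perp \cong \mathbb{R}^{d-1}$ as the full orthogonal group there (interpreting ``rotations fixing $x$'' to include the reflections of $e_1^\perp$, which is the relevant case for the Von Mises--Fisher moments used in the body of the paper). We then decompose the tensor block by block according to the number of indices equal to $1$, and treat each block with classical invariant theory of the orthogonal group.

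For the 2-tensor case, set $a := T_{11}$, $v_j := T_{1j}$ for $j > 1$ (so that $T_{j1} = v_j$ by permutation symmetry) and $S_{jk} := T_{jk}$ for $j, k > 1$. The vector $v \in e_1^\perp$ is $G$-invariant, hence zero, since the standard representation of $G$ on $e_1^\perp$ has no nonzero fixed vectors. The block $S$ is a symmetric $2$-tensor commuting with every rotation of $\mathbb{R}^{d-1}$, which forces $S = \beta\, I_{d-1}$ by the usual Schur-type argument. Assembling, $T = a\, x \otimes x + \beta\,(I - x \otimes x) = \alpha\, x \otimes x + \beta\, Id$ with $\alpha := a - \beta$.

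For the 3-tensor case, the same block decomposition yields a scalar $T_{111}$, a $G$-invariant vector in $e_1^\perp$ coming from the two-1 blocks (which vanishes as above), a symmetric $G$-invariant $2$-tensor on $e_1^\perp$ from the one-1 blocks (which equals $\beta\, I_{d-1}$ by the 2-tensor case), and a symmetric $G$-invariant $3$-tensor $U$ on $e_1^\perp$ from the no-1 block. The main obstacle is showing $U = 0$. The cleanest route uses that $-I \in G$ acts on any $3$-tensor by $(-1)^3 = -1$, forcing $U = 0$; alternatively, Weyl's first fundamental theorem for $O(d-1)$ expresses every invariant tensor as a combination of products of Kronecker $\delta$'s, and no such combination can absorb three free indices, so no nonzero symmetric invariant $3$-tensor exists.

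Collecting the surviving pieces gives $T_{ijk} = \alpha\, x_i x_j x_k + \beta\, (x_i \delta_{jk} + x_j \delta_{ik} + x_k \delta_{ij})$ for suitable scalars $\alpha, \beta$. The constant $\beta$ is read off directly from the one-1 block, while $\alpha$ is fixed by matching $T_{111}$ against the value $3\beta$ that $\mathrm{Sym}(x \otimes Id)_{111}$ contributes, giving $\alpha = T_{111} - 3\beta$ and completing the identification.
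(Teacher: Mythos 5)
Your argument is correct but follows a genuinely different route from the paper's. The paper takes a bare-hands approach: it constructs explicit elements of $SO(d-1)$ (pairing two sign flips $e_i \mapsto -e_i$, $e_l \mapsto -e_l$ so that the determinant stays $+1$, plus a $2\times 2$ swap-with-sign for the diagonal entries) and kills or equates tensor entries one at a time; this is why the paper assumes $d\geq 5$, to guarantee a spare index $l$ exists. You instead decompose $T$ by the number of indices equal to $1$, identify each block as an invariant of the stabilizer $G$ acting on $e_1^\perp$, and appeal to classical facts: no nonzero $G$-fixed vector, Schur's lemma for the symmetric 2-tensor block, and either the $-I$ parity trick or Weyl's first fundamental theorem for the 3-tensor block. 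The only caveat is that your primary route (``$-I$ acts by $(-1)^3$'') needs $-I$ in the invariance group, which in $SO(d-1)$ holds only for $d$ odd; you explicitly address this both by enlarging $G$ to the full $O(d-1)$ stabilizer (justified in the application, since the Von Mises--Fisher reference measure really is invariant under that larger group) and by offering the Weyl FFT fallback, which covers $SO$ as well once $d-1\geq 4$. Your approach is more structural and makes the source of the answer transparent (the relevant space of invariants is one- or two-dimensional), while the paper's is more elementary and self-contained; both carry the same $d\geq 5$-type caveat in their purely-$SO$ form.
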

\begin{proof}
For simplicity, suppose $d\geq 5$. 
Without loss of generality we can assume $x=e_1$. Let's start with the case of the $2$-tensor. For every $i\neq j$, we can consider another index $l\notin\{ 1,i,j\}$ and the rotation $R$ such that $Re_i=-e_i$ (wlog $i\neq 1$, otherwise use $j$), $Re_l=-e_l$ and elsewhere is the identity. Then:
   $$
   T[e_i, e_j]=T[Re_i,Re_j]=-T[e_i,e_j],
   $$
   that implies $T[e_i, e_j]=0$. If $i=j>1$, then there exists a rotation $R$ such that $Re_i=e_2$, $Re_2=-e_i$ and the identity elsewhere:
   $$
   T[e_i,e_i]=T[Re_i,Re_i]=T[e_2,e_2]
   $$
   This conclude the proof for the 2-tensor. 

   For the 3-tensor: consider $i,j,k$ such that $i>1$ and ($i\notin\{j,k \}$ or $i=j=k$) . Consider another index $l\notin\{ 1,i,j,k\}$ and the rotation $R$ such that $Re_i=-e_i$, $Re_l=-e_l$ and identity elsewhere. Then:
   $$
    T[e_i, e_j, e_k]=T[Re_i,Re_j, Re_k]=-T[e_i,e_j,e_k],
   $$
   that implies $T[e_i, e_j, e_k]=0$.
   The only cases left are given by $i=j$ and $k=1$ and their permutations. If $i=j>1$ and $k=1$, then construct the rotation R such that $Re_i=e_2$ and $Re_2=-e_i$ to conclude, as above, that the tensor must be of the form $\alpha (x_i{\delta_{jk}}+x_j{\delta_{ik}}+x_k{\delta_{ij}})+\beta x_ix_j x_k$.
\end{proof}

\section{Supplementary figures}
This experiment uses the same settings as in Figure \ref{fig:second_phase_backward} in the backward regime, but with an initial distribution given by a mixture of four wrapped Gaussians on the xy-plane. The red curve shows the interaction energy of the system over time on a logarithmic timescale. We highlight the three distinct timescales and the corresponding behaviors discussed in the paper.

\begin{figure}[hbt!]
    \centering
    \label{fig:full_story}
    \includegraphics[width=0.8\linewidth]{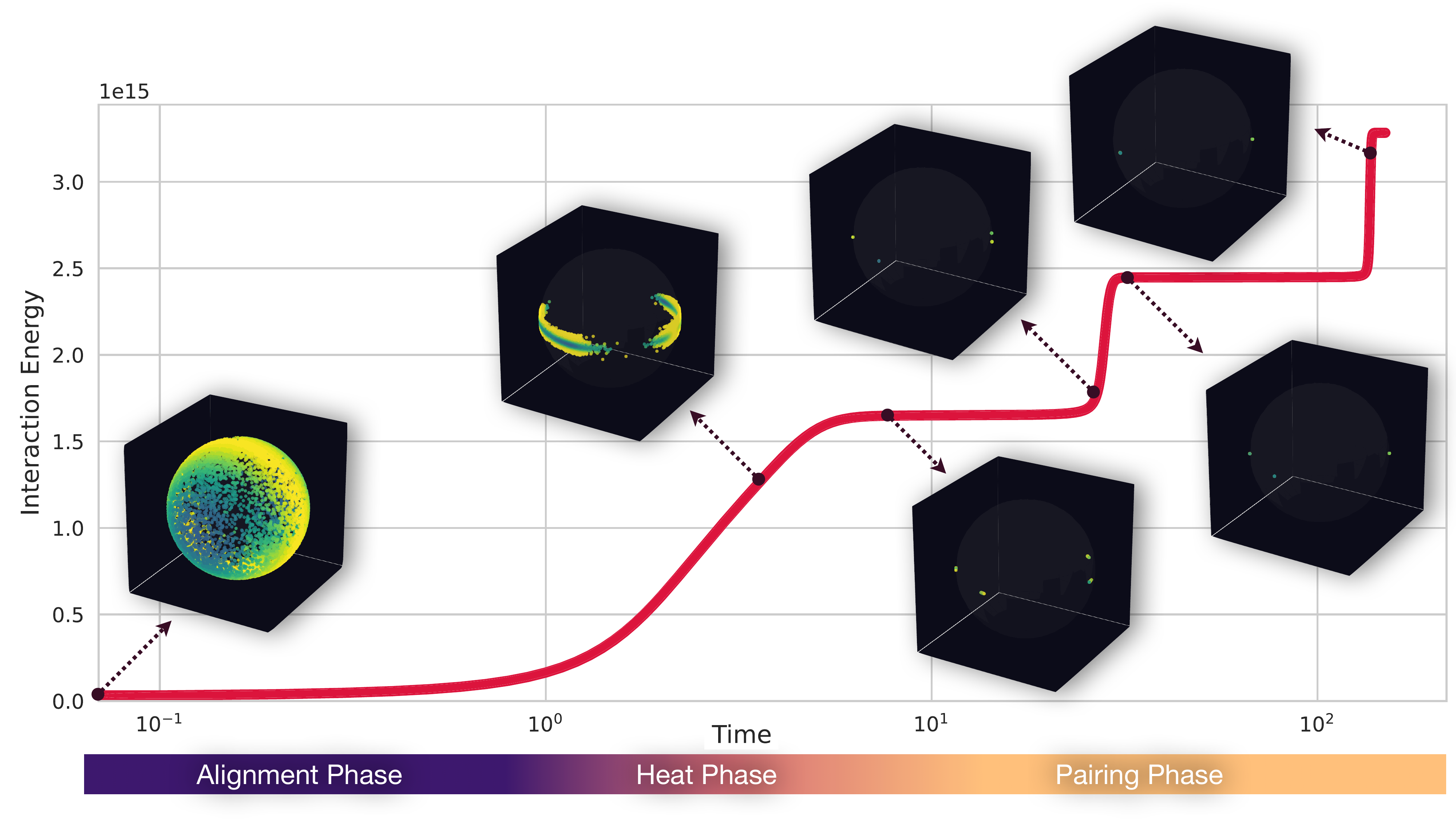}
    \caption{Evolution of the dynamics with $Q^tK=V=S$ definite positive.}
    \label{fig:enter-label}
\end{figure}
\end{document}